\documentclass{article}

\usepackage{microtype}
\usepackage{graphicx}
\usepackage{subcaption}
\usepackage{booktabs} 

\usepackage{hyperref}

\usepackage[accepted]{icml2026}

\usepackage{amsmath}
\usepackage{amssymb}
\usepackage{mathtools}
\usepackage{amsthm}
\usepackage{tcolorbox}

\usepackage[capitalize,noabbrev]{cleveref}

\theoremstyle{plain}
\newtheorem{theorem}{Theorem}[section]
\newtheorem{prop}[theorem]{Proposition}
\newtheorem{lemma}[theorem]{Lemma}

\theoremstyle{definition}

\newtheorem{assumption}[theorem]{Assumption}
\theoremstyle{remark}

\newcommand\E{\mathbb{E}}
\newcommand\dhuman{d_H}
\newcommand\djoint{d_J}

\newcommand{\PP}{\mathbb{P}}

\newcommand\V{\text{Var}}
\newcommand\Cov{\text{Cov}}

\usepackage[textsize=tiny]{todonotes}

\icmltitlerunning{Robust Human-AI Complementarity under Uncertainty}

\begin{document}

\twocolumn[
  \icmltitle{Robust Human-AI Complementarity under Uncertainty}

  \icmlsetsymbol{equal}{*}

  \begin{icmlauthorlist}
    \icmlauthor{Yewon Byun}{cmu}
    \icmlauthor{Bryan Wilder}{cmu}
  \end{icmlauthorlist}

  \icmlaffiliation{cmu}{Machine Learning Department, Carnegie Mellon University}

  \icmlcorrespondingauthor{Yewon Byun}{yewonb@cs.cmu.edu}

  \icmlkeywords{Machine Learning, ICML}

  \vskip 0.3in
]

\printAffiliationsAndNotice{}  

\begin{abstract}
  Machine learning models are often intended to augment rather than replace human decision makers, by providing information that is complementary to human judgement.
  Yet, in practice, human decision makers routinely fail to realize such complementary gains, even when models provide useful signal.
  In this work, we study how asymmetric information about the quality of information available to a human decision maker vs. an AI impacts the ability of a decision maker to extract complementary value from AI predictions. 
  We show that a key factor is the error correlation structure between human and AI predictions. In particular, when the AI's prediction errors are \textit{negatively correlated} with those of the human, the decision maker can construct robust strategies which guarantee improvements in expected utility. We empirically investigate whether these conditions for complementarity arise in practice, using real-world forecasting benchmarks.
\end{abstract}

	\section{Introduction}

	Many hope that machine learning models will augment human decision making without replacing it by providing decision makers with complementary information. For instance, in scientific settings, LLMs are increasingly imagined as tools for forecasting promising experiments or screening hypotheses.
    However, attempting to rely appropriately on machine learning outputs can be difficult; human decision makers routinely fail to benefit from complementary signal in AI predictions \cite{vaccaro2024combinations} even when such signal is available \cite{guo2024decision}. Previous work has hypothesized that people may struggle to update their beliefs appropriately in light of AI predictions \cite{agarwal2023combining} or identify what new information the AI model provides \cite{guo2025value}. Here, we document another kind of limitation that makes it difficult for even a rational decision maker to benefit from complementary information: uncertainty about the quality of AI predictions. Such uncertainty is a recurring feature of many settings: models constantly change, new use cases emerge, and some use cases (e.g., scientific discovery) by nature demand using models in settings that are outside of the distribution of instances previously seen \cite{DBLP:journals/corr/abs-2108-07258}. It is often difficult for people to credibly know the joint distribution of model inputs and ground truth for the specific task they are solving \cite{vafa2024large}. 
	
	We study how asymmetric information about the quality of information available to a human decision maker vs an AI impacts the ability of a decision maker to extract complementary value from the AI. We start by introducing a formal model based on statistical decision theory in which the human does not know the exact joint distribution over AI predictions and ground truth. Instead, they have an \textit{uncertainty set} which formalizes their beliefs about the model's quality and seek to ensure robust performance over that set. For example, a decision maker using AI likely believes that the AI has at least some level of correlation with the ground truth, even if the full joint distribution is unknown. We ask: what conditions must the decision maker be willing to impose about the performance of the AI in order to ensure that they benefit from using it? 
	
	It turns out that the presence of uncertainty substantially changes the answer. If the decision maker knows the true joint distribution, they benefit from using the AI if its prediction carries \textit{any} signal in excess of what is already observed by the decision maker (essentially the condition tested in previous work). However, under uncertainty, it becomes much harder for the decision maker to construct a rule which makes nontrivial use of AI input and guarantees that the decision maker benefits compared to ignoring the AI entirely. A key factor turns out to be whether the errors in the AI's predictions are \textit{negatively correlated} with the errors in the human's predictions. If this condition is satisfied, the decision maker can construct robust strategies which guarantee an improvement in expected utility. If it is not satisfied, the window for robust complementarity is much narrower: decisions to use the AI essentially reflect a weakness of the human's ability. In stylized terms, we interpret this as a movement towards automation instead of complementarity. 
	
	We then use empirical data from real-world forecasting and social science benchmarks that compare predictions from humans to those from LLMs. We compare human and LLM prediction errors and find that they are positively correlated, putting us in the ``difficult" setting for complementarity. Alternative prompting strategies (e.g., instructing the LLM to focus on factors others may have missed) meaningfully reduce but fail to consistently eliminate positive correlations in errors. We suggest that explicitly optimizing for ability to provide complementary information should be a principle for LLM training and evaluation in the future, in order to ensure that they can be robustly used by humans to improve their own decision making. 

    \section{Related Work}

    \paragraph{Human-AI Decision Making.} A large literature examines joint decision making by humans and AI, often with a focus on when joint decision making can improve over either party individually (``complementarity"). Perhaps most related to our work,  \citet{steyvers2022bayesian} propose a generative model for combining human and AI confidences and show that the model theoretically implies conditions under which a human-AI team outperforms a team of multiple humans or multiple models. Our work differs in that we compare the human-AI combination to the single human alone, but focus on the role of uncertainty (where \cite{steyvers2022bayesian} assume in their theoretical analysis that the model parameters are known). Also related is \citep{donahue2022human}, who analyze a theoretical model of human decisions about when to rely on AI to identify conditions for complementarity, though without focusing on the role of uncertainty. 

    More broadly, there is a large algorithmic literature on structures for human-AI decision making, for example learning to defer \cite{madras2018predict,mozannar2020consistent}, triage decisions \cite{raghu2019algorithmic,okati2021differentiable},  learning models that complement human decision makers \cite{wilder2020learning}, and using AI to shape human choice sets \cite{straitouri2023improving}. Nevertheless, a significant empirical literature typically fails to find evidence of complementarity between humans and AI (see \citet{vaccaro2024combinations} for a review). Our work theoretically explores one mechanism that could make complementarity more difficult to realize in real-world settings and explores its impact in real world data on human-LLM predictions. 

    \paragraph{LLMs for Prediction.} A key motivation for our work is the growing interest in using LLMs to provide predictions that could augment human decision making. Since LLMs can be applied to new tasks in a zero-shot manner, humans are often faced with uncertainty about how an LLM will perform for a specific prediction task at hand. Existing work has begun to evaluate the performance of LLMs at forecasting future events \cite{kargerforecastbench,yang2025llm,ye2024mirai,halawi2024approaching} and shown that they compare favorably with many humans. In scientific applications, several works have evaluated using LLMs to forecast the results of scientific experiments, for example in the social sciences \cite{hewitt2024predicting,park2024generative} or neuroscience \cite{luo2025large}. They similarly find that LLMs perform competitively with human forecasters. As a result, there is interest in using outputs from LLMs to ``screen" potential experiments, using LLM predictions of effects as a sort of pilot study to decide which experiments to pursue \cite{anthisposition}. This is the motivation for our ``binary decisions" setting below, where we analyze conditions for LLMs to improve human decision making at such a task. We also use datasets from both forecasting and scientific settings to study the correlation between LLM and human errors and impact on complementary performance; existing work has focused mostly on comparing LLM and human accuracy and less on the relative distribution of errors between the two (though \citet{hewitt2024predicting} confirm that both human and AI information contain independent information predictive of true effects).

	\section{Model for Human-AI Complementarity}
    \label{sec:model}
	We assume that there is a ground-truth state of the world $\theta$ which the decision maker aims to infer. The decision maker observes a signal $\phi_H$ corresponding to their prior belief as well as a signal $\phi_{AI}$ generated by the machine learning system (e.g., an LLM). They then make a decision $d \in \mathcal{D}$. Their goal is to minimize a loss $\ell(d, \theta)$ which depends on the unknown state and on the decision.  
	
	We study a generative process in which the signals are noisy measurements of the ground truth state. We start with a stylized, linear-Gaussian version of the model which imposes normally distributed errors in the signals observed by each of the agents. This setting turns out to be sufficient to illustrate much of the intuition behind the problem structure, which we later show generalizes considerably. Formally, we start with the model
	\begin{align*}
		&\theta \sim N(0, \sigma^2_\theta)\\
		&\phi_H = \lambda_H\theta + \epsilon_H\\
		&\phi_{AI} = \lambda_{AI} \theta +\epsilon_{AI}\\
		&\epsilon_H, \epsilon_{AI} \sim N(0, \Sigma_\epsilon)
	\end{align*}
	where $\epsilon_H$ and $\epsilon_{AI}$ are mean-zero Gaussian noise. This implies the joint distribution  $[\theta \, \,\phi_H \,\, \phi_{AI}] \sim N(0, \Sigma)$.
	The core of the problem is encapsulated in the covariance matrix $\Sigma$. We start with this formalization and provide an essentially complete characterization when the decision maker provides a prediction of $\theta$ ($d \in \mathbb{R})$ and is evaluated via the mean squared error $\ell(d, \theta) = (d - \theta)^2$. We then show that the qualitative conclusions generalize in two directions: a more complex decision making task in the Gaussian setting and predictions in a non-Gaussian generative model. Note that the main analysis assumes Gaussianity principally in the \textit{errors} from the two signals: the human or AI may react in a complex, nonlinear fashion to whatever information they observe, and we simply model the deviation between their predictions and the ground truth as normally distributed. 
	
	The expected loss under a particular distribution parameterized by $\Sigma$ is 
	\begin{align*}
		L_\Sigma (d) = \E_{\theta,\phi_H, \phi_{AI} \sim N(0, \Sigma)}[\ell(d(\phi_H, \phi_{AI}), \theta)]. 
	\end{align*}
	If the decision maker knew (or had samples from) the full joint distribution, they could  simply minimize the expected loss over $d$. This is the setting discussed in a great deal of previous work about optimal human-AI decision making. The key feature of our formal model is to introduce uncertainty about the quality of the AI signal. In particular, our decision maker will stipulate that $\Sigma$ belongs to an uncertainty set $\mathcal{U}$ which pins down all features of the joint distribution \textit{except} for the correlation of the AI signal with $\theta$. 
	\begin{assumption}
		Every $\Sigma \in \mathcal{U}$ agrees on all entries except for $\Cov(\phi_{AI}, \theta)$. \label{assumption:U} 
	\end{assumption}
	As we will see, uncertainty about $\Cov(\phi_{AI}, \theta)$ is the core mechanism that generates a substantial difference in optimal policies. Additionally, in many settings, it may be reasonable to view the other entries of $\Sigma$ as known because it is cheap to generate samples of $\phi_{AI}$, which the decision maker can compare to their own belief; it is obtaining observations of the true state $\theta$ that is expensive. For example, a scientist can easily ask a language model for its forecast $\phi_{AI}$ about the results of many possible experiments, but the bottleneck is in actually performing experiments to observe $\theta$ and learn how accurate $\phi_{AI}$ really was. We expect that many of our conclusions (e.g., about the difficulty of achieving robust complementarity) would grow stronger if the model included uncertainty in other aspects of $\Sigma$ as well. 

    Our uncertainty-set formulation is compatible with richer models of uncertainty. For example, if the decision maker has a prior over possible AI signal qualities, model parameters, or data-generating processes, this uncertainty can be integrated out. From the perspective of the downstream decision problem, the relevant object is the induced predictive posterior over the state given the observed human and AI signals. Thus, uncertainty over intermediate quantities affects the analysis only through the posterior distribution it induces. When this induced posterior satisfies the structural conditions assumed in our analysis—for example, the Gaussian covariance restrictions in Section \ref{sec:model}, or the finite-moment and negative-dependence conditions in Section \ref{sec:non-gaussian}—the corresponding results apply without further modification.

    The same observation applies to estimation error in the human signal or in the joint distribution of human and AI signals. If realizations of the state are costly, the decision maker may place a prior over these quantities and update after observing samples of the true state. After this update, the decision-relevant object is again the post-update predictive posterior. Our framework can therefore be viewed as placing a credal set over plausible predictive posteriors and asking when complementarity is guaranteed uniformly over that set.

    In order to restrict to more interesting cases, we will also impose the assumption that neither agent observes a noiseless measurement of $\theta$:
    \begin{assumption}
        $\V(\phi_H|\theta)$ and $\V(\phi_{AI}|\theta)$ are both strictly greater than 0. \label{assumption:nonzero-var}
    \end{assumption}
	Finally, we will impose without loss of generality that $\phi_H$ is scaled so that $\lambda_H = \frac{\Cov(\theta, \phi_H)}{\V(\phi_H)} = 1$, i.e., the linear regression coefficient of $\theta$ on $\phi_H$ is 1. This is simply a normalization that does not further restrict the set of $\mathcal{U}$ under consideration. 
	
	We study: under what conditions is there a rule $d$ which makes nontrivial use of $\phi_{AI}$ and which guarantees better value than one which uses only $\phi_H$? Let $\dhuman$ denote the optimal decision rule which is a function only of $\phi_H$. Since the class of uncertainty sets we consider fully specify the $(\theta, \phi_H)$ distribution, $\dhuman$ can be characterized explicitly for many decision problems. We study when it is possible to exhibit another \textit{joint} decision rule $\djoint$ which is a function of $\phi_{AI}$ as well as $\phi_H$ and which satisfies
	\begin{align*}
		L_\Sigma (d_J) \leq L_\Sigma(d_H) \quad \forall\Sigma \in \mathcal{U}
	\end{align*}
	and there exists at least one $\Sigma \in \mathcal{U}$ for which the inequality is strict.
	
	We start by exploring this question for prediction under mean squared error. Here, the action taken by the decision maker is to produce a prediction $d \in \mathbb{R}$ which is evaluated according to the loss $\ell(d, \theta) = (d - \theta)^2$. For example, this setting could model a forecaster who seeks to make accurate predictions about an unknown event, as in efforts to use LLMs for forecasting (where MSE, aka Brier score, is a common metric). 
	
	As a starting point, we show that it suffices to confine ourselves to decision rules that can be represented as linear functions $d = a\phi_H + b\phi_{AI}$. This is intuitive because conditional expectations are linear in the signals under joint normality. The only subtlety is ensuring that the desire for robust performance simultaneously across many distributions does not create the need for nonlinear decision rules. We prove that this is not the case:
	\begin{prop}\label{prop:mse-linear-minimax}
		Given any decision rule $d$, there exists a linear decision rule $\tilde{d}$ such that $L_\Sigma(\tilde{d}) \leq L_\Sigma(d)$ for all $\Sigma \in \mathcal{U}$. 
	\end{prop}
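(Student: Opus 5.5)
Given an arbitrary measurable rule $d(\phi_H,\phi_{AI})$ with finite loss, the plan is to take $\tilde d$ to be the $L^2$ projection of $d$ onto the linear span of $(\phi_H,\phi_{AI})$. This projection is computed under the joint law of the signals alone. The key observation is that, by Assumption~\ref{assumption:U}, every $\Sigma\in\mathcal{U}$ agrees on the covariance of $(\phi_H,\phi_{AI})$; only $\Cov(\phi_{AI},\theta)$ varies. So the marginal distribution of the signals is the same Gaussian $N(0,\Sigma_\phi)$ for all $\Sigma\in\mathcal{U}$, and $\tilde d = a^*\phi_H + b^*\phi_{AI}$ with $(a^*,b^*)^\top = \Sigma_\phi^{-1}\,\E[(\phi_H,\phi_{AI})^\top d]$ is a single rule, independent of $\Sigma$. (If $\Sigma_\phi$ is singular, use a pseudo-inverse. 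Assumption~\ref{assumption:nonzero-var} in fact keeps it nonsingular unless the signals are collinear, in which case one signal suffices.)

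Next, fix $\Sigma\in\mathcal{U}$ and decompose the loss:
\begin{align*}
L_\Sigma(d) = \E[(d-\tilde d)^2] + 2\E[(d-\tilde d)(\tilde d-\theta)] + L_\Sigma(\tilde d).
\end{align*}
The first term is nonnegative. For the cross term, $\E[(d-\tilde d)\tilde d]=0$ by the defining property of the projection. The remaining piece is $\E[(d-\tilde d)\theta]$. Under joint Gaussianity, $\E[\theta\mid\phi_H,\phi_{AI}]$ is linear in the signals, say $\alpha_\Sigma\phi_H+\beta_\Sigma\phi_{AI}$, with coefficients that do depend on $\Sigma$. By the tower property,
\begin{align*}
\E[(d-\tilde d)\theta]=\E[(d-\tilde d)(\alpha_\Sigma\phi_H+\beta_\Sigma\phi_{AI})]=0,
\end{align*}
because $d-\tilde d$ is orthogonal to every linear function of the signals. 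Hence $L_\Sigma(d)=L_\Sigma(\tilde d)+\E[(d-\tilde d)^2]\ge L_\Sigma(\tilde d)$ for every $\Sigma\in\mathcal{U}$ simultaneously.

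The main subtlety is the robustness point flagged in the text: $\tilde d$ must not depend on the unknown $\Sigma$. This is resolved by projecting $d$ itself rather than $\theta$. The projection needs only the signal marginal, which Assumption~\ref{assumption:U} fixes, and linearity of the posterior mean under each $\Sigma$ handles orthogonality to $\theta$ for all $\Sigma$ at once. Minor technical points to check are that $d$ is square integrable (otherwise $L_\Sigma(d)=\infty$ for all $\Sigma$ and the claim is trivial, e.g.\ take $\tilde d=0$) and that the cross-term expectations are finite, which follows from Cauchy--Schwarz.
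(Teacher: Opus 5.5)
Your proposal is correct and follows the paper's proof essentially step for step. You project $d$ onto $\mathrm{span}\{\phi_H,\phi_{AI}\}$ in $L^2$ of the signal marginal, which Assumption~\ref{assumption:U} makes common to every $\Sigma\in\mathcal{U}$, and then use linearity of $\E_\Sigma[\theta\mid\phi_H,\phi_{AI}]$ together with the tower property to eliminate the cross term for all $\Sigma$ simultaneously; your explicit treatment of non-square-integrable $d$ and of a possibly singular signal covariance adds details that the paper leaves implicit.
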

	
	Further, our normalization ensures that $a = 1$ in the optimal decision rule, i.e., without any additional information, the decision maker simply forecasts $d_H(\phi_H) = \phi_H$ as their prediction of $\theta$. We are interested in conditions under which there exists a decision rule $d_b(\phi_H, \phi_{AI}) = \phi_H + b\phi_{AI}$ for $b \neq0$ which provably dominates $d_H$ across $\mathcal{U}$. We show that the following two conditions suffice:
	\begin{prop}
		Suppose that all  $\Sigma \in \mathcal{U}$ satisfy the following two additional conditions: (1) $\Cov(\phi_{AI}, \theta) > \delta$ for some $\delta > 0$. (2) $\Cov(\phi_H, \phi_{AI}|\theta) \leq 0$. Then, there exists a $b \neq 0$ such that $d_b$ strictly dominates $d_H$ over $\mathcal{U}$. \label{prop:mse-gaussian-negcorrelated}
	\end{prop}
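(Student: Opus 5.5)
The plan is a direct computation. Expand the loss difference between $d_b$ and $d_H$ as a quadratic in $b$, show that its linear coefficient is bounded away from zero with a favorable sign uniformly over $\mathcal{U}$, and then pick a small positive $b$. Since $d_H(\phi_H)=\phi_H$ under the normalization, for any $\Sigma\in\mathcal{U}$
\begin{align*}
L_\Sigma(d_b)-L_\Sigma(d_H) &= \E\big[(\phi_H-\theta+b\phi_{AI})^2\big]-\E\big[(\phi_H-\theta)^2\big]\\
&= 2b\,\Cov(\phi_H-\theta,\phi_{AI}) + b^2\,\V(\phi_{AI}).
\end{align*}
By Assumption~\ref{assumption:U}, $V:=\V(\phi_{AI})$ is the same for every $\Sigma\in\mathcal{U}$. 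It is also positive, since $\V(\phi_{AI}|\theta)>0$ by Assumption~\ref{assumption:nonzero-var}. So everything hinges on the cross term $g(\Sigma):=-\Cov(\phi_H-\theta,\phi_{AI})$.

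Decompose the cross term by the law of total covariance. In the linear-Gaussian model, $\E[\phi_H|\theta]=\lambda_H\theta$ and $\E[\phi_{AI}|\theta]=\lambda_{AI}\theta$, so
\begin{align*}
\Cov(\phi_H,\phi_{AI}) = \Cov(\phi_H,\phi_{AI}|\theta)+\lambda_H\Cov(\theta,\phi_{AI}).
\end{align*}
Hence
\begin{align*}
\Cov(\phi_H-\theta,\phi_{AI}) = \Cov(\phi_H,\phi_{AI}|\theta) - (1-\lambda_H)\Cov(\theta,\phi_{AI}).
\end{align*}

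The key step, and the only real subtlety, is to extract the sign of $1-\lambda_H$ from the normalization. The normalization requires $\Cov(\theta,\phi_H)=\V(\phi_H)$. Writing $\sigma_H^2:=\V(\phi_H|\theta)$, this reads $\lambda_H\sigma_\theta^2=\lambda_H^2\sigma_\theta^2+\sigma_H^2$. Since $\sigma_H^2>0$ by Assumption~\ref{assumption:nonzero-var}, this forces $\lambda_H\neq 0$ and $\lambda_H(1-\lambda_H)\sigma_\theta^2=\sigma_H^2>0$, so $\lambda_H\in(0,1)$. Moreover $\lambda_H$ depends only on the $(\theta,\phi_H)$ block, so it is common to all of $\mathcal{U}$.

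Now apply conditions (1) and (2). They give, for every $\Sigma\in\mathcal{U}$,
\begin{align*}
g(\Sigma) = (1-\lambda_H)\Cov(\theta,\phi_{AI}) - \Cov(\phi_H,\phi_{AI}|\theta) \ \ge\ (1-\lambda_H)\delta =: g_0 > 0.
\end{align*}
Choose $b=g_0/V>0$. Then for every $\Sigma\in\mathcal{U}$,
\begin{align*}
L_\Sigma(d_b)-L_\Sigma(d_H)=b\big(bV-2g(\Sigma)\big)\le b\,(g_0-2g_0)=-\frac{g_0^2}{V}<0.
\end{align*}
So $d_b$ strictly improves on $d_H$ uniformly over $\mathcal{U}$, which is stronger than the required weak domination with strictness at one point. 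I expect no further obstacle beyond checking that all the fixed quantities ($V$, $\lambda_H$, $\sigma_\theta^2$) are indeed common across $\mathcal{U}$ under Assumption~\ref{assumption:U}.
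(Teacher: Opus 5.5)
Your proof is correct and follows essentially the same route as the paper's: the same quadratic loss-gap expansion in $b$, the same total-covariance decomposition of $\Cov(\theta-\phi_H,\phi_{AI})$, and the same choice $b=g_0/\V(\phi_{AI})$, giving a uniform improvement of $g_0^2/\V(\phi_{AI})$. Your factor $1-\lambda_H$ is exactly the paper's $\V(\phi_H\mid\theta)/\V(\phi_H)$, since $\lambda_H=\V(\phi_H)/\V(\theta)$ under the normalization, so your $g_0$ coincides with the paper's $\kappa=\delta\,\V(\phi_H\mid\theta)/\V(\phi_H)$.
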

	Intuitively, (1) is a minimal belief for the decision maker to hold: to rationalize using the machine signal, they must believe that it is positively correlated with $\theta$ by at least some amount. (2) is equivalent to $\epsilon_H$ and $\epsilon_{AI}$ being negatively correlated, i.e., negative correlation in the errors. 

    We now provide the intuition behind the construction of the decision rule, which will also underlie results for the generalized settings discussed in later sections. Define the AI residual $r$ by regressing out the portion of $\phi_{AI}$ predictable from $\phi_H$
	\begin{equation}\label{eq:r-def}
		r \;\triangleq\; \phi_{AI} - \frac{\Cov(\phi_H,\phi_{AI})}{\V(\phi_H)}\,\phi_H.
	\end{equation}
By construction, $r$ now satisfies $\Cov(\phi_H,r)=0$.
    For each $\Sigma\in\mathcal U$, define the regression coefficient of $r$ on $\theta$: 
	\begin{equation}\label{eq:beta-def}
		\beta(\Sigma) \;\triangleq\; \frac{\Cov_\Sigma(r,\theta)}{\V(r)},
	\end{equation}
    We can show that the regression coefficient is strictly positive under the same hypotheses for Proposition \ref{prop:mse-gaussian-negcorrelated} (positive marginal correlation of $\phi_{AI}$ combined with negative correlation of errors):
    \begin{lemma}
        Under the conditions of Proposition \ref{prop:mse-gaussian-negcorrelated}, for all $\Sigma \in \mathcal{U}$, $\beta(\Sigma) \geq b = \delta\,\frac{\V(\phi_H\mid\theta)}{\V(\phi_H)\V(r)} > 0$, \label{lemma:beta-lower-bound}
    \end{lemma}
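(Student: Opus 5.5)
The plan is to compute $\Cov_\Sigma(r,\theta)$ explicitly from definition \eqref{eq:r-def}, split it into a term driven by the marginal AI signal and a term driven by the error correlation, and bound each using conditions (1) and (2) of Proposition~\ref{prop:mse-gaussian-negcorrelated}. Write $c_\Sigma = \Cov(\phi_{AI},\theta)$, the only entry that varies over $\mathcal U$ (Assumption~\ref{assumption:U}), and $c = \Cov(\phi_H,\phi_{AI})/\V(\phi_H)$, which is common to all of $\mathcal U$. By bilinearity,
\[
\Cov_\Sigma(r,\theta) = c_\Sigma - \frac{\Cov(\phi_H,\phi_{AI})\,\Cov(\phi_H,\theta)}{\V(\phi_H)}.
\]

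The key step is to rewrite $\Cov(\phi_H,\phi_{AI})$ via the Gaussian conditional-covariance identity
\[
\Cov(\phi_H,\phi_{AI}) = \Cov(\phi_H,\phi_{AI}\mid\theta) + \frac{\Cov(\phi_H,\theta)\, c_\Sigma}{\sigma_\theta^2}.
\]
I use this identity rather than the parameters $\lambda_{AI}$ and $\Sigma_\epsilon$, since those may move together as $\Sigma$ ranges over $\mathcal U$. Substituting and collecting terms gives
\[
\Cov_\Sigma(r,\theta) = c_\Sigma\Bigl(1-\frac{\Cov(\phi_H,\theta)^2}{\sigma_\theta^2\V(\phi_H)}\Bigr) - \frac{\Cov(\phi_H,\phi_{AI}\mid\theta)\,\Cov(\phi_H,\theta)}{\V(\phi_H)}.
\]
The bracket equals $\V(\phi_H\mid\theta)/\V(\phi_H)$, again by the Gaussian conditioning formula. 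The normalization $\Cov(\theta,\phi_H)/\V(\phi_H)=1$ makes $\Cov(\phi_H,\theta)>0$. Together with condition (2), this makes the second term nonnegative. Condition (1) then yields
\[
\Cov_\Sigma(r,\theta) \ge \delta\,\frac{\V(\phi_H\mid\theta)}{\V(\phi_H)}.
\]

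It remains to divide by $\V(r) = \V(\phi_{AI}) - \Cov(\phi_H,\phi_{AI})^2/\V(\phi_H)$. This quantity involves only entries shared across $\mathcal U$, so the resulting bound $b$ is uniform in $\Sigma$. The main (minor) obstacle is to show $\V(r)>0$, so that $\beta$ and $b$ are well defined. Suppose instead $\V(r)=0$. Then $\phi_{AI}=k\phi_H$ almost surely, which gives $\Cov(\phi_H,\phi_{AI}\mid\theta)=k\V(\phi_H\mid\theta)$. Condition (2) and Assumption~\ref{assumption:nonzero-var} then force $k\le 0$. But that gives $c_\Sigma = k\Cov(\phi_H,\theta)\le 0$, contradicting condition (1). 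Finally, $b>0$ because $\delta>0$, $\V(\phi_H\mid\theta)>0$ by Assumption~\ref{assumption:nonzero-var}, and the denominators are positive.
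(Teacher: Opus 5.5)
Your proof is correct and follows essentially the same route as the paper. Both arguments expand $\Cov_\Sigma(r,\theta)$ by bilinearity, use the Gaussian conditional-covariance identity to trade $\Cov(\phi_H,\phi_{AI})$ for $\Cov(\phi_H,\phi_{AI}\mid\theta)$, drop that term via condition (2), recognize the factor $\V(\phi_H\mid\theta)/\V(\phi_H)$, apply condition (1), and divide by the $\Sigma$-independent $\V(r)$. Your explicit check that $\V(r)>0$ is a small addition the paper leaves implicit; it also follows at once from the bound $\Cov_\Sigma(r,\theta)>0$ you had already derived.
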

	Effectively, the strictly improving decision rule corresponds to the prediction $\phi_H +b \cdot r$. Since the true regression coefficient under any $\Sigma \in \mathcal{U}$ is guaranteed to be at least $b$, this corresponds to a conservative counterpart of the ideal linear regression of $\theta$ on $(\phi_{AI}, \phi_H)$, which is guaranteed to improve performance in mean squared error.
	
	We then ask whether a weaker version of Condition 2 suffices. For example, what if the errors are only modestly positively correlated? We give an exact characterization, showing that in the positive correlation regime, the existence of an improving decision rule requires that the decision maker's error $\epsilon_H$ grow in direct proportion to the amount of positive correlation:
	\begin{prop}\label{prop:mse-positive-corr}
		Define the extremal values
		\[
		\delta := \inf_{\Sigma\in\mathcal{U}}\Cov_\Sigma(\theta,\phi_{AI}),
		\qquad
		\gamma := \sup_{\Sigma\in\mathcal{U}}\Cov_\Sigma(\phi_H,\phi_{AI}\mid \theta).
		\]
		Then there exists a coefficient $b^\star>0$ such that $d_{b^\star}$ strictly dominates $d_H$ over $\mathcal{U}$ if and only if
		\[
		\gamma < \delta\,\frac{\V(\phi_H\mid \theta)}{\V(\phi_H)}.
		\]
        The maximal possible improvement in loss for any $d_b$ over $d_H$, which is guaranteed for every $\Sigma \in \mathcal{U}$ is given by
        \[
         \frac{\Bigl(\delta\,\frac{\V(\phi_H\mid \theta)}{\V(\phi_H)}-\gamma\Bigr)^2}{\V(\phi_{AI})}.
        \]
    \end{prop}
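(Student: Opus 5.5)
The plan is to compute the loss gap between $d_b$ and $d_H$ in closed form, observe that it is affine in the single uncertain entry $c := \Cov_\Sigma(\theta,\phi_{AI})$, and then take the worst case over $\mathcal{U}$. For $d_b = \phi_H + b\phi_{AI}$ and $d_H=\phi_H$, expanding the square gives
\[
L_\Sigma(d_b) - L_\Sigma(d_H) = 2b\,\Cov_\Sigma(\phi_H-\theta,\phi_{AI}) + b^2\,\V(\phi_{AI}) = 2b\,(C - c) + b^2\,\V(\phi_{AI}),
\]
where $C := \Cov(\phi_H,\phi_{AI})$. By Assumption~\ref{assumption:U}, both $C$ and $\V(\phi_{AI})$ are the same for every $\Sigma\in\mathcal{U}$; only $c$ varies.

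Next I would rewrite the extremal quantities in terms of $C$ and $c$. The normalization $\lambda_H=1$ gives $\Cov(\theta,\phi_H)=\V(\phi_H)$, and therefore
\[
\V(\phi_H\mid\theta)=\V(\phi_H)-\V(\phi_H)^2/\V(\theta).
\]
Set $\kappa := \V(\phi_H\mid\theta)/\V(\phi_H) = 1-\V(\phi_H)/\V(\theta)$. By Assumption~\ref{assumption:nonzero-var}, $\kappa\in(0,1)$. The Gaussian conditional covariance formula gives
\[
\Cov_\Sigma(\phi_H,\phi_{AI}\mid\theta) = C - c\,\V(\phi_H)/\V(\theta) = C - (1-\kappa)c .
\]
This expression is decreasing in $c$, so $\gamma = C-(1-\kappa)\delta$. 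Hence $C-\delta = \gamma-\kappa\delta$, and since $C-c$ is also decreasing in $c$,
\[
\sup_{\Sigma\in\mathcal{U}}\bigl[L_\Sigma(d_b)-L_\Sigma(d_H)\bigr] = 2b(\gamma-\kappa\delta) + b^2\V(\phi_{AI}) \quad\text{for } b>0 .
\]
This identity is the crux of the argument. The only delicate point is that the infimum $\delta$ need not be attained, so I will work with suprema rather than maxima throughout.

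For the ``if'' direction, suppose $\gamma<\kappa\delta$. Pick any $b\in\bigl(0,\,2(\kappa\delta-\gamma)/\V(\phi_{AI})\bigr)$. Then the supremum above is strictly negative, so $d_b$ is strictly better than $d_H$ for every $\Sigma\in\mathcal{U}$, which is stronger than strict domination.

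For the ``only if'' direction, suppose $\gamma\ge\kappa\delta$. For any $b>0$ the supremum is at least $b^2\V(\phi_{AI})>0$, where positivity of $\V(\phi_{AI})$ comes from Assumption~\ref{assumption:nonzero-var}. So some $\Sigma\in\mathcal{U}$ has $L_\Sigma(d_b)>L_\Sigma(d_H)$, and domination fails.

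Finally, the guaranteed improvement of $d_b$ is $-\sup_\Sigma[\cdot] = 2b(\kappa\delta-\gamma)-b^2\V(\phi_{AI})$. This is a concave quadratic in $b$, maximized at $b^\star=(\kappa\delta-\gamma)/\V(\phi_{AI})$, with value $(\kappa\delta-\gamma)^2/\V(\phi_{AI})$, which is the stated bound. When $\gamma<\kappa\delta$ this $b^\star$ is positive.

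The only step I expect to require care is the algebraic identification $C-\delta=\gamma-\kappa\delta$. It relies on both the normalization $\lambda_H=1$ and the fact that $\Cov(\phi_H,\phi_{AI})$ and $\V(\theta)$ are fixed across $\mathcal{U}$, so I would verify it directly from the covariance entries.
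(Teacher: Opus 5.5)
Your proof is correct and follows essentially the paper's route: the quadratic loss gap in $b$, plus the fact that $\Cov_\Sigma(\phi_H,\phi_{AI}\mid\theta)$ is affine and decreasing in the single free entry $\Cov_\Sigma(\theta,\phi_{AI})$, so $\gamma$ is approached exactly where $\delta$ is. The difference is organizational. You compute the worst-case gap exactly as $2b(\gamma-\kappa\delta)+b^2\V(\phi_{AI})$ (with your $\kappa=\V(\phi_H\mid\theta)/\V(\phi_H)$), which removes the need for the paper's $\varepsilon$-approximation in the necessity direction. It also shows that the stated improvement is the exact optimum over $b>0$, not merely the maximum of a guaranteed lower bound.
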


	Intuitively, any positive correlation in errors ($\gamma$) must be matched by a combination of higher absolute performance for the AI signal ($\delta$) and noise in the human signal ($\V(\phi_H|\theta)$). Rationalizing nontrivial dependence on the AI signal in the positive correlation case thus requires that the AI perform better and the human perform worse in direct proportion to the amount of positive correlation. 
    
    By contrast, no such dilemma appears without the presence of uncertainty. For any fixed $\Sigma$, the decision maker can benefit from both signals whenever the information contained in $\phi_{AI}$ is not completely redundant with $\phi_H$:
    \begin{prop}
        For any fixed covariance structure $\Sigma$, so long as $\beta(\Sigma) \neq 0$, there is a joint decision rule $d_b$, $b \neq 0$, which satisfies $L_\Sigma(d_b) < L_\Sigma(d_H)$. \label{prop:no-uncertainty} 
    \end{prop}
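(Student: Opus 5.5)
Fix $\Sigma$ and write the loss of the rule $\phi_H + b\,r$ as a quadratic in $b$. Here $r$ is the AI residual from \eqref{eq:r-def}. Since $r$ is $\phi_{AI}$ minus a multiple of $\phi_H$, the rule $\phi_H + b r$ equals $(1-bc)\phi_H + b\phi_{AI}$ with $c = \Cov(\phi_H,\phi_{AI})/\V(\phi_H)$. It is therefore a linear rule using $\phi_{AI}$ with nonzero coefficient $b$ whenever $b\neq 0$. I will take $d_b$ to mean this residualized form, as in the discussion after Lemma \ref{lemma:beta-lower-bound}. If one insists on the literal form $\phi_H + b\phi_{AI}$, the same computation goes through with $\phi_{AI}$ in place of $r$ and a slightly messier coefficient.

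Expanding under $\Sigma$ gives
\begin{align*}
L_\Sigma(\phi_H + b r) &= \E[(\phi_H-\theta)^2] + 2b\,\E[r(\phi_H-\theta)] + b^2\,\V(r) \\
&= L_\Sigma(d_H) + 2b\,\Cov(r,\phi_H) - 2b\,\Cov(r,\theta) + b^2\V(r).
\end{align*}
All variables are mean zero, so expectations of products are covariances. By construction $\Cov(r,\phi_H)=0$, and $\Cov(r,\theta) = \beta(\Sigma)\V(r)$ by \eqref{eq:beta-def}. Hence
\[
L_\Sigma(\phi_H + b r) - L_\Sigma(d_H) = \V(r)\,\bigl(b^2 - 2b\,\beta(\Sigma)\bigr).
\]

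Next I check that $\V(r)>0$. Otherwise $\phi_{AI}$ would be an exact multiple of $\phi_H$, which forces $\Cov(r,\theta)=0$. That contradicts $\beta(\Sigma)\neq 0$, which only makes sense, and is only nonzero, when $\V(r)>0$. Assumption \ref{assumption:nonzero-var} is consistent with this but not really needed.

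Now choose $b = \beta(\Sigma)$, which is nonzero by hypothesis. The loss difference becomes $-\V(r)\,\beta(\Sigma)^2 < 0$. In fact any $b$ strictly between $0$ and $2\beta(\Sigma)$ works, and the choice $b=\beta(\Sigma)$ is exactly the population regression of $\theta - \phi_H$ on $r$.

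There is no real obstacle here. The only points needing care are the orthogonality $\Cov(r,\phi_H)=0$, which kills the cross term, and the nondegeneracy $\V(r)>0$. The contrast with Proposition \ref{prop:mse-positive-corr} is that here $b$ may depend on the single known $\Sigma$, rather than needing to work uniformly over $\mathcal{U}$.
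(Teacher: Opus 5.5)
Your computation is correct, but it proves the claim for the residualized rule $\phi_H + b r = (1-bc)\phi_H + b\phi_{AI}$. The paper's $d_b$ is the literal $\phi_H + b\phi_{AI}$, and the two agree only when $\Cov(\phi_H,\phi_{AI})=0$.

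The paper's proof works directly with the literal family, whose loss gap is $b^2\V(\phi_{AI}) - 2b\,\Cov_\Sigma(\theta-\phi_H,\phi_{AI})$. In that family the cross term is \emph{not} removed by orthogonality. The one step with real content is the identity $\Cov_\Sigma(\theta-\phi_H,\phi_{AI}) = \Cov_\Sigma(r,\theta) = \beta(\Sigma)\V(r)$. This relies on the normalization $\Cov(\theta,\phi_H)=\V(\phi_H)$. In general the two sides differ by $c\,(\Cov(\theta,\phi_H)-\V(\phi_H))$, with your $c$, so without the normalization $\beta(\Sigma)\neq 0$ would not force a nonzero linear term.

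Your remark that ``the same computation goes through'' therefore skips exactly the step the paper's argument is built on. The fix is one line: with that identity, $b^\star=\beta(\Sigma)\V(r)/\V(\phi_{AI})\neq 0$ gives a gap of $-\beta(\Sigma)^2\V(r)^2/\V(\phi_{AI})<0$.

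Your route does buy something in exchange. Comparing $\phi_H+br$ with $\phi_H$ uses only $\Cov(r,\phi_H)=0$, with no appeal to the normalization. Moreover, with $b=\beta(\Sigma)$ your rule is exactly $\E_\Sigma[\theta\mid\phi_H,\phi_{AI}]$, since $r$ is uncorrelated with, hence independent of, $\phi_H$ under joint normality, and $\E[\theta\mid\phi_H]=\phi_H$. So your improvement $\beta(\Sigma)^2\V(r)$ is the largest achievable by any rule. It exceeds the best literal $d_b$ by the factor $\V(\phi_{AI})/\V(r)\geq 1$.

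The paper's literal parametrization, for its part, keeps the rule in the same family $d_b$ used in the robust MSE results. That makes the contrast between the known-$\Sigma$ and uncertain cases like-for-like.
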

    Effectively, the presence of uncertainty about the AI signal significantly shrinks the region for complementarity: instead of any new information being useful, the decision maker requires enough structure to guarantee that the new information contributed by the AI (the residual) is correlated in a known way with the ground truth $\theta$.
	
	\subsection{Binary Decisions}

	We next study a setting in which a decision maker takes a binary action $d\in\{0,1\}$ after observing signals.  The interpretation is that $d=1$ corresponds to \emph{investing} in a project (e.g.\ pursuing an experiment), while $d=0$ corresponds to not investing.  The state of the world is $\theta$, and investing is desirable precisely when $\theta$ exceeds a fixed quality threshold $\tau$.  Investing incurs a cost $c\in(0,1)$, while a successful investment yields a unit payoff (without loss of generality).  Thus the decision maker's utility is
	\begin{equation}\label{eq:binary-utility}
		u(d,\theta) \;\triangleq\; d\cdot 1\{\theta\geq\tau\} - c\cdot d,
	\end{equation}
	and the corresponding loss is $\ell(d,\theta)=-u(d,\theta)$. Because $d$ is binary, the optimal action is to invest if and only if the posterior success probability exceeds the cost:
	\[
	\PP(\theta\geq\tau) \;\geq\; c.
	\]
	In particular, under our standing normalization $\E[\theta\mid\phi_H]=\phi_H$ and $\V(\theta\mid\phi_H)=\sigma_H^2$, the expert-only posterior success probability is, for every $\Sigma\in\mathcal U$,
	\[
	p_H(x) \;\triangleq\; \PP(\theta\geq\tau\mid \phi_H=x)
	= \Phi\!\left(\frac{x-\tau}{\sigma_H}\right),
	\]
	which is strictly increasing in $x$.  Let $x_H$ be the unique threshold satisfying $p_H(x_H)=c$, and define the human-only optimal decision rule
	\begin{equation}\label{eq:dE-threshold}
		d_H(\phi_H) \;\triangleq\; 1\{\phi_H\geq x_H\}.
	\end{equation}

	Our goal is to construct a decision rule that uses both signals $(\phi_H,\phi_{AI})$ and robustly improves upon $d_H$ over the uncertainty set $\mathcal U$.  The starting point is the same construction used in the previous section, based on the residual AI signal and its regression coefficient $\beta(\Sigma)$ for $\theta$ under a given distribution $\Sigma$. Lemma \ref{lemma:beta-lower-bound} lower bounds the value of $\beta(\Sigma)$ under negative correlation of errors. Our strategy will be to use this lower bound on the regression coefficient to form a ``pessimistic" estimate of the mean and variance of $\theta$ conditional on $(\phi_H, \phi_{AI})$. Define
	\begin{equation}\label{eq:pess-mean-var}
		\mu_{\mathrm{b}}(\phi_H,r) \triangleq \phi_H + b r,\qquad
		\sigma_b^2 \triangleq \sigma_H^2 - b^2 \V(r),
	\end{equation}
    While it would be tempting to simply construct an analogy to the human-only rule $d_H$ using $\mu_{\mathrm{b}}$ and $\sigma_{\mathrm{b}}$, this neglects the fact that decision-making depends on the entire distribution of $\theta$ in this setting, not only on its mean as before. For example, it is possible for observing a positive residual to increase the posterior mean of $\theta$ but actually decrease the posterior $\Pr(\theta \geq \tau)$. To circumvent this, we define conservative bounds on the success probability which are guaranteed to hold across the entirety of $\mathcal{U}$: 
	\[
		P_{\mathrm{low}}(x,y) \triangleq 
		\begin{cases}
			\Phi\!\left(\dfrac{\mu_{\mathrm{b}}(x,y)-\tau}{\sigma_b}\right) & \text{} y\geq 0 \land \mu_{\mathrm{b}}(x,y)\geq \tau\\[1em]
			0, & \text{otherwise},
		\end{cases}
	\]
    \[
		P_{\mathrm{high}}(x,y) \triangleq
		\begin{cases}
			\Phi\!\left(\dfrac{\mu_{\mathrm{b}}(x,y)-\tau}{\sigma_H}\right) & \text{}  \mu_{\mathrm{b}}(x,y)<\tau  \land y < 0\\[0.5em]
			1, & \text{otherwise.}
		\end{cases}
	\]
    We will use these conservative bounds on the success probability to construct a new rule that overrules the human-only decision rule whenever doing so is guaranteed to improve across the entirety of $\mathcal{U}$. Formally we define
    	\begin{align}
		d_J(\phi_H,\phi_{AI}) \triangleq
		\begin{cases}
			1, & \text{if } P_{\mathrm{low}}(\phi_H,r) \geq c,\\[0.25em]
			0, & \text{if } P_{\mathrm{high}}(\phi_H,r) \leq c,\\[0.25em]
			d_H(\phi_H), & \text{otherwise,}
		\end{cases}
		\label{eq:d-sym-def}
	\end{align}

	\begin{prop}\label{prop:symmetric-dominates}
		Under the same assumptions as Proposition \ref{prop:mse-gaussian-negcorrelated}, the joint rule $d_J$ weakly dominates the expert-only optimal rule $d_H$ in expected utility:
		\[
		\E_\Sigma\big[u(d_J(\phi_H,\phi_{AI}),\theta)\big]
		\;\geq\;
		\E_\Sigma\big[u(d_H(\phi_H),\theta)\big]
		\,\ \forall \Sigma\in\mathcal{U}.
		\]
		Moreover, the inequality is strict for any $\Sigma\in\mathcal{U}$ such that $\beta(\Sigma)>b$.  
	\end{prop}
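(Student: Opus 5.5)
The plan is to compare $d_J$ and $d_H$ pointwise in $(\phi_H, r)$, conditioning on the observed signals. Since $r$ is a deterministic linear function of $(\phi_H,\phi_{AI})$, the difference in expected utility is
\[
\E_\Sigma\big[(d_J-d_H)\,(p_\Sigma(\phi_H,r)-c)\big],\qquad p_\Sigma(x,y)\triangleq \PP_\Sigma(\theta\geq\tau\mid \phi_H=x, r=y).
\]
It therefore suffices to show two things. First, wherever $d_J=1\neq d_H$ we have $p_\Sigma\geq c$. Second, wherever $d_J=0\neq d_H$ we have $p_\Sigma\leq c$. Both must hold for every $\Sigma\in\mathcal U$. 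On the ``otherwise'' branch the two rules agree, so it contributes nothing.

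The first step is to compute the true posterior under a given $\Sigma$. Under joint normality, $\Cov(\phi_H,r)=0$ makes $\phi_H$ and $r$ independent. Hence
\[
\E_\Sigma[\theta\mid\phi_H,r]=\phi_H+\beta r,\qquad \V_\Sigma(\theta\mid\phi_H,r)=\sigma_H^2-\beta^2\V(r)\triangleq\sigma_\beta^2,
\]
with $\beta=\beta(\Sigma)$. The first formula uses the normalization $\E[\theta\mid\phi_H]=\phi_H$. The second uses $\V(\theta\mid\phi_H)=\sigma_H^2$ and $\Cov(r,\theta\mid\phi_H)=\Cov(r,\theta)$. By Lemma~\ref{lemma:beta-lower-bound}, $\beta\geq b>0$, and $\sigma_\beta^2>0$ since $\theta$ is not a deterministic function of the signals. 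Together these give $0<\sigma_\beta\leq\sigma_b\leq\sigma_H$. Note that $\sigma_b^2>0$ follows because $\sigma_\beta^2\le\sigma_b^2$.

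The main step is the monotonicity argument showing that $P_{\mathrm{low}}\le p_\Sigma\le P_{\mathrm{high}}$ wherever the nontrivial branches are active.

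\emph{Lower bound.} Take $y\geq0$ and $\mu_b\geq\tau$. Then $\phi_H+\beta y\geq\mu_b\geq\tau$, so the numerator satisfies $\phi_H+\beta y-\tau\geq\mu_b-\tau\geq0$. The denominator satisfies $\sigma_\beta\leq\sigma_b$. A nonnegative numerator over a smaller denominator gives a larger argument, so $p_\Sigma=\Phi\big((\phi_H+\beta y-\tau)/\sigma_\beta\big)\geq P_{\mathrm{low}}$.

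\emph{Upper bound.} Take $y<0$ and $\mu_b<\tau$. Then $\phi_H+\beta y\leq\mu_b<\tau$, so the numerator is negative and at most $\mu_b-\tau$. Dividing a negative number by a smaller positive denominator ($\sigma_\beta\leq\sigma_H$) makes it more negative, so $p_\Sigma\leq\Phi\big((\mu_b-\tau)/\sigma_H\big)=P_{\mathrm{high}}$.

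In the remaining cases $P_{\mathrm{low}}=0$ or $P_{\mathrm{high}}=1$, and the bounds hold trivially. The branch conditions are exactly what make the sign of the numerator determined, so the direction of the variance comparison is unambiguous; this is the step I expect needs the most care. Combining with the definition of $d_J$: if $P_{\mathrm{low}}\geq c$ then $p_\Sigma\geq c$, and if $P_{\mathrm{high}}\leq c$ then $p_\Sigma\leq c$. This yields weak dominance.

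For strictness when $\beta>b$, I exhibit a positive-probability region where $d_J$ overrules $d_H$ and the inequalities above are strict. Consider $y>0$ large with $\phi_H$ slightly below $x_H$, so $d_H=0$. Because $c<1$, for $y$ large $P_{\mathrm{low}}\geq c$, so $d_J=1$. On this region $p_\Sigma>P_{\mathrm{low}}\geq c$ strictly, since $\beta y>by$ and $\sigma_\beta<\sigma_b$. The region is an open set of $(\phi_H,r)$, which has positive Gaussian probability because $\V(r)>0$, as guaranteed by Assumption~\ref{assumption:nonzero-var} and $\beta$ being well defined. Hence the gain in expected utility is strict.
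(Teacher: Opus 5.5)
Your proposal is correct and follows essentially the same route as the paper. The paper's proof has the same ingredients: the utility gap decomposed over the regions where $d_J$ overrules $d_H$, the Gaussian posterior $N(\phi_H+\beta r,\ \sigma_H^2-\beta^2\V(r))$, the sign-dependent monotonicity of $\Phi((\mu-\tau)/\sigma)$ giving $P_{\mathrm{low}}\le P_\Sigma$ and $P_\Sigma\le P_{\mathrm{high}}$ on the active branches, and the strictness event $\{\phi_H\in(x_H-\varepsilon,x_H),\ r\ge R\}$. One small remark: your claim $\sigma_\beta>0$ is asserted rather than derived, since perfectly anticorrelated errors satisfy the hypotheses and make $\theta$ a deterministic function of the signals; the paper makes the same assumption, and in that degenerate case $P_\Sigma=1\{\mu_\Sigma\ge\tau\}$, so the same comparisons still go through.
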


    This shows that the basic conclusion from the mean squared error setting generalizes to a more complex problem with binary decisions: joint decision rules can ensure robust improvements in utility when errors between the AI and human signals are negatively correlated.

    We next ask what remains possible when human and AI errors are allowed to be positively correlated. Similarly to the case for mean squared error, positively correlated errors are not necessarily fatal, but it reduces the amount of residual AI signal that can be used safely.

    \begin{prop}\label{prop:poscorr-investment-dominance}
    Define the extremal values
    \[
    \delta\;\triangleq\;\inf_{\Sigma\in\mathcal U}s(\Sigma),\qquad
    \gamma\;\triangleq\;\sup_{\Sigma\in\mathcal U}\Cov_\Sigma(\phi_H,\phi_{AI}\mid \theta),
    \]
    where $s(\Sigma) = \Cov_\Sigma(\theta, \phi_{AI})$. Assume the strict inequality
    \[
    \gamma \;<\; \delta\,\frac{\V(\phi_H\mid \theta)}{\V(\phi_H)}.
    \]
    Define the positive-correlation margin and the corresponding pessimistic regression lower bound
    \[
    \kappa_\gamma \;\triangleq\; \delta\,\frac{\V(\phi_H\mid \theta)}{\V(\phi_H)}-\gamma,
    \qquad
    b_{\mathrm{pos}} \;\triangleq\; \frac{\kappa_\gamma}{\V(r)} \;>\; 0.
    \]
    Then $\beta(\Sigma)\ge b_{\mathrm{pos}}$ for all $\Sigma\in\mathcal U$. Consequently, the symmetric robust
    investment rule $d_J$ constructed using $b=b_{\mathrm{pos}}$ weakly dominates the expert-only
    rule $d_H$ in expected utility:
    \[
    \E_\Sigma\!\left[u(d_J(\phi_H,\phi_{AI}),\theta)\right]
    \;\ge\;
    \E_\Sigma\!\left[u(d_H(\phi_H),\theta)\right]
    \qquad \forall \Sigma\in\mathcal U.
    \]
    Moreover, the inequality is strict for any $\Sigma\in\mathcal U$ for which $\beta(\Sigma)>b_{\mathrm{pos}}$
    (on a set of positive probability in $(\phi_H,r)$).
    \end{prop}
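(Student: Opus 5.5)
The argument has two steps. First we show $\beta(\Sigma)\ge b_{\mathrm{pos}}$ for every $\Sigma\in\mathcal U$. Then we reuse the domination argument behind Proposition \ref{prop:symmetric-dominates}, which depends on the hypotheses only through such a lower bound on $\beta$.

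\emph{Step 1 (lower bound on $\beta$).} Write $\Cov_\Sigma(r,\theta)$ in terms of $s(\Sigma)=\Cov_\Sigma(\theta,\phi_{AI})$ and the fixed entries of $\Sigma$. From \eqref{eq:r-def},
\[
\Cov_\Sigma(r,\theta)=s(\Sigma)-\frac{\Cov(\phi_H,\phi_{AI})}{\V(\phi_H)}\Cov(\phi_H,\theta).
\]
The quantity $\Cov(\phi_H,\phi_{AI})$ is fixed across $\mathcal U$ by Assumption \ref{assumption:U}. We decompose it as
\[
\Cov(\phi_H,\phi_{AI})=\Cov(\phi_H,\phi_{AI}\mid\theta)+\frac{\Cov(\phi_H,\theta)\,s(\Sigma)}{\V(\theta)}.
\]
This decomposition is exact under joint Gaussianity. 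It means the conditional covariance is an affine, decreasing function of $s(\Sigma)$.

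Substituting, $\Cov_\Sigma(r,\theta)$ splits into two terms. The first is $s(\Sigma)$ times $1-\Cov(\phi_H,\theta)^2/(\V(\theta)\V(\phi_H))$, which equals $s(\Sigma)\,\V(\phi_H\mid\theta)/\V(\phi_H)$ once the normalization is used. The second is $-\Cov(\phi_H,\theta)\Cov(\phi_H,\phi_{AI}\mid\theta)/\V(\phi_H)$, which under the normalization reduces to a multiple of $\Cov(\phi_H,\phi_{AI}\mid\theta)$. This is exactly the identity used in the proof of Lemma \ref{lemma:beta-lower-bound}, so I would invoke that computation instead of redoing it.

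Bounding $s(\Sigma)\ge\delta$ and $\Cov(\phi_H,\phi_{AI}\mid\theta)\le\gamma$ gives $\Cov_\Sigma(r,\theta)\ge\kappa_\gamma$. The denominator $\V(r)$ does not depend on $\Sigma$, since it involves only $\V(\phi_{AI})$, $\V(\phi_H)$ and $\Cov(\phi_H,\phi_{AI})$. Hence $\beta(\Sigma)\ge\kappa_\gamma/\V(r)=b_{\mathrm{pos}}>0$, and positivity follows from the assumed strict inequality. Both the lower bound on $\Cov_\Sigma(r,\theta)$ and the constant factor must match the constant in Proposition \ref{prop:mse-positive-corr}. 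The main bookkeeping risk in this step is the normalization, which must be applied consistently so that the coefficient multiplying $\gamma$ comes out as exactly $1$.

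\emph{Step 2 (domination).} Under any $\Sigma$, joint Gaussianity gives the conditional law of $\theta$ given $(\phi_H,r)$. It is $N\bigl(\phi_H+\beta r,\ \sigma_H^2-\beta^2\V(r)\bigr)$, using $\Cov(\phi_H,r)=0$ and $\E[\theta\mid\phi_H]=\phi_H$. The posterior variance must be positive, so $\beta(\Sigma)^2\V(r)<\sigma_H^2$. Since $0<b_{\mathrm{pos}}\le\beta$, this gives $\sigma_{b}^2>0$, so $P_{\mathrm{low}}$ is well defined.

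Next we verify pointwise that $P_{\mathrm{low}}(\phi_H,r)\le \PP_\Sigma(\theta\ge\tau\mid\phi_H,r)\le P_{\mathrm{high}}(\phi_H,r)$ for every $\Sigma$. Take the case $y\ge0$ and $\mu_b\ge\tau$. Raising $\beta$ from $b$ increases the mean, which stays $\ge\tau$. It also lowers the variance, and a smaller variance can only increase $\Phi$ of a nonnegative standardized argument. The case for $P_{\mathrm{high}}$ is symmetric, using that the variance is at most $\sigma_H^2$ and the argument is negative.

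Conditional on $(\phi_H,r)$, the rule $d_J$ deviates from $d_H$ only when the true conditional success probability is certified to lie on the correct side of $c$. Each deviation therefore weakly increases the conditional expected utility $d(\PP(\theta\ge\tau\mid\cdot)-c)$. Taking expectations gives weak dominance for all $\Sigma\in\mathcal U$.

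For strictness when $\beta(\Sigma)>b_{\mathrm{pos}}$, we exhibit a positive-probability region where $d_J\ne d_H$ and the certified inequality is strict. One such region has $\phi_H$ slightly below $x_H$, so $d_H=0$, and $r$ large positive, so that $P_{\mathrm{low}}>c$. There the bounding inequality is strict because $\beta>b_{\mathrm{pos}}$ and the variance is strictly smaller. This region has positive Lebesgue measure, so it has positive probability under the nondegenerate Gaussian law of $(\phi_H,r)$. The parenthetical in the statement already assumes this.

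The step I expect to be the real obstacle is Step 1's algebra. Everything else transfers verbatim from Proposition \ref{prop:symmetric-dominates} with $b$ replaced by $b_{\mathrm{pos}}$.
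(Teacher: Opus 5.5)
Your proposal is correct and follows the paper's proof essentially step for step. You use the same law-of-total-covariance identity $\Cov_\Sigma(r,\theta)=s(\Sigma)\,\V(\phi_H\mid\theta)/\V(\phi_H)-\Cov_\Sigma(\phi_H,\phi_{AI}\mid\theta)\ge\kappa_\gamma$, note that $\V(r)$ is fixed over $\mathcal U$, and then reuse the envelope-based dominance and strictness argument from Proposition \ref{prop:symmetric-dominates} with $b=b_{\mathrm{pos}}$; your Step 2 simply spells out what the paper invokes as applying that argument verbatim.
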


    For binary decision-making, robust complementarity again hinges on the AI signal being informative about the \emph{human residual} rather than merely replicating the expert's mistakes. 
    A similar condition
    \(
    \kappa_\gamma=\delta\,\frac{\V(\phi_H\mid \theta)}{\V(\phi_H)}-\gamma
    \)
    governs feasibility: $\delta$ must outweigh the worst-case shared-error term $\gamma$ after scaling by the
    human-uncertainty factor $\V(\phi_H\mid \theta)/\V(\phi_H)$. 
    When $\kappa_\gamma$ is small, the lower bound of$b_{\mathrm{pos}}$ collapses toward $0$, and the symmetric policy cannot have enough uniformly safe evidence to overrule $d_H$.

\subsection{Beyond the Gaussian model}
\label{sec:non-gaussian}

We finally show that the core results on conditions for improvement in mean squared error extend beyond the linear-Gaussian model to a substantially more general class of \emph{nonlinear but monotone} signal structures which do not assume joint normality. We generalize the model as follows. Let $\theta$ be a real-valued random variable with $\E[\theta]=0$ and $\V(\theta)<\infty$.  The signals follow the generative model
\[
    \phi_H = f(\theta) + \varepsilon_H,\qquad
    \phi_{AI} = g(\theta) + \varepsilon_{AI},
\]
where $f,g:\mathbb{R}\to\mathbb{R}$ are strictly increasing and continuously differentiable and $\epsilon_H$ and $\epsilon_{AI}$ are random variables with expectation 0. We assume that $(\epsilon_H, \epsilon_{AI}) \perp \theta$ and that both have finite second moments. Monotonicity of $f$ and $g$ implies that each agent's signal will increase in expectation with the ground truth state, an analogue of positive correlation in the Gaussian state. The joint distribution $(\theta, \phi_H, \phi_{AI})$ is fully specified by the tuple $\xi = (f, g, P(\theta), P(\epsilon_{H}, \epsilon_{AI}))$. An uncertainty set $\mathcal{U}$ is thus a set of potential values for $\xi$. In analogy to the uncertainty sets defined earlier for the Gaussian case, we will consider $\mathcal{U}$ for which all $\xi \in \mathcal{U}$ agree on the distributions $P(\theta, \phi_H)$ and $P(\phi_H, \phi_{AI})$. Uncertainty lies in the relationship between the AI signal and the truth, reflected in $g$ and the distribution of $\epsilon_{AI}$.

We study conditions in this model where a decision rule that uses $\phi_{AI}$ has robustly lower mean squared error. We show that conditions analogous to those in the Gaussian case suffice: the AI signal must be sufficiently informative about $\theta$, and the errors must be negatively dependent. To formalize the first condition, we assume that the AI signal has a uniformly bounded-below derivative:
\begin{equation}
    g'(t) \;\geq\; k > 0 \quad\text{for all } t\in\mathbb{R}. \label{eq:g-deriv-lb}
\end{equation}
This captures the idea that the AI is \emph{uniformly sensitive} to changes in $\theta$, in the sense that increasing $\theta$ always pushes $\phi_{AI}$ up at least at rate $k$.

To formalize the idea of negative dependence in errors, we consider the condition that 
\begin{align}
    \E[\varepsilon_{AI}\mid \varepsilon_{H}] \text{ is nonincreasing in } \varepsilon_H \text{ almost surely}, \label{eq:neg-expectation}
\end{align} so that errors in the positive direction for the human imply that the AI will, on average, have smaller errors. This condition is satisfied by negative conditional covariance in the Gaussian case but now allows us to extend the results without distributional assumptions.

\begin{figure}[t]
    \centering
    \includegraphics[width=0.32\textwidth]{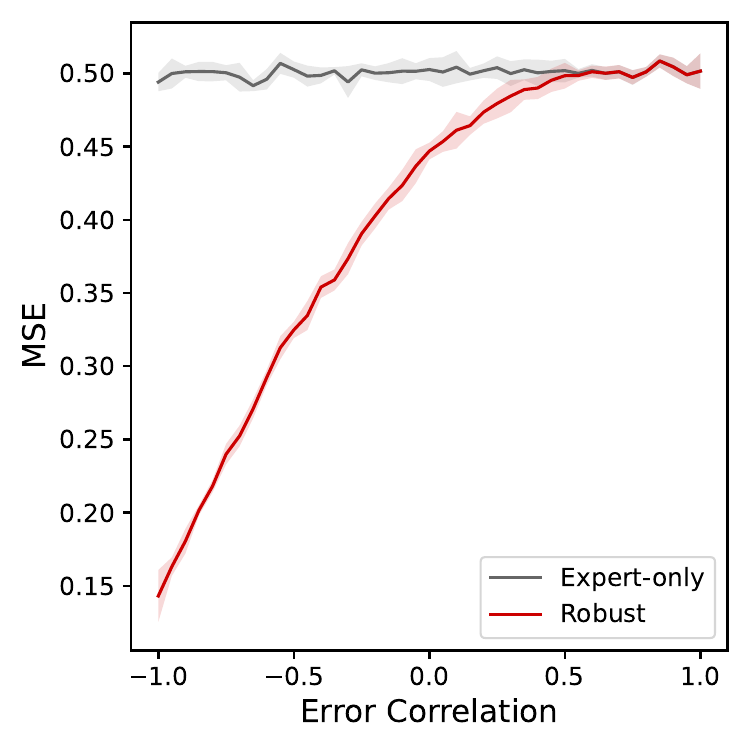}
    \caption{MSE of the expert-only estimator and our robust estimator as we vary the correlation in expert and LLM errors (lower is better). The MSE of our estimator strongly dominates the expert-only baseline when errors are negatively correlated. }
    \label{fig:vary_correlation}
    \vspace{-4mm}
\end{figure}

The core of the strategy we use to construct a loss-improving decision rule is the \emph{nonlinear residual} of the AI signal after conditioning on the human's signal:
\begin{equation}
    \tilde r(\phi_H, \phi_{AI}) \triangleq \phi_{AI} - \E[\phi_{AI}\mid \phi_H].
    \label{eq:nonlinear-residual}
\end{equation}

Our goal is to show that under the structural assumptions above, $\tilde r$ is \emph{still positively correlated with $\theta$}, and that this can be uniformly exploited to improve squared-error risk over any predictor that uses only $\phi_H$.

\begin{prop}\label{prop:nonlinear-residual-cov}
If Equations \ref{eq:g-deriv-lb} and \ref{eq:neg-expectation} hold for all $\xi \in \mathcal{U}$, $\tilde r$  satisfies
\[
    \Cov(\tilde r,\theta)
    \;\geq\;
    k\,\E\big[\V(\theta\mid\phi_H)\big].
\] 
for all $\xi \in \mathcal{U}$ as well. 
\end{prop}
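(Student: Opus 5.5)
The plan is to write the covariance of the nonlinear residual as an expected conditional covariance given $\phi_H$. Then I would split it into a ``signal'' part coming from $g$ and a ``noise'' part coming from $\varepsilon_{AI}$, and bound each part separately using Chebyshev's association inequality. That inequality says that two nondecreasing functions of the same real random variable have nonnegative covariance. The argument is carried out for a fixed $\xi\in\mathcal U$. Since it uses only \eqref{eq:g-deriv-lb} and \eqref{eq:neg-expectation}, the bound then holds uniformly over $\mathcal U$.

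\textbf{Step 1 (reduction).} Since $\E[\tilde r]=0$ and $\E[\tilde r\mid\phi_H]=0$, the law of total covariance gives
\[
\Cov(\tilde r,\theta)=\E\big[\Cov(\tilde r,\theta\mid\phi_H)\big]=\E\big[\Cov(\phi_{AI},\theta\mid\phi_H)\big].
\]
The second equality holds because $\E[\phi_{AI}\mid\phi_H]$ is a constant once we condition on $\phi_H$. Substituting $\phi_{AI}=g(\theta)+\varepsilon_{AI}$ yields
\[
\Cov(\tilde r,\theta)=\E\big[\Cov(g(\theta),\theta\mid\phi_H)\big]+\E\big[\Cov(\varepsilon_{AI},\theta\mid\phi_H)\big].
\]
All terms are integrable by the finite second moment assumptions. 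For $g(\theta)$ I would note that it inherits a finite second moment because $\phi_{AI}$ and $\varepsilon_{AI}$ both have one.

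\textbf{Step 2 (signal term).} Write $g(\theta)=k\theta+h(\theta)$ with $h(t)=g(t)-kt$. By \eqref{eq:g-deriv-lb}, $h'\ge 0$, so $h$ is nondecreasing. Conditionally on $\phi_H=x$, both $h(\theta)$ and $\theta$ are nondecreasing functions of $\theta$. Chebyshev's inequality, applied under the regular conditional law of $\theta$ given $\phi_H=x$, therefore gives $\Cov(h(\theta),\theta\mid\phi_H)\ge0$. It follows that
\[
\Cov(g(\theta),\theta\mid\phi_H)\ge k\V(\theta\mid\phi_H),
\]
and taking expectations produces exactly the claimed lower bound $k\,\E[\V(\theta\mid\phi_H)]$.

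\textbf{Step 3 (noise term).} It remains to show that $\E[\Cov(\varepsilon_{AI},\theta\mid\phi_H)]\ge0$. Let $m(e)=\E[\varepsilon_{AI}\mid\varepsilon_H=e]$, which is nonincreasing by \eqref{eq:neg-expectation}. Because $(\varepsilon_H,\varepsilon_{AI})\perp\theta$, we have $\E[\varepsilon_{AI}\mid\theta,\varepsilon_H]=m(\varepsilon_H)$. The $\sigma$-algebra of $\phi_H=f(\theta)+\varepsilon_H$ is contained in that of $(\theta,\varepsilon_H)$. The tower property then gives
\[
\Cov(\varepsilon_{AI},\theta\mid\phi_H)=\Cov(m(\varepsilon_H),\theta\mid\phi_H).
\]
On the event $\phi_H=x$ we have $\varepsilon_H=x-f(\theta)$. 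Since $f$ is strictly increasing and $m$ is nonincreasing, $m(x-f(\theta))$ is a nondecreasing function of $\theta$. Chebyshev's inequality applied to the conditional law of $\theta$ given $\phi_H=x$ then shows that this conditional covariance is nonnegative for almost every $x$. Combining Steps 2 and 3 proves the proposition.

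\textbf{Main obstacle.} The delicate step is Step 3. It must be justified that, conditionally on $\phi_H=x$, $\varepsilon_H$ is (almost surely) the deterministic function $x-f(\theta)$ of $\theta$. It must also be justified that replacing $\varepsilon_{AI}$ by $m(\varepsilon_H)$ is legitimate inside the conditional covariance. For this I would work with regular conditional distributions of $(\theta,\varepsilon_H,\varepsilon_{AI})$ given $\phi_H$. I would also use the fact that $m$ is only defined almost surely, so ``nonincreasing'' has to be read on a version of $m$, which \eqref{eq:neg-expectation} supplies. Everything else is routine moment bookkeeping.
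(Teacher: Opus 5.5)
Your proposal is correct and follows essentially the same route as the paper. Both reduce $\Cov(\tilde r,\theta)$ to $\E[\Cov(\phi_{AI},\theta\mid\phi_H)]$, bound the signal part via $g(t)=kt+h(t)$ with $h$ nondecreasing, and handle the noise part via $m(x-f(\theta))$, which is nondecreasing in $\theta$, applying Chebyshev's association inequality to each under the conditional law of $\theta$ given $\phi_H$. The differences are minor: the paper proves the pointwise conditional bound as a separate lemma before integrating, and you justify replacing $\varepsilon_{AI}$ by $m(\varepsilon_H)$ by conditioning on $(\theta,\varepsilon_H)$, which is a slightly cleaner form of the paper's argument.
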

When the residual is guaranteed to have positive correlation with $\theta$, we can construct a decision rule using a strategy very similar to the Gaussian case: start with the regression of $\theta$ on $\phi_H$ and then add $b\cdot \tilde{r}$ for an appropriately chosen coefficient $b$. 
\begin{prop}\label{prop:neg-errors-complementarity-mse}
    Consider the optimal human-only decision rule $d_H(\phi_H) = \E[\theta|\phi_H]$ and the class of joint decision rules $d_b(\phi_H, \phi_{AI}) = \E[\theta|\phi_H] + b \cdot \tilde{r}(\phi_H, \phi_{AI}) $. Whenever the conditions of Equations \ref{eq:g-deriv-lb} and \ref{eq:neg-expectation} hold for all $\xi \in \mathcal{U}$, and $E[\V(\theta\mid\phi_H)]>0$,
    there exists a $b > 0$ such that $L_\xi(d_b) < L_\xi(d_H)$ for all $\xi \in \mathcal{U}$. 
\end{prop}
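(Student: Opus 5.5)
The plan is to expand the squared-error loss of $d_b$ around $d_H$ and choose $b$ uniformly small, using Proposition \ref{prop:nonlinear-residual-cov} to control the linear term.

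Write $m(\phi_H) = \E[\theta\mid\phi_H]$. Then
\[
L_\xi(d_b) - L_\xi(d_H) = \E[(m + b\tilde r - \theta)^2] - \E[(m-\theta)^2] = -2b\,\E[\tilde r(\theta - m)] + b^2\,\E[\tilde r^2].
\]

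\textbf{First term.} I would show $\E[\tilde r(\theta-m)] = \Cov(\tilde r,\theta)$. Since $\E[\tilde r\mid\phi_H]=0$, we have $\E[\tilde r]=0$. Moreover $\E[\tilde r\, m(\phi_H)] = \E[m(\phi_H)\E[\tilde r\mid\phi_H]] = 0$. Hence $\E[\tilde r(\theta-m)] = \E[\tilde r\theta] = \Cov(\tilde r,\theta)$. Proposition \ref{prop:nonlinear-residual-cov} then gives, for every $\xi\in\mathcal U$,
\[
\Cov(\tilde r,\theta) \ge k\,\E[\V(\theta\mid\phi_H)] =: c_0 > 0 .
\]
The key point is that $c_0$ does not depend on $\xi$, because all $\xi\in\mathcal U$ share $P(\theta,\phi_H)$.

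\textbf{Second term.} $\E[\tilde r^2] = \E[\V(\phi_{AI}\mid\phi_H)]$ is determined by $P(\phi_H,\phi_{AI})$, which is also common to all $\xi\in\mathcal U$. Call it $v$. It is finite since $\V(\phi_{AI})\le$ finite: $g(\theta)$ has finite variance if $g$ has bounded growth, and in any case $v<\infty$ is needed for the losses to be finite. So $v$ is likewise uniform over $\mathcal U$.

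\textbf{Choice of $b$.} For every $\xi$,
\[
L_\xi(d_b)-L_\xi(d_H) \le -2bc_0 + b^2 v .
\]
This is strictly negative for any $0<b<2c_0/v$. If $v=0$, any $b>0$ works, although $v=0$ would force $\tilde r=0$ a.s. and contradict $c_0>0$. Taking $b=c_0/v$ gives a uniform improvement of at least $c_0^2/v$. This mirrors Proposition \ref{prop:mse-positive-corr}.

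\textbf{Main obstacle.} The only delicate step is confirming that both $c_0$ and $v$ are genuinely invariant across $\mathcal U$, and that the second moments are finite, so that a single $b$ works simultaneously for all $\xi$. Invariance follows from the assumption that $P(\theta,\phi_H)$ and $P(\phi_H,\phi_{AI})$ are fixed. Finiteness I would take as implicit in $L_\xi$ being well defined (finite second moments of the signals). All the substantive work is already contained in Proposition \ref{prop:nonlinear-residual-cov}.
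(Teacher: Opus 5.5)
Your proposal is correct and follows essentially the same route as the paper. You write the loss gap as $b^2\V(\tilde r)-2b\,\Cov(\theta-\E[\theta\mid\phi_H],\tilde r)$, drop the $\E[\theta\mid\phi_H]$ part of the cross term via $\E[\tilde r\mid\phi_H]=0$, invoke Proposition~\ref{prop:nonlinear-residual-cov} for the $\xi$-independent bound $k\,\E[\V(\theta\mid\phi_H)]$, and use that $\V(\tilde r)$ is fixed by $P(\phi_H,\phi_{AI})$ to take any $0<b<2k\,\E[\V(\theta\mid\phi_H)]/\V(\tilde r)$. Like the paper, you treat finiteness of $\V(\tilde r)$ as implicit rather than derived; that is reasonable, since $g'\ge k$ alone does not ensure $\V(g(\theta))<\infty$.
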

That is, negative dependence in errors is sufficient to ensure robust complementarity under mean squared error in a much broader class of non-Gaussian settings. 

\section{Synthetic Experiments}

We start by using simulation to illustrate how the performance of the robust decision rules constructed above compares to human-only decision making as the joint distribution of errors varies. We simulate data from the Gaussian measurement model, which allows us to directly control the error correlation structure by modifying the covariance matrix between human and AI signals. 
We defer specific details about the dataset to Appendix \ref{appx:synth_data_details}.

\begin{figure}[t]
    \centering
    \includegraphics[width=0.32\textwidth]{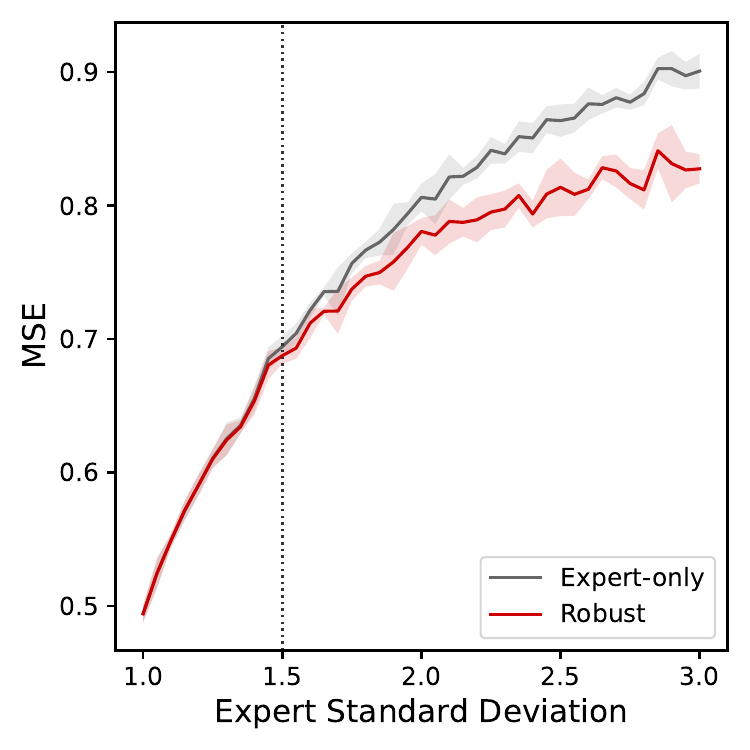}
    \caption{MSE of the expert-only estimator and our robust estimator as we vary the expert uncertainty under the positive correlation setting. Our robust estimator begins to dominate the expert-only baseline when human uncertainty exceeds that of the AI signal. The vertical dotted line denotes the AI signal standard deviation.}
    \label{fig:vary_human_uncertainty}
    \vspace{-3mm}
\end{figure}

\subsection{Synthetic Experiments for MSE}

\paragraph{Varying human and AI error correlations.} Figure \ref{fig:vary_correlation} reports the MSE of both estimators as a function of the error correlation $\rho$. Consistent with our theoretical analysis, the robust estimator yields the largest gains when errors are negatively correlated. As $\rho$ becomes large, the improvement over the human-only baseline diminishes: despite the fact that the AI signal contains useful information, it becomes difficult to robustly benefit from.

\paragraph{Varying human uncertainty under positive correlation.} We next fix the errors to be positively correlated ($\rho = 0.8$) while varying the expert uncertainty. Figure \ref{fig:vary_human_uncertainty} shows that under positive correlation, the robust estimator outperforms the expert-only baseline only when the uncertainty in the human signal exceeds that of the AI signal, with significant gains limited to the case where the human signal is highly noisy.

\begin{figure}[t]
    \centering
    \includegraphics[width=0.32\textwidth]{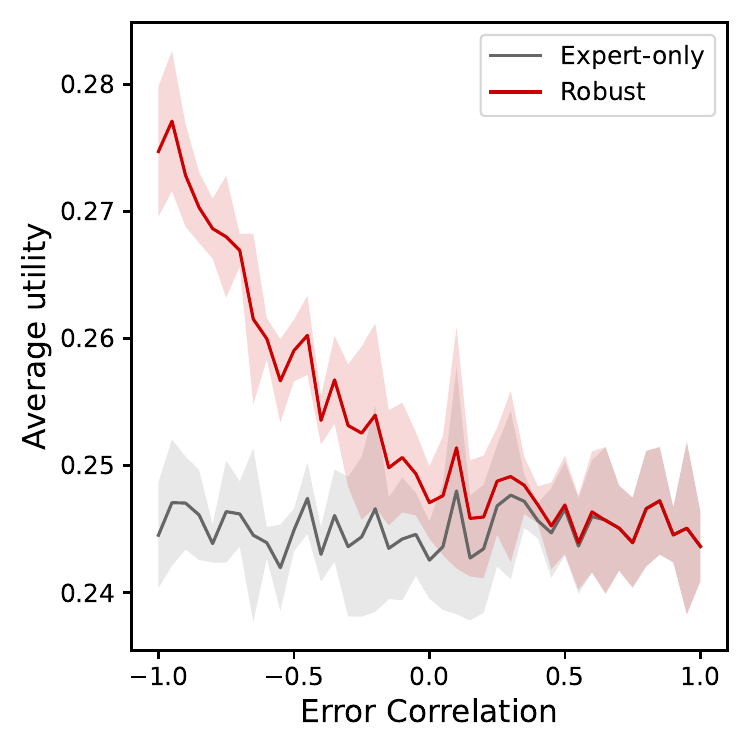}
    \caption{Average utility of the expert-only investment policy $d_H$ and the robust symmetric policy $d_{\mathrm{sym}}$ as we vary the correlation $\rho$ between human and AI errors (higher is better). Shaded regions indicate $\pm 1$ standard deviation across seeds.}
    \label{fig:robust_binary_investment_sweep}
    \vspace{-2mm}
\end{figure}

\subsection{Synthetic Experiments for Investment Decisions}

\begin{figure}[t]
    \centering
    \includegraphics[width=0.32\textwidth]{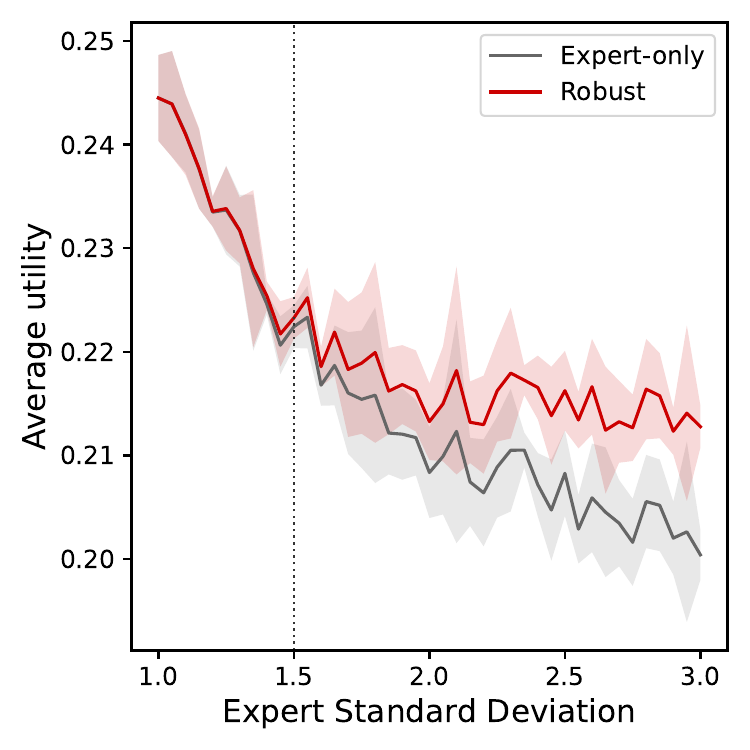}
    \caption{Average utility of the expert-only investment policy $d_H$ and the robust symmetric policy $d_{\mathrm{sym}}$ as we vary expert uncertainty $\sigma_h$ under positive correlation (we fix $\rho=0.8$ and $\sigma_a=1.5$). Shaded regions indicate $\pm 1$ standard deviation across seeds. The vertical dotted line denotes the AI signal standard deviation.}
    \label{fig:robust_binary_investment_sigmah_sweep}
    \vspace{-3mm}
\end{figure}

\paragraph{Varying human and AI error correlations.} Figure~\ref{fig:robust_binary_investment_sweep} reports the average utility as a function of the error correlation $\rho$. Consistent with our theory, the robust symmetric policy yields the largest gains when errors are negatively correlated; in this regime, the residual $r$ is informative about the human residual uncertainty, and incorporating it improves decisions relative to $d_H$. As $\rho$ becomes large, the residualized machine signal becomes less complementary, and the robust policy reverts to the expert-only rule.

\paragraph{Varying human uncertainty under positive correlation.} We next fix a positive correlation level ($\rho=0.75$) while varying the expert uncertainty. Figure~\ref{fig:robust_binary_investment_sigmah_sweep} shows that when the human signal is already precise (small $\sigma_h$), the robust procedure matches the human-only policy, reflecting limited complementarity from the AI under strong positive correlation. As the human signal becomes noisier, the incremental information in the residualized AI signal becomes more valuable, and $d_{\mathrm{sym}}$ begins to dominate $d_H$ in average utility.

\begin{figure*}[t]
\centering

\begin{minipage}[t]{\columnwidth}
    \centering
    \includegraphics[width=\linewidth,height=0.3\textheight,keepaspectratio]{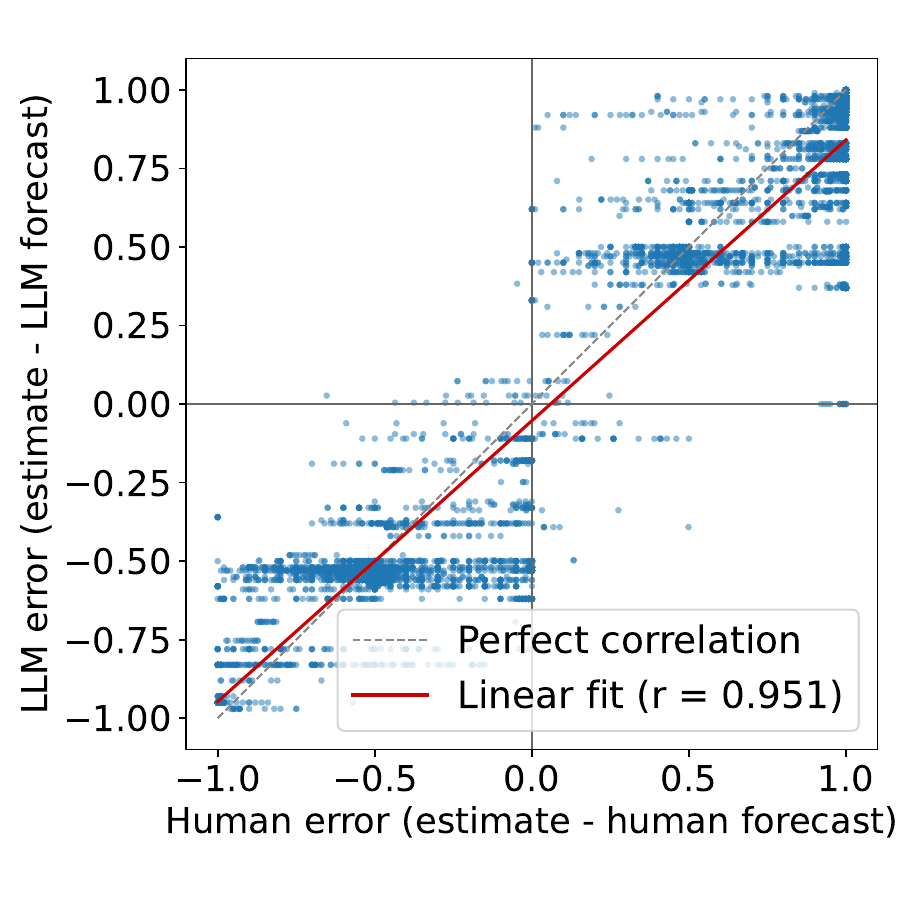}
\end{minipage}
\hfill
\begin{minipage}[t]{\columnwidth}
    \centering
    \includegraphics[width=\linewidth,height=0.3\textheight,keepaspectratio]{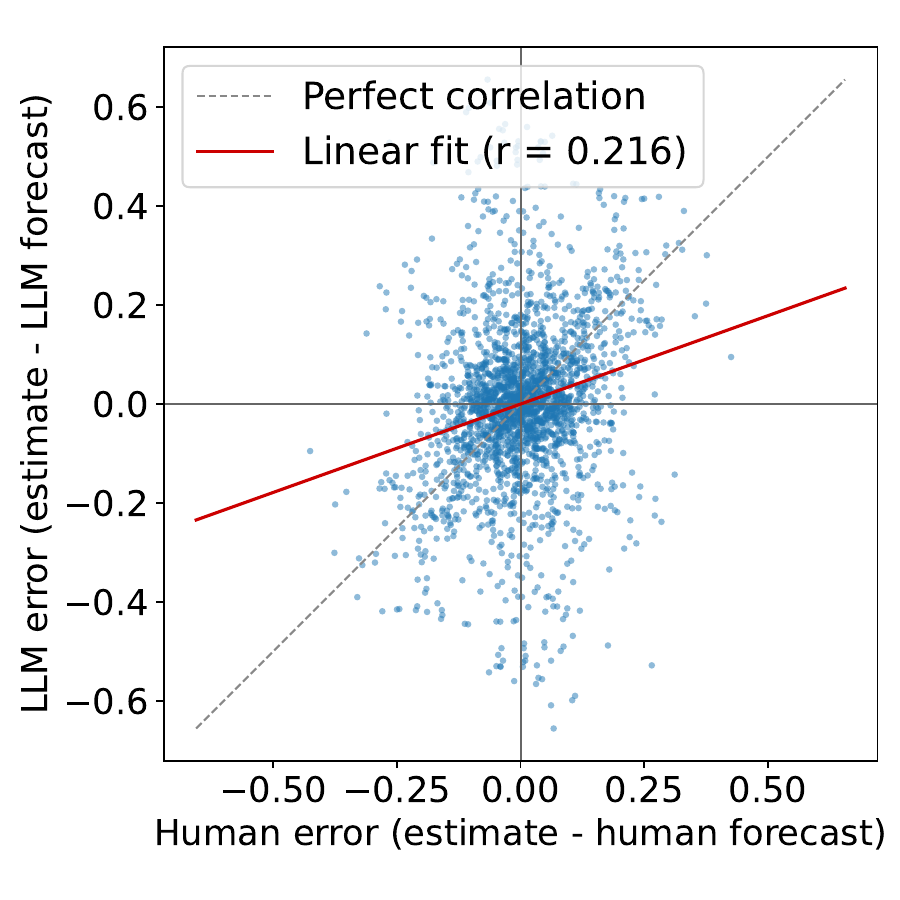}
\end{minipage}

\caption{Correlation of errors of LLM forecasts vs. human forecasts on ForecastBench (left) and TESS studies (right). We find that \textbf{errors are positively correlated}, which is the difficult setting for complementarity.}
\label{fig:social_science_results}
\vspace{-2mm}
\end{figure*}

\begin{table*}[t]
\centering
\caption{We report the MSE and correlation structure resulting from various prompting strategies. Human MSE denotes the human signal MSE. LLM MSE denotes the LLM signal MSE. Calibrated Human MSE denotes the optimal predictor's MSE based on the human signal. $\rho(\epsilon_{H}, \epsilon_{AI})$ denotes the Pearson correlation coefficient over the human error and the LLM error. Robust MSE represents the performance of the robust combination of the LLM and human signal. Note that human rationale and additional context fields are not available for the TESS studies. Therefore, we only run the Human-Divergent Reasoning and Restricted Context strategies for ForecastBench.}
\label{tab:prompting_results}
\small
\setlength{\tabcolsep}{4pt}
\renewcommand{\arraystretch}{0.95}

\begin{tabular}{@{}lccccc@{}}
\toprule
\textbf{Dataset/Prompting Strategy} & \textbf{Human MSE} & \textbf{LLM MSE} & \textbf{Calibrated Human MSE} & $\rho(\epsilon_H,\epsilon_{AI})$ & \textbf{Robust MSE} \\
\midrule
\multicolumn{4}{l}{\textbf{ForecastBench}} \\
\midrule
Baseline & 0.514 & 0.451 & 0.154 & 0.963 & 0.154 \\
Direct Error Correction & 0.514 & 0.506 & 0.154 & 0.995 & 0.154 \\
Contrarian Reasoning & 0.514 & 0.216 & 0.154 & 0.498 & 0.153 \\
Self-Divergent Reasoning & 0.514 & 0.461 & 0.154 & 0.966 & 0.154 \\
Human-Divergent Reasoning & 0.514 & 0.453 & 0.154 & 0.985 & 0.154 \\
Restricted Context & 0.514 & 0.439 & 0.154 & 0.967 & 0.154 \\
\midrule
\multicolumn{4}{l}{\textbf{TESS Studies}} \\
\midrule
Baseline & 0.0104 & 0.0285 & 0.00524 & 0.216 & 0.00510 \\
Direct Error Correction & 0.0104 & 0.0230 & 0.00524 & 0.175 & 0.00520 \\
Contrarian Reasoning & 0.0104 & 0.0351 & 0.00524 & -0.338 & 0.00490 \\
Self-Divergent Reasoning & 0.0104 & 0.0185 & 0.00524 & 0.213 & 0.00519 \\
\bottomrule
\end{tabular}%
\end{table*}

\section{Real World Experiments}

\paragraph{Datasets and Models.} We conduct experiments on two real-world forecasting benchmarks. The first is ForecastBench \cite{kargerforecastbench}, which provides human forecasts (n=7,383) for prediction questions drawn from time series datasets and prediction market platforms. The second is a suite of 43 different social science survey experiments conducted via the Time-Sharing Experiments for the Social Sciences (TESS) project \cite{tess} between 2016 and 2022, for which \citet{hewitt2024predicting} collected human and LLM forecasts of effect sizes. We conduct all experiments with GPT-5 and GPT-5-mini \citep{singh2025openai}.

\subsection{Real-World Empirical Results} We ask whether current models naturally satisfy conditions on the error structure in real-world forecasting experiments. In Figure \ref{fig:social_science_results}, we find that contrary to this requirement, human and LLM errors are positively correlated, quite strongly for ForecastBench \cite{kargerforecastbench} and more modestly (but still non-trivially) for the TESS studies. This raises the question of whether such patterns are easily modifiable by changes in prompting to the model, or whether deeper interventions would be required to surface complementary information.

\subsection{Potential Mitigation Strategies} 

We investigate whether post-hoc prompting strategies can induce negative correlation in prediction errors by encouraging: (i) explicit prediction and correction of human errors; (ii) divergent or contrarian reasoning, either relative to provided human rationales or to the model's own initial reasoning traces; and (iii) reliance on different information sets, forcing the model to attend to different signals. Details of each prompt strategy are provided in Appendix \ref{appx:mitigation_strategies} (see Figures \ref{fig:tess_prompting} and \ref{fig:forecastbench_prompting}).

Critically, while a shift in correlation is evident for several strategies, we observe only a single instance (out of 12 combinations of prompting strategies $\times$ tasks) in which correlation shifts from positive to negative (see Figures \ref{fig:tess_prompting} and \ref{fig:forecastbench_prompting}). 
This suggests that enforcing the error structures required for complementarity via prompting alone is nontrivial and unreliable. More broadly, these results suggest that post-hoc interventions are unlikely to consistently induce the error structure necessary for robust human-AI complementarity. This highlights the necessity for new training methods or objectives to optimize for outputs in this manner, when predictions will be used to complement human signals.

In Table \ref{tab:prompting_results}, we observe that very few prompting strategies lead to gains of the combined robust MSE over the optimal human MSE on the forecasting task. 
This is explained by the fact that all cases remain in the positively correlated error regime, where our theory expects to have no or little improvement, with the sole exception of Contrarian Reasoning on TESS. 
In this particular case, we see a slight improvement in MSE for our robust combination estimator. We note that improvements are small as the human signal is substantially stronger than the LLM signal (optimal fit MSE 0.0052 vs. 0.0285).

\section{Conclusion}
In this work, we take a step towards understanding the conditions under which human-AI collaboration provably improves upon either agent alone. Our theoretical analysis reveals that complementarity is not guaranteed by predictive accuracy alone, but that the error correlation structure between human and model predictions plays a significant role in determining whether collaboration yields better performance. We derive precise conditions on this structure that must be satisfied for complementary gains to emerge, and show that these conditions can be surprisingly stringent in practice. Empirically, we find that current models do not naturally satisfy these conditions across a range of real-world forecasting benchmarks. Simple post-hoc mitigation strategies--including prompting interventions designed to induce negative error correlation--show limited effectiveness in enforcing the required error structure. These findings suggest that achieving genuine complementarity may require more fundamental interventions, potentially in earlier stages of the training paradigm.  
Our work highlights an important gap between the promise of human-AI collaboration and its current realization. We hope these theoretical and empirical observations motivate further investigation into strategies that explicitly optimize for and prioritize complementarity.

\section*{Impact Statement}
This work characterizes the conditions under which human-AI collaboration yields robust improvements, aiming to support more informed deployment of AI systems in consequential domains. Our framework provides decision makers with principled guidance on when and how AI predictions can complement human judgment, rather than simply replace it. More broadly, we hope this work motivates further investigation into methods that prioritize complementarity. 

\section*{Acknowledgments} 
This material is based upon work supported by the AI Research Institutes Program funded by the National Science Foundation under AI Institute for Societal Decision Making (AI-SDM), Award No. 2229881.

\bibliography{example_paper}
\bibliographystyle{icml2026}

\newpage
\appendix
\onecolumn
    
\section{Proofs}

\subsection{Proof of Proposition \ref{prop:mse-linear-minimax}}
	Fix a covariance matrix $\Sigma$ for $(\theta,\phi_H,\phi_{AI})$ and write $X := (\phi_H,\phi_{AI})^\top$. A (deterministic) decision rule is a measurable function $d:\mathbb{R}^2\to\mathbb{R}$, and its mean-squared prediction loss under $\Sigma$ is
	\[
	L_\Sigma(d)=\E_\Sigma\big[(\theta-d(X))^2\big].
	\]
	
	Note that by Assumption \ref{assumption:U}, the marginal law of $X=(\phi_H,\phi_{AI})$ is the same for all $\Sigma\in\mathcal{U}$. Denote this common marginal distribution by $P_X$ and write $\langle f,g\rangle := \E_{P_X}[f(X)g(X)]$ for the $L^2(P_X)$ inner product. Let $S:=\mathrm{span}\{\phi_H,\phi_{AI}\}\subset L^2(P_X)$ be the two-dimensional linear subspace of (square-integrable) linear functions of $X$. Given any decision rule $d\in L^2(P_X)$, let $d^{\mathrm{lin}}\in S$ be its orthogonal projection onto $S$; equivalently, $d^{\mathrm{lin}}$ is the unique linear function $a_d\phi_H+b_d\phi_{AI}$ that minimizes $\E_{P_X}[(d(X)-a\phi_H-b\phi_{AI})^2]$. Define the residual $u:=d-d^{\mathrm{lin}}$. By the characterization of orthogonal projections in Hilbert spaces, $u$ is orthogonal to $S$, i.e.,
	\begin{align}\label{eq:u-orth}
		\E_{P_X}[u(X)\phi_H]=0
		\qquad\text{and}\qquad
		\E_{P_X}[u(X)\phi_{AI}]=0.
	\end{align}
	Fix any $\Sigma\in\mathcal{U}$. Since $(\theta,X)$ is jointly Gaussian under $\Sigma$, the conditional mean $\mu_\Sigma(X):=\E_\Sigma[\theta\mid X]$ is an element of $S$; that is, $\mu_\Sigma(X)=a_\Sigma\phi_H+b_\Sigma\phi_{AI}$ for some coefficients depending on $\Sigma$. The tower property gives
	\begin{align*}
		\E_\Sigma[u(X)\theta]
		= \E_\Sigma\big[u(X)\,\E_\Sigma[\theta\mid X]\big]
		= \E_{P_X}[u(X)\mu_\Sigma(X)].
	\end{align*}
	But $\mu_\Sigma\in S$ and $u\perp S$ by \eqref{eq:u-orth}, hence $\E_{P_X}[u\mu_\Sigma]=0$ and therefore $\E_\Sigma[u\theta]=0$. Similarly, since $d^{\mathrm{lin}}\in S$, we have $\E_{P_X}[u d^{\mathrm{lin}}]=0$ and thus $\E_\Sigma[ud^{\mathrm{lin}}]=0$. Expanding the squared loss and using these orthogonality relations yields
	\begin{align*}
		L_\Sigma(d)
		&= \E_\Sigma\big[(\theta-d^{\mathrm{lin}}(X)-u(X))^2\big]\\
		&= \E_\Sigma\big[(\theta-d^{\mathrm{lin}}(X))^2\big] + \E_{P_X}[u(X)^2]
		-2\E_\Sigma\big[u(X)\theta\big] + 2\E_\Sigma\big[u(X)d^{\mathrm{lin}}(X)\big]\\
		&= L_\Sigma(d^{\mathrm{lin}}) + \E_{P_X}[u(X)^2]\\
		&\ge L_\Sigma(d^{\mathrm{lin}}).
	\end{align*}
	Thus, for \emph{every} $\Sigma\in\mathcal{U}$, the linear projection $d^{\mathrm{lin}}$ weakly improves on $d$.

	\subsection{Auxiliary Lemmas for MSE Proofs}
	We record two elementary lemmas that will be used repeatedly in the proofs of Propositions
	\ref{prop:mse-gaussian-negcorrelated}--\ref{prop:mse-positive-corr}. All expectations, variances, and covariances are taken with respect to the joint Gaussian law $N(0,\Sigma)$ under the covariance matrix currently under discussion.
	
	\begin{lemma}[Loss gap for linear aggregation]\label{lem:mse-loss-gap}
		Fix any covariance matrix $\Sigma$ for $(\theta,\phi_H,\phi_{AI})$. Let $d_H(\phi_H):=\phi_H$ and, for any $b\in\mathbb{R}$, define $d_b(\phi_H,\phi_{AI}):=\phi_H+b\phi_{AI}$. Under mean squared error loss $\ell(d,\theta)=(d-\theta)^2$, the difference in expected losses satisfies
		\begin{align}\label{eq:loss-gap-db}
			L_\Sigma(d_b)-L_\Sigma(d_H)
			= b^2\V(\phi_{AI}) - 2b\,\Cov(\theta-\phi_H,\phi_{AI}).
		\end{align}
	\end{lemma}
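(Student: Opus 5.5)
The identity is a direct expansion of the squared loss, so the plan is to write $\theta - d_b = (\theta-\phi_H) - b\phi_{AI}$ and expand. Set $e := \theta - \phi_H$, the human-only prediction error. Then
\[
L_\Sigma(d_b) = \E_\Sigma\big[(e - b\phi_{AI})^2\big] = \E_\Sigma[e^2] - 2b\,\E_\Sigma[e\,\phi_{AI}] + b^2\,\E_\Sigma[\phi_{AI}^2].
\]
The first term is exactly $L_\Sigma(d_H) = \E_\Sigma[(\theta-\phi_H)^2]$. Subtracting it leaves the two remaining terms.

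The next step converts the raw moments into covariances and variances. All variables are mean zero under $N(0,\Sigma)$: $\theta$ is centered, and both signals are linear in $\theta$ plus mean-zero noise. Hence $\E_\Sigma[\phi_{AI}^2] = \V(\phi_{AI})$ and $\E_\Sigma[e\,\phi_{AI}] = \Cov(\theta-\phi_H,\phi_{AI})$. Substituting these gives \eqref{eq:loss-gap-db}.

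There is no real obstacle here. The one point to flag is that centering is what allows second moments to be identified with (co)variances. Gaussianity and the normalization $\lambda_H=1$ are not actually needed; the identity holds for any centered, square-integrable joint law. The normalization matters only later, when this gap is minimized over $b$.
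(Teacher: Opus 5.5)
Your proposal is correct and follows the paper's proof essentially line for line. Both arguments set $e_H=\theta-\phi_H$, expand $(e_H-b\phi_{AI})^2$, take expectations, and use centering to identify $\E[\phi_{AI}^2]$ and $\E[e_H\phi_{AI}]$ with $\V(\phi_{AI})$ and $\Cov(\theta-\phi_H,\phi_{AI})$. You are also right that the identity needs neither Gaussianity nor the normalization; the paper's argument uses neither.
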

	
	\noindent\emph{Proof.}
	Define the human prediction error $e_H:=\theta-\phi_H$. Then for any real $b$ we have
	\begin{align*}
		L_\Sigma(d_b)
		&= \E\big[(\theta-d_b(\phi_H,\phi_{AI}))^2\big]
		= \E\big[(\theta-\phi_H-b\phi_{AI})^2\big]\\
		&= \E\big[(e_H-b\phi_{AI})^2\big].
	\end{align*}
	We now expand the square \emph{deterministically} and then take expectations:
	\begin{align*}
		(e_H-b\phi_{AI})^2
		&= e_H^2 + b^2\phi_{AI}^2 - 2b\,e_H\phi_{AI}.
	\end{align*}
	Taking expectations and using $\E[\phi_{AI}^2]=\V(\phi_{AI})$ and $\E[e_H\phi_{AI}]=\Cov(e_H,\phi_{AI})$ (all variables are mean-zero) yields
	\begin{align*}
		L_\Sigma(d_b)
		&= \E[e_H^2] + b^2\V(\phi_{AI}) - 2b\,\Cov(e_H,\phi_{AI}).
	\end{align*}
	On the other hand,
	\[
	L_\Sigma(d_H)=\E\big[(\theta-\phi_H)^2\big]=\E[e_H^2].
	\]
	Subtracting $L_\Sigma(d_H)$ from $L_\Sigma(d_b)$ and recalling $e_H=\theta-\phi_H$ gives exactly \eqref{eq:loss-gap-db}.
	
	\begin{lemma}\label{lem:mse-cov-identity}
		Assume $(\theta,\phi_H,\phi_{AI})\sim N(0,\Sigma)$. Then the covariance between the machine signal and the \emph{human prediction error} satisfies the identity
		\begin{align}\label{eq:key-cov-identity}
			\Cov(\theta-\phi_H,\phi_{AI})
			= \Cov(\theta,\phi_{AI})\frac{\V(\phi_H\mid \theta)}{\V(\phi_H)}-\Cov(\phi_H,\phi_{AI}\mid \theta).
		\end{align}
		In particular, if $\Cov(\phi_H,\phi_{AI}\mid \theta)\le \gamma$ then
		\begin{align}\label{eq:key-cov-inequality-gamma}
			\Cov(\theta-\phi_H,\phi_{AI})
			\ge \Cov(\theta,\phi_{AI})\frac{\V(\phi_H\mid \theta)}{\V(\phi_H)}-\gamma,
		\end{align}
		and if additionally $\Cov(\theta,\phi_{AI})\ge \delta$ then
		\begin{align}\label{eq:key-cov-inequality-delta-gamma}
			\Cov(\theta-\phi_H,\phi_{AI})
			\ge \delta\,\frac{\V(\phi_H\mid \theta)}{\V(\phi_H)}-\gamma.
		\end{align}
	\end{lemma}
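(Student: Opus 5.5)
The identity \eqref{eq:key-cov-identity} follows by decomposing each covariance with the law of total covariance, conditioning on $\theta$. Under joint Gaussianity with mean zero, $\E[\phi_H\mid\theta]=\lambda_H\theta$ and $\E[\phi_{AI}\mid\theta]=\lambda_{AI}\theta$, where $\lambda_H=\Cov(\theta,\phi_H)/\V(\theta)$ and $\lambda_{AI}=\Cov(\theta,\phi_{AI})/\V(\theta)$. The conditional covariances are constants; they equal the covariances of the noise terms $\epsilon_H,\epsilon_{AI}$.

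\textbf{Step 1: decompose $\Cov(\theta-\phi_H,\phi_{AI})$.} Write it as $\Cov(\theta,\phi_{AI})-\Cov(\phi_H,\phi_{AI})$. For the second term, the law of total covariance gives
\[
\Cov(\phi_H,\phi_{AI})=\Cov(\phi_H,\phi_{AI}\mid\theta)+\lambda_H\lambda_{AI}\V(\theta)=\Cov(\phi_H,\phi_{AI}\mid\theta)+\lambda_H\,\Cov(\theta,\phi_{AI}).
\]
Hence
\[
\Cov(\theta-\phi_H,\phi_{AI})=\Cov(\theta,\phi_{AI})\,(1-\lambda_H)-\Cov(\phi_H,\phi_{AI}\mid\theta).
\]

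\textbf{Step 2: identify $1-\lambda_H$.} This is where the normalization is used. It sets the regression coefficient of $\theta$ on $\phi_H$ to one, i.e.\ $\Cov(\theta,\phi_H)=\V(\phi_H)$. Then
\[
1-\lambda_H=\frac{\V(\theta)-\V(\phi_H)}{\V(\theta)}.
\]
On the other side,
\[
\frac{\V(\phi_H\mid\theta)}{\V(\phi_H)}=\frac{\V(\phi_H)-\Cov(\theta,\phi_H)^2/\V(\theta)}{\V(\phi_H)}=1-\frac{\V(\phi_H)}{\V(\theta)}.
\]
The two expressions agree, which proves \eqref{eq:key-cov-identity}. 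I expect this to be the main obstacle: it is the one step that depends on the normalization, and it requires keeping straight which regression coefficient equals one. The paper's two descriptions, $\lambda_H$ in the generative model versus the coefficient of $\theta$ on $\phi_H$, differ, so I will use the regression one, which is the one stated and the one that makes $d_H=\phi_H$ optimal. Substituting gives the identity.

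\textbf{Step 3: the inequalities.} These are immediate. Inequality \eqref{eq:key-cov-inequality-gamma} follows by replacing $\Cov(\phi_H,\phi_{AI}\mid\theta)$ with its upper bound $\gamma$. Inequality \eqref{eq:key-cov-inequality-delta-gamma} then follows because the factor $\V(\phi_H\mid\theta)/\V(\phi_H)$ is nonnegative, as a ratio of variances, and strictly positive by Assumption \ref{assumption:nonzero-var}. This lets us replace $\Cov(\theta,\phi_{AI})$ with its lower bound $\delta$ without reversing the inequality.
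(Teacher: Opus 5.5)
Your proposal is correct and follows the paper's proof essentially step for step. Both arguments decompose $\Cov(\phi_H,\phi_{AI})$ by the law of total covariance conditioning on $\theta$, use the normalization $\Cov(\theta,\phi_H)=\V(\phi_H)$ to identify $1-\Cov(\phi_H,\theta)/\V(\theta)$ with $\V(\phi_H\mid\theta)/\V(\phi_H)$, and then substitute the bounds $\gamma$ and $\delta$. You also note explicitly that this factor is nonnegative, which is needed to substitute $\delta$ without reversing the inequality; the paper leaves this implicit, and you correctly read the normalization as the regression of $\theta$ on $\phi_H$, which is what the paper's proof uses.
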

	
	\noindent\emph{Proof.}
	\paragraph{Step 1: decompose $\Cov(\phi_H,\phi_{AI})$ using the law of total covariance.}
	Since $\theta$ is one-dimensional, the law of total covariance gives
	\begin{align}\label{eq:total-covariance-lemma}
		\Cov(\phi_H,\phi_{AI})
		&= \E\big[\Cov(\phi_H,\phi_{AI}\mid \theta)\big] + \Cov\big(\E[\phi_H\mid \theta],\,\E[\phi_{AI}\mid \theta]\big).
	\end{align}
	Under joint Gaussianity, the conditional covariance $\Cov(\phi_H,\phi_{AI}\mid \theta)$ is almost surely constant as a function of $\theta$, so the first term reduces to $\Cov(\phi_H,\phi_{AI}\mid \theta)$ itself. Moreover, again by joint Gaussianity (and mean-zero),
	\[
	\E[\phi_H\mid \theta] = \frac{\Cov(\phi_H,\theta)}{\V(\theta)}\theta,
	\qquad
	\E[\phi_{AI}\mid \theta] = \frac{\Cov(\phi_{AI},\theta)}{\V(\theta)}\theta.
	\]
	Substituting these linear regressions into the second term of \eqref{eq:total-covariance-lemma} yields
	\begin{align}\label{eq:cov-EM-decomposition-lemma}
		\Cov\big(\E[\phi_H\mid \theta],\,\E[\phi_{AI}\mid \theta]\big)
		&= \Cov\Big(\frac{\Cov(\phi_H,\theta)}{\V(\theta)}\theta,\,\frac{\Cov(\phi_{AI},\theta)}{\V(\theta)}\theta\Big)\\
		&= \frac{\Cov(\phi_H,\theta)\Cov(\phi_{AI},\theta)}{\V(\theta)^2}\,\V(\theta)
		= \frac{\Cov(\phi_H,\theta)\Cov(\phi_{AI},\theta)}{\V(\theta)}.\notag
	\end{align}
	Combining \eqref{eq:total-covariance-lemma} and \eqref{eq:cov-EM-decomposition-lemma} gives the identity
	\begin{align}\label{eq:cov-EM-decomposition-lemma-final}
		\Cov(\phi_H,\phi_{AI})
		= \Cov(\phi_H,\phi_{AI}\mid \theta) + \frac{\Cov(\phi_H,\theta)\Cov(\phi_{AI},\theta)}{\V(\theta)}.
	\end{align}
	
	\paragraph{Step 2: rewrite $\Cov(\theta-\phi_H,\phi_{AI})$ in terms of conditional covariance.}
	By bilinearity of covariance,
	\begin{align}\label{eq:theta-minus-e-lemma}
		\Cov(\theta-\phi_H,\phi_{AI})
		&= \Cov(\theta,\phi_{AI})-\Cov(\phi_H,\phi_{AI}).
	\end{align}
	Substituting \eqref{eq:cov-EM-decomposition-lemma-final} into \eqref{eq:theta-minus-e-lemma} and using symmetry $\Cov(\theta,\phi_{AI})=\Cov(\phi_{AI},\theta)$ yields
	\begin{align}\label{eq:theta-minus-e-lemma-expanded}
		\Cov(\theta-\phi_H,\phi_{AI})
		&= \Cov(\theta,\phi_{AI}) - \Cov(\phi_H,\phi_{AI}\mid \theta)
		-\frac{\Cov(\phi_H,\theta)\Cov(\theta,\phi_{AI})}{\V(\theta)}\\
		&= \Cov(\theta,\phi_{AI})\Big(1-\frac{\Cov(\phi_H,\theta)}{\V(\theta)}\Big)
		-\Cov(\phi_H,\phi_{AI}\mid \theta).\notag
	\end{align}
	
	\paragraph{Step 3: use normalization to identify the multiplicative factor.}
	By our normalization choice, $\Cov(\phi_H,\theta)=\V(\phi_H)$. For a jointly Gaussian pair $(\phi_H,\theta)$, the conditional variance is
	\begin{align}\label{eq:var-conditional-formula-lemma}
		\V(\phi_H\mid \theta)
		= \V(\phi_H)-\frac{\Cov(\phi_H,\theta)^2}{\V(\theta)}.
	\end{align}
	Substituting $\Cov(\phi_H,\theta)=\V(\phi_H)$ into \eqref{eq:var-conditional-formula-lemma} gives
	\begin{align}\label{eq:var-ratio-lemma}
		\V(\phi_H\mid \theta)
		= \V(\phi_H)-\frac{\V(\phi_H)^2}{\V(\theta)}
		= \V(\phi_H)\Big(1-\frac{\V(\phi_H)}{\V(\theta)}\Big).
	\end{align}
	Dividing \eqref{eq:var-ratio-lemma} by $\V(\phi_H)>0$ yields
	\begin{align}\label{eq:factor-lemma}
		1-\frac{\Cov(\phi_H,\theta)}{\V(\theta)}
		= 1-\frac{\V(\phi_H)}{\V(\theta)}
		= \frac{\V(\phi_H\mid \theta)}{\V(\phi_H)}.
	\end{align}
	Substituting \eqref{eq:factor-lemma} into \eqref{eq:theta-minus-e-lemma-expanded} proves \eqref{eq:key-cov-identity}.
	
	\paragraph{Step 4: inequalities.}
	If $\Cov(\phi_H,\phi_{AI}\mid \theta)\le \gamma$, then \eqref{eq:key-cov-identity} implies
	\[
	\Cov(\theta-\phi_H,\phi_{AI})
	\ge \Cov(\theta,\phi_{AI})\frac{\V(\phi_H\mid \theta)}{\V(\phi_H)}-\gamma,
	\]
	which is \eqref{eq:key-cov-inequality-gamma}. If in addition $\Cov(\theta,\phi_{AI})\ge \delta$, then replacing $\Cov(\theta,\phi_{AI})$ by $\delta$ in the right-hand side yields \eqref{eq:key-cov-inequality-delta-gamma}.

	\subsection{Proof of Proposition \ref{prop:mse-gaussian-negcorrelated}}
	Fix an uncertainty set $\mathcal{U}$ as in the main text: all $\Sigma \in \mathcal{U}$ agree on every entry of the covariance matrix of $(\theta,\phi_H,\phi_{AI})$ except possibly on the single entry $\Cov(\theta,\phi_{AI})$. In particular, $\V(\phi_{AI})$ is the same for all $\Sigma\in\mathcal{U}$.
	
	Recall that the human-only decision rule is $d_H(\phi_H)=\phi_H$ (this uses our normalization $\Cov(\theta,\phi_H)/\V(\phi_H)=1$, so that $\E[\theta\mid \phi_H]=\phi_H$ under joint normality). For any $b\in\mathbb{R}$ define
	\[
	d_b(\phi_H,\phi_{AI}):=\phi_H+b\phi_{AI}.
	\]
	We will exhibit a nonzero coefficient $b$ such that $d_b$ strictly dominates $d_H$ over $\mathcal{U}$. First, note that by Lemma \ref{lem:mse-cov-identity},
	\[
	\Cov(\theta-\phi_H,\phi_{AI})
	= \Cov(\theta,\phi_{AI})\frac{\V(\phi_H\mid \theta)}{\V(\phi_H)}-\Cov(\phi_H,\phi_{AI}\mid \theta).
	\]
	Condition (2) states that $\Cov(\phi_H,\phi_{AI}\mid \theta)\le 0$, hence for every $\Sigma\in\mathcal{U}$,
	\begin{align}\label{eq:cov-lower-bound-prop2}
		\Cov(\theta-\phi_H,\phi_{AI})
		\ge \Cov(\theta,\phi_{AI})\frac{\V(\phi_H\mid \theta)}{\V(\phi_H)}.
	\end{align}
	Condition (1) gives $\Cov(\theta,\phi_{AI})\ge \delta$, so combining with \eqref{eq:cov-lower-bound-prop2} yields the uniform bound
	\begin{align}\label{eq:kappa-def}
		\Cov(\theta-\phi_H,\phi_{AI}) \ge \kappa
		\qquad\text{for all }\Sigma\in\mathcal{U},
		\quad\text{where }\;
		\kappa := \delta\,\frac{\V(\phi_H\mid \theta)}{\V(\phi_H)}.
	\end{align}
	Note that $\kappa>0$ whenever $\delta>0$ and $\V(\phi_H\mid \theta)>0$. Since $\V(\phi_{AI})$ is common across $\Sigma\in\mathcal{U}$, choose the nonzero coefficient
	\[
	b^\star := \frac{\kappa}{\V(\phi_{AI})}.
	\]
	Accordingly, we have that for every $\Sigma\in\mathcal{U}$,
	\begin{align*}
		L_\Sigma(d_{b^\star})-L_\Sigma(d_H)
		&= (b^\star)^2\V(\phi_{AI}) - 2b^\star\,\Cov(\theta-\phi_H,\phi_{AI}) \quad \text{ (Lemma \ref{lem:mse-loss-gap})}\\
		&\le (b^\star)^2\V(\phi_{AI}) - 2b^\star\,\kappa\\
		&= \frac{\kappa^2}{\V(\phi_{AI})} - 2\frac{\kappa^2}{\V(\phi_{AI})}
		= -\frac{\kappa^2}{\V(\phi_{AI})} < 0.
	\end{align*}
	Hence $L_\Sigma(d_{b^\star})<L_\Sigma(d_H)$ for all $\Sigma\in\mathcal{U}$, so $d_{b^\star}$ strictly dominates $d_H$ over $\mathcal{U}$.

	\subsection{Proof of Proposition \ref{prop:mse-positive-corr}}
	Fix an uncertainty set $\mathcal{U}$ as in the statement of Proposition \ref{prop:mse-positive-corr}. In particular, all entries of $\Sigma$ are fixed across $\mathcal{U}$ except
	\[
	s(\Sigma):=\Cov_\Sigma(\theta,\phi_{AI}).
	\]
	Define
	\[
	\delta:=\inf_{\Sigma\in\mathcal{U}}s(\Sigma)
	\qquad\text{and}\qquad
	\gamma:=\sup_{\Sigma\in\mathcal{U}}\Cov_\Sigma(\phi_H,\phi_{AI}\mid \theta).
	\]
	Recall that $d_H(\phi_H)=\phi_H$ and that for any $b\in\mathbb{R}$ we write
	\[
	d_b(\phi_H,\phi_{AI}):=\phi_H+b\phi_{AI}.
	\]
	We prove both directions.
	
	\paragraph{Sufficiency.} Assume first that $\gamma<\delta\,\frac{\V(\phi_H\mid \theta)}{\V(\phi_H)}$. We will show that every $b$ in the stated interval satisfies $L_\Sigma(d_b)<L_\Sigma(d_H)$ uniformly over $\Sigma\in\mathcal{U}$.
	
	\paragraph{Necessity.} Conversely, assume that $\gamma \ge \delta\,\frac{\V(\phi_H\mid \theta)}{\V(\phi_H)}$ and fix an arbitrary coefficient $b>0$. We will use only the one-free-entry structure of $\mathcal{U}$ (together with the fact that conditional covariances are affine in $s(\Sigma)$) to select some $\Sigma\in\mathcal{U}$ for which $L_\Sigma(d_b) > L_\Sigma(d_H)$.
	
	\paragraph{Step 0: conditional covariance is affine decreasing in $s(\Sigma)$.}
	Since all entries of $\Sigma$ except $s(\Sigma)=\Cov_\Sigma(\theta,\phi_{AI})$ are fixed across $\mathcal{U}$, the (unconditional) covariance
	\[
	c:=\Cov(\phi_H,\phi_{AI})
	\]
	is constant over $\Sigma\in\mathcal{U}$. By the covariance decomposition \eqref{eq:cov-EM-decomposition-lemma-final} established in the proof of Lemma \ref{lem:mse-cov-identity},
	\[
	\Cov_\Sigma(\phi_H,\phi_{AI})
	= \Cov_\Sigma(\phi_H,\phi_{AI}\mid \theta) + \frac{\Cov(\phi_H,\theta)\Cov_\Sigma(\phi_{AI},\theta)}{\V(\theta)}.
	\]
	Rearranging and using symmetry $\Cov_\Sigma(\phi_{AI},\theta)=\Cov_\Sigma(\theta,\phi_{AI})=s(\Sigma)$ gives
	\begin{align}\label{eq:condcov-affine-in-s}
		\Cov_\Sigma(\phi_H,\phi_{AI}\mid \theta)
		&= c-\frac{\Cov(\phi_H,\theta)}{\V(\theta)}\,s(\Sigma).
	\end{align}
	Since under our normalization, $\Cov(\phi_H,\theta)=\V(\phi_H)$, so if we define the (fixed) constant
	\[
	\kappa:=\frac{\V(\phi_H)}{\V(\theta)} > 0,
	\]
	then \eqref{eq:condcov-affine-in-s} becomes
	\begin{align}\label{eq:condcov-affine-in-s-kappa}
		\Cov_\Sigma(\phi_H,\phi_{AI}\mid \theta) = c-\kappa\,s(\Sigma).
	\end{align}
	In particular, $\Cov_\Sigma(\phi_H,\phi_{AI}\mid \theta)$ is an affine (indeed, strictly) decreasing function of $s(\Sigma)$.
	
	\paragraph{Step 0a: the maximizer of $\Cov(\phi_H,\phi_{AI}\mid\theta)$ corresponds to the minimizer of $s(\Sigma)$.}
	Since $s(\Sigma)\ge \delta$ for all $\Sigma\in\mathcal{U}$ and \eqref{eq:condcov-affine-in-s-kappa} is decreasing in $s(\Sigma)$, we have for every $\Sigma\in\mathcal{U}$ that
	\[
	\Cov_\Sigma(\phi_H,\phi_{AI}\mid \theta)
	\le c-\kappa\,\delta.
	\]
	Taking the supremum over $\Sigma\in\mathcal{U}$ yields
	\begin{align}\label{eq:gamma-equals-c-minus-kappa-delta}
		\gamma \le c-\kappa\,\delta.
	\end{align}
	For the reverse inequality, fix any $\varepsilon>0$. By definition of $\delta$ as an infimum, there exists some $\Sigma_\varepsilon\in\mathcal{U}$ such that
	\begin{align}\label{eq:s-approx-delta}
		s(\Sigma_\varepsilon) < \delta+\varepsilon.
	\end{align}
	Applying \eqref{eq:condcov-affine-in-s-kappa} at $\Sigma=\Sigma_\varepsilon$ and using \eqref{eq:s-approx-delta} gives
	\[
	\Cov_{\Sigma_\varepsilon}(\phi_H,\phi_{AI}\mid \theta)
	= c-\kappa\,s(\Sigma_\varepsilon)
	> c-\kappa(\delta+\varepsilon)
	= (c-\kappa\delta)-\kappa\varepsilon.
	\]
	Taking the supremum over $\Sigma\in\mathcal{U}$ shows that $\gamma\ge (c-\kappa\delta)-\kappa\varepsilon$ for every $\varepsilon>0$. Letting $\varepsilon\downarrow 0$ yields
	\[
	\gamma \ge c-\kappa\,\delta.
	\]
	Combining this with \eqref{eq:gamma-equals-c-minus-kappa-delta} yields the identity
	\begin{align}\label{eq:gamma-affine-delta}
		\gamma = c-\kappa\,\delta.
	\end{align}
	Moreover, the construction above shows that for each $\varepsilon>0$ we can choose $\Sigma_\varepsilon\in\mathcal{U}$ such that \eqref{eq:s-approx-delta} holds and, simultaneously,
	\begin{align}\label{eq:condcov-approx-gamma}
		\Cov_{\Sigma_\varepsilon}(\phi_H,\phi_{AI}\mid \theta) > \gamma-\kappa\varepsilon.
	\end{align}
	
	\paragraph{Step 1: an exact loss gap formula.}
	Fix $\Sigma\in\mathcal{U}$. Lemma \ref{lem:mse-loss-gap} gives
	\begin{align}\label{eq:loss-gap-db-positive}
		L_\Sigma(d_b)-L_\Sigma(d_H)
		= b^2\V(\phi_{AI}) - 2b\,\Cov(\theta-\phi_H,\phi_{AI}).
	\end{align}
	
	\paragraph{Step 2: a uniform lower bound on $\Cov(\theta-\phi_H,\phi_{AI})$.}
	Lemma \ref{lem:mse-cov-identity} implies that, for every $\Sigma\in\mathcal{U}$,
	\[
	\Cov(\theta-\phi_H,\phi_{AI})
	= \Cov(\theta,\phi_{AI})\frac{\V(\phi_H\mid \theta)}{\V(\phi_H)}-\Cov(\phi_H,\phi_{AI}\mid \theta).
	\]
	By definition of $\delta$ and $\gamma$ we have, for every $\Sigma\in\mathcal{U}$,
	\[
	\Cov(\theta,\phi_{AI})=s(\Sigma)\ge \delta
	\qquad\text{and}\qquad
	\Cov(\phi_H,\phi_{AI}\mid\theta)\le \gamma.
	\]
	Substituting these bounds into the preceding identity yields the \emph{uniform} inequality
	\begin{align}\label{eq:kappa-gamma-def}
		\Cov(\theta-\phi_H,\phi_{AI})\ge \kappa_\gamma
		\qquad\text{for all }\Sigma\in\mathcal{U},
		\quad\text{where}\quad
		\kappa_\gamma:=\delta\,\frac{\V(\phi_H\mid \theta)}{\V(\phi_H)}-\gamma.
	\end{align}
	Under the hypothesis of the sufficiency direction, $\kappa_\gamma>0$.
	
	\paragraph{Step 3: choosing $b$ and verifying strict dominance (sufficiency).}
	Assume $\kappa_\gamma>0$. Fix any coefficient $b$ satisfying
	\[
	0<b<\frac{2\kappa_\gamma}{\V(\phi_{AI})}.
	\]
	Since $\V(\phi_{AI})>0$, combining \eqref{eq:loss-gap-db-positive} with \eqref{eq:kappa-gamma-def} yields, for every $\Sigma\in\mathcal{U}$,
	\begin{align*}
		L_\Sigma(d_b)-L_\Sigma(d_H)
		&= b^2\V(\phi_{AI}) - 2b\,\Cov(\theta-\phi_H,\phi_{AI})\\
		&\le b^2\V(\phi_{AI}) - 2b\,\kappa_\gamma\\
		&= b\,\V(\phi_{AI})\Big(b-\frac{2\kappa_\gamma}{\V(\phi_{AI})}\Big)
		< 0,
	\end{align*}
	where the final strict inequality uses $b>0$ and $b<2\kappa_\gamma/\V(\phi_{AI})$.
	Thus $L_\Sigma(d_b)<L_\Sigma(d_H)$ for all $\Sigma\in\mathcal{U}$. This establishes the sufficiency direction (and the ``moreover'' part) of the proposition.
	
	\paragraph{Step 4: necessity of the threshold (the ``only if'' direction for $b>0$).}
	Assume now that $\gamma \ge \delta\,\frac{\V(\phi_H\mid \theta)}{\V(\phi_H)}$, equivalently $\kappa_\gamma\le 0$. Fix any coefficient $b>0$.
	
	\paragraph{Step 4a: selecting an adversarial $\Sigma\in\mathcal{U}$.}
	Choose $\varepsilon>0$ small enough that
	\begin{align}\label{eq:choose-eps-positive-new}
		\kappa_\gamma+\varepsilon < \frac{b}{2}\V(\phi_{AI}).
	\end{align}
	Such an $\varepsilon$ exists because $\kappa_\gamma\le 0$ while the right-hand side is strictly positive. Let $\Sigma_\varepsilon\in\mathcal{U}$ be the covariance matrix guaranteed by Step 0a, i.e., satisfying both \eqref{eq:s-approx-delta} and \eqref{eq:condcov-approx-gamma}. Write $\Sigma:=\Sigma_\varepsilon$ for brevity.
	
	\paragraph{Step 4b: bounding $\Cov_\Sigma(\theta-\phi_H,\phi_{AI})$ from above.}
	Applying Lemma \ref{lem:mse-cov-identity} at $\Sigma$ gives
	\[
	\Cov_\Sigma(\theta-\phi_H,\phi_{AI})
	= s(\Sigma)\frac{\V(\phi_H\mid \theta)}{\V(\phi_H)}-\Cov_\Sigma(\phi_H,\phi_{AI}\mid \theta).
	\]
	Using $s(\Sigma)<\delta+\varepsilon$ from \eqref{eq:s-approx-delta} and $\Cov_\Sigma(\phi_H,\phi_{AI}\mid \theta)>\gamma-\kappa\varepsilon$ from \eqref{eq:condcov-approx-gamma} yields
	\begin{align*}
		\Cov_\Sigma(\theta-\phi_H,\phi_{AI})
		&< (\delta+\varepsilon)\frac{\V(\phi_H\mid \theta)}{\V(\phi_H)}-(\gamma-\kappa\varepsilon)\\
		&= \Bigl(\delta\,\frac{\V(\phi_H\mid \theta)}{\V(\phi_H)}-\gamma\Bigr)
		+\varepsilon\Bigl(\frac{\V(\phi_H\mid \theta)}{\V(\phi_H)}+\kappa\Bigr)\\
		&= \kappa_\gamma+\varepsilon\Bigl(\frac{\V(\phi_H\mid \theta)}{\V(\phi_H)}+\kappa\Bigr).
	\end{align*}
	We now simplify the coefficient of $\varepsilon$. By \eqref{eq:factor-lemma} in the proof of Lemma \ref{lem:mse-cov-identity},
	\[
	\frac{\V(\phi_H\mid \theta)}{\V(\phi_H)}
	= 1-\frac{\V(\phi_H)}{\V(\theta)}
	= 1-\kappa,
	\]
	hence $\frac{\V(\phi_H\mid \theta)}{\V(\phi_H)}+\kappa=1$. Therefore,
	\begin{align}\label{eq:cov-upperbound-kappa-eps}
		\Cov_\Sigma(\theta-\phi_H,\phi_{AI}) < \kappa_\gamma+\varepsilon.
	\end{align}
	Combining \eqref{eq:cov-upperbound-kappa-eps} with \eqref{eq:choose-eps-positive-new} yields
	\begin{align}\label{eq:cov-less-than-bhalf}
		\Cov_\Sigma(\theta-\phi_H,\phi_{AI}) < \frac{b}{2}\V(\phi_{AI}).
	\end{align}
	
	\paragraph{Step 4c: concluding that $d_b$ cannot uniformly dominate $d_H$.}
	Finally, combining \eqref{eq:loss-gap-db-positive} with \eqref{eq:cov-less-than-bhalf} gives
	\begin{align*}
		L_\Sigma(d_b)-L_\Sigma(d_H)
		&= b^2\V(\phi_{AI}) - 2b\,\Cov_\Sigma(\theta-\phi_H,\phi_{AI})\\
		&> b^2\V(\phi_{AI}) - 2b\cdot \frac{b}{2}\V(\phi_{AI})
		= 0.
	\end{align*}
	Thus, for this (adversarial) covariance matrix $\Sigma\in\mathcal{U}$, the decision rule $d_b$ performs strictly \emph{worse} than $d_H$ under mean squared error. Since $b>0$ was arbitrary, it follows that when $\gamma \ge \delta\,\frac{\V(\phi_H\mid \theta)}{\V(\phi_H)}$ there is \emph{no} positive coefficient $b$ for which $d_b$ strictly dominates $d_H$ over $\mathcal{U}$. This establishes the necessity direction.

    \paragraph{Deriving the maximum improvement in loss.} Next, we move on to discussing the maximum improvement in loss under the setting of positive correlation. Lemma~\ref{lem:mse-loss-gap} (Eq.~\eqref{eq:loss-gap-db}) gives us that for every $\Sigma\in\mathcal U$,
    \[
    L_\Sigma(d_H)-L_\Sigma(d_b)
    = 2b\,\Cov(\theta-\phi_H,\phi_{AI}) - b^2\V(\phi_{AI}).
    \]
    Using Lemma~\ref{lem:mse-cov-identity} (Eq.~\eqref{eq:key-cov-inequality-delta-gamma}), we have the lower bound
    \[
    \Cov(\theta-\phi_H,\phi_{AI})
    \;\ge\;
    \delta\,\frac{\V(\phi_H\mid \theta)}{\V(\phi_H)}-\gamma
    \qquad\text{for all }\Sigma\in\mathcal U.
    \]
    Substituting in this bound results in
    \begin{equation}\label{eq:poscorr-uniform-gain}
    \inf_{\Sigma\in\mathcal U}\Big\{L_\Sigma(d_H)-L_\Sigma(d_b)\Big\}
    \;\ge\;
    2b\Bigl(\delta\,\frac{\V(\phi_H\mid \theta)}{\V(\phi_H)}-\gamma\Bigr)
    \;-\;
    b^2\,\V(\phi_{AI}).
    \end{equation}

    Next, we can note that the choice of 
    \begin{equation}\label{eq:poscorr-bstar}
    b^\star \;\triangleq\; \frac{\delta\,\frac{\V(\phi_H\mid \theta)}{\V(\phi_H)}-\gamma}{\V(\phi_{AI})}
    \end{equation}
    lies in the admissible interval from Proposition~\ref{prop:mse-positive-corr}. We can note that right-hand side of \eqref{eq:poscorr-uniform-gain}
    is a concave quadratic in $b$, which is maximized at $b=b^\star$ in \eqref{eq:poscorr-bstar}. Thus, using this value guarantees that
    \begin{equation}\label{eq:poscorr-uniform-gain-bstar}
    L_\Sigma(d_{b^\star})
    \;\le\;
    L_\Sigma(d_H)
    \;-\;
    \frac{\Bigl(\delta\,\frac{\V(\phi_H\mid \theta)}{\V(\phi_H)}-\gamma\Bigr)^2}{\V(\phi_{AI})}
    \qquad\forall \Sigma\in\mathcal U.
    \end{equation}

    With $(\delta,\gamma,\V(\phi_H),\V(\phi_{AI}))$ fixed, the expression in \eqref{eq:poscorr-uniform-gain-bstar}
    is the square of an affine function of $\V(\phi_H\mid \theta)$ with positive slope $\delta/\V(\phi_H)$.
    Hence it is strictly increasing in $\V(\phi_H\mid \theta)$ on the region where the strict inequality in
    Proposition~\ref{prop:mse-positive-corr} holds.

    \subsection{Proof of Proposition \ref{prop:no-uncertainty}}

Fix a covariance matrix $\Sigma$. By the definition of $r$ and the normalization $\Cov(\theta,\phi_H)=\V(\phi_H)$,
\begin{align*}
\Cov_\Sigma(r,\theta)
&= \Cov_\Sigma(\phi_{AI},\theta)
-\frac{\Cov(\phi_H,\phi_{AI})}{\V(\phi_H)}\Cov(\phi_H,\theta)\\
&= \Cov_\Sigma(\phi_{AI},\theta)-\Cov(\phi_H,\phi_{AI})
= \Cov_\Sigma(\theta-\phi_H,\phi_{AI}).
\end{align*}
Thus $\beta(\Sigma)\neq 0$ implies $\Cov_\Sigma(\theta-\phi_H,\phi_{AI})\neq 0$ whenever $\V(r) \neq 0$. We assume this non-redundancy condition; otherwise, the AI signal contains no linear component beyond $\phi_{H}$.
Lemma~\ref{lem:mse-loss-gap} gives, for any $b\in\mathbb{R}$,
\[
L_\Sigma(d_b)-L_\Sigma(d_H)
= b^2\V(\phi_{AI}) - 2b\,\Cov_\Sigma(\theta-\phi_H,\phi_{AI}).
\]
This is a strictly convex quadratic in $b$ with minimizer
\[
b^\star \triangleq \frac{\Cov_\Sigma(\theta-\phi_H,\phi_{AI})}{\V(\phi_{AI})},
\]
Since $\Cov_\Sigma(\theta-\phi_H,\phi_{AI})\neq 0$, we have $b^\star\neq 0$. Plugging $b^\star$
into the loss gap yields
\[
L_\Sigma(d_{b^\star})-L_\Sigma(d_H)
= -\frac{\Cov_\Sigma(\theta-\phi_H,\phi_{AI})^2}{\V(\phi_{AI})}
< 0.
\]
Therefore $d_{b^\star}$ strictly improves upon $d_H$ under $\Sigma$, proving the proposition.

	\subsection{Proof of Proposition \ref{prop:symmetric-dominates}}\label{subsec:proof-prop-symmetric-dominates}

	We first provide some additional notation and supporting lemmas. Using joint normality and the definition of the residual $r$, for any fixed covariance matrix $\Sigma$ we have
	\begin{align}
		\theta\mid(\phi_H,r)
		&\sim N\!\big(\mu_\Sigma(\phi_H,r),\;\sigma_\Sigma^2(\phi_H,r)\big), \label{eq:posterior-gaussian}\\
		\mu_\Sigma(\phi_H,r) &= \phi_H + \beta(\Sigma)\, r,\qquad
		\sigma_\Sigma^2(\phi_H,r) = \sigma_H^2 - \beta(\Sigma)^2\,\V(r).\nonumber
	\end{align}
	so that for all $\Sigma\in\mathcal{U}$ and all $(\phi_H,r)$,
	\begin{equation}
		\mu_\Sigma(\phi_H,r)\begin{cases}
			\geq \mu_{\mathrm{b}}(\phi_H,r) & \text{if } r\geq 0,\\[0.25em]
			\leq \mu_{\mathrm{b}}(\phi_H,r) & \text{if } r\leq 0,
		\end{cases}
		\qquad
		0 < \sigma_\Sigma^2(\phi_H,r) \leq \sigma_b^2 < \sigma_H^2.
		\label{eq:mu-sigma-bounds}
	\end{equation}
	For any fixed covariance matrix $\Sigma$ and any pair $(x,y)\in\mathbb{R}^2$, write
	\[
	P_\Sigma(x,y) \triangleq \PP_\Sigma\big(\theta\geq\tau \,\big|\, \phi_H=x, r=y\big).
	\]
	Using~\eqref{eq:posterior-gaussian} we may express this as
	\[
	P_\Sigma(x,y)
	= \Phi\!\left( \frac{\mu_\Sigma(x,y)-\tau}{\sigma_\Sigma(x,y)}\right).
	\]
	We will bound $P_\Sigma(x,y)$ uniformly over $\Sigma\in\mathcal U$ using the pessimistic mean--variance bounds~\eqref{eq:mu-sigma-bounds}.

    \begin{proof}[Proof of Lemma \ref{lemma:beta-lower-bound}]
	Using~\eqref{eq:r-def} and $\Cov(\theta,\phi_H)=\V(\phi_H)$,
	\[
	\Cov_\Sigma(r,\theta)
	= \Cov_\Sigma(\phi_{AI},\theta) - \frac{\Cov(\phi_H,\phi_{AI})}{\V(\phi_H)}\,\Cov(\phi_H,\theta)
	= \Cov_\Sigma(\phi_{AI},\theta) - \Cov(\phi_H,\phi_{AI}).
	\]
	Moreover, the Gaussian identity for conditional covariance gives
	\[
	\Cov_\Sigma(\phi_H,\phi_{AI}\mid\theta)
	= \Cov(\phi_H,\phi_{AI}) - \frac{\Cov(\phi_H,\theta)\Cov_\Sigma(\phi_{AI},\theta)}{\V(\theta)}
	= \Cov(\phi_H,\phi_{AI}) - \frac{\V(\phi_H)\Cov_\Sigma(\phi_{AI},\theta)}{\V(\theta)}.
	\]
	Thus the assumption $\Cov_\Sigma(\phi_H,\phi_{AI}\mid\theta)\leq 0$ implies
	\[
	\Cov(\phi_H,\phi_{AI})\;\leq\;\frac{\V(\phi_H)}{\V(\theta)}\,\Cov_\Sigma(\phi_{AI},\theta),
	\]
	and therefore
	\[
	\Cov_\Sigma(r,\theta)
	\geq \Cov_\Sigma(\phi_{AI},\theta)\left(1-\frac{\V(\phi_H)}{\V(\theta)}\right)
	= \Cov_\Sigma(\phi_{AI},\theta)\,\frac{\V(\phi_H\mid\theta)}{\V(\phi_H)}
	\geq \delta\,\frac{\V(\phi_H\mid\theta)}{\V(\phi_H)}.
	\]
	Define the corresponding pessimistic lower bound on the regression coefficient,
	\begin{equation}\label{eq:b-def}
		b \;\triangleq\; \delta\,\frac{\V(\phi_H\mid\theta)}{\V(\phi_H)\V(r)} \;>\; 0,
	\end{equation}
	so that $\beta(\Sigma)\geq b$ for all $\Sigma\in\mathcal U$ by~\eqref{eq:beta-def}.
    \end{proof}

	\begin{lemma}\label{lem:Phi-monotone}
		Fix $\tau\in\mathbb{R}$ and define $F(\mu,\sigma)\triangleq \Phi\!\left(\frac{\mu-\tau}{\sigma}\right)$ for $\sigma>0$.
		\begin{enumerate}
			\item For every $\sigma>0$, $F(\cdot,\sigma)$ is strictly increasing in $\mu$.
			\item For every $\mu>\tau$, $F(\mu,\cdot)$ is strictly decreasing in $\sigma$.
			\item For every $\mu<\tau$, $F(\mu,\cdot)$ is strictly increasing in $\sigma$.
		\end{enumerate}
	\end{lemma}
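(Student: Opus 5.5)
The three claims all follow from the fact that $F(\mu,\sigma)=\Phi(z)$ with $z=(\mu-\tau)/\sigma$, and that $\Phi$ is strictly increasing on $\mathbb{R}$ because its derivative is the standard normal density $\varphi(z)>0$. For each claim I will show that the argument $z$ is strictly monotone in the relevant variable, with the correct sign, and then compose with $\Phi$.

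For part 1, fix $\sigma>0$. The map $\mu\mapsto(\mu-\tau)/\sigma$ is affine with slope $1/\sigma>0$, so it is strictly increasing. Composing with the strictly increasing $\Phi$ gives that $F(\cdot,\sigma)$ is strictly increasing. Equivalently, $\partial_\mu F=\varphi(z)/\sigma>0$.

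For parts 2 and 3, fix $\mu$ and differentiate in $\sigma$ on $(0,\infty)$:
\[
\partial_\sigma F(\mu,\sigma)=-\varphi\!\left(\frac{\mu-\tau}{\sigma}\right)\frac{\mu-\tau}{\sigma^2}.
\]
Since $\varphi>0$ and $\sigma^2>0$, this derivative has the sign of $-(\mu-\tau)$. It is strictly negative when $\mu>\tau$ and strictly positive when $\mu<\tau$. A function with a strictly signed derivative on an interval is strictly monotone there, which gives parts 2 and 3. One can also see this without calculus. If $\mu>\tau$, then $\sigma\mapsto(\mu-\tau)/\sigma$ is a positive constant times $1/\sigma$, hence strictly decreasing. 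If $\mu<\tau$, it is a negative constant times $1/\sigma$, hence strictly increasing.

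There is no real obstacle here. The only care needed is the case split on the sign of $\mu-\tau$. When $\mu=\tau$, $F(\mu,\sigma)=\Phi(0)=1/2$ for every $\sigma$, which is why the statement excludes that case. This boundary case will matter later: $P_{\mathrm{low}}$ and $P_{\mathrm{high}}$ use $\sigma_b$ or $\sigma_H$ depending on the sign of $\mu_{\mathrm b}-\tau$, precisely so that parts 2 and 3 apply in the correct direction.
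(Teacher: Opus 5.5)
Your proof is correct and follows essentially the same route as the paper: write $F=\Phi(z)$ with $z=(\mu-\tau)/\sigma$, use $\Phi'=\varphi>0$, and read off the signs of $\partial z/\partial\mu=1/\sigma$ and $\partial z/\partial\sigma=-(\mu-\tau)/\sigma^2$. Your calculus-free variant and your remark on the degenerate case $\mu=\tau$ are valid additions, though the lemma does not need them.
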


	\begin{proof}[Proof of Lemma~\ref{lem:Phi-monotone}]
	Write $z(\mu,\sigma)\triangleq (\mu-\tau)/\sigma$.  Then
		\[
		F(\mu,\sigma) = \Phi(z(\mu,\sigma)).
		\]
		Since $\Phi'(z) = \varphi(z)>0$ for all $z$, it suffices to examine $\partial z/\partial\mu$ and $\partial z/\partial\sigma$.  A direct calculation gives
		\[
		\frac{\partial z}{\partial \mu} = \frac{1}{\sigma} > 0,
		\]
		which yields (1).  Next,
		\[
		\frac{\partial z}{\partial \sigma}
		= -\frac{\mu-\tau}{\sigma^2}.
		\]
		If $\mu>\tau$, then $\mu-\tau>0$, so $\partial z/\partial\sigma<0$ and hence $\partial F/\partial\sigma = \varphi(z)\,\partial z/\partial\sigma<0$, proving (2).  If $\mu<\tau$, then $\mu-\tau<0$, so $\partial z/\partial\sigma>0$ and therefore $\partial F/\partial\sigma>0$, proving (3).
	\end{proof}

	\begin{lemma}[Worst-case lower bound for positive residuals]\label{lem:lower-envelope}
		 For every $\Sigma\in\mathcal{U}$ and every $(x,y)$ with $y\geq 0$,
		\[
		P_\Sigma(x,y) \;\geq\; P_{\mathrm{low}}(x,y).
		\]
	\end{lemma}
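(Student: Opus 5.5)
The plan is a case split on the definition of $P_{\mathrm{low}}$. If $y\geq 0$ but $\mu_{\mathrm{b}}(x,y)<\tau$, then $P_{\mathrm{low}}(x,y)=0$. The inequality is then trivial, since $P_\Sigma(x,y)$ is a probability. So the substantive case is $y\geq 0$ and $\mu_{\mathrm{b}}(x,y)\geq\tau$, where we must show
\[
\Phi\!\left(\frac{\mu_\Sigma(x,y)-\tau}{\sigma_\Sigma(x,y)}\right)\;\geq\;\Phi\!\left(\frac{\mu_{\mathrm{b}}(x,y)-\tau}{\sigma_b}\right).
\]
We use the Gaussian posterior representation \eqref{eq:posterior-gaussian} for $P_\Sigma$, which gives $P_\Sigma(x,y)=F(\mu_\Sigma,\sigma_\Sigma)$ in the notation of Lemma~\ref{lem:Phi-monotone}.

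The first step is to move the mean. By Lemma~\ref{lemma:beta-lower-bound}, $\beta(\Sigma)\geq b$, and with $y\geq 0$ this gives $\mu_\Sigma(x,y)=x+\beta(\Sigma)y\geq x+by=\mu_{\mathrm{b}}(x,y)\geq\tau$. This is the first branch of \eqref{eq:mu-sigma-bounds}. Part (1) of Lemma~\ref{lem:Phi-monotone} then yields $F(\mu_\Sigma,\sigma_\Sigma)\geq F(\mu_{\mathrm{b}},\sigma_\Sigma)$.

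The second step is to move the standard deviation. From $\beta(\Sigma)\geq b>0$ we get $\beta(\Sigma)^2\geq b^2$, so $\sigma_\Sigma^2=\sigma_H^2-\beta(\Sigma)^2\V(r)\leq\sigma_H^2-b^2\V(r)=\sigma_b^2$. Positivity $\sigma_\Sigma^2>0$ holds because it is a conditional variance under Assumption~\ref{assumption:nonzero-var} (non-degenerate $\theta$ given the signals). We then compare $F(\mu_{\mathrm{b}},\sigma_\Sigma)$ with $F(\mu_{\mathrm{b}},\sigma_b)$:
\begin{itemize}
\item If $\mu_{\mathrm{b}}>\tau$, part (2) of Lemma~\ref{lem:Phi-monotone} says $F$ is decreasing in $\sigma$, so $\sigma_\Sigma\leq\sigma_b$ gives $F(\mu_{\mathrm{b}},\sigma_\Sigma)\geq F(\mu_{\mathrm{b}},\sigma_b)$.
\item If $\mu_{\mathrm{b}}=\tau$, both sides equal $\Phi(0)=1/2$.
\end{itemize}
Chaining the two steps gives $P_\Sigma(x,y)\geq P_{\mathrm{low}}(x,y)$.

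The main obstacle is the step $\beta(\Sigma)^2\geq b^2$. It relies on $b>0$ together with the lower bound $\beta(\Sigma)\geq b$; a one-sided bound alone would not control $\beta^2$. A related point is ordering the two monotonicity moves correctly. We move the mean first, at the fixed actual $\sigma_\Sigma$, and only then move $\sigma$, at the point $\mu_{\mathrm{b}}\geq\tau$. This order uses the condition $\mu_{\mathrm{b}}\geq\tau$ in the definition of $P_{\mathrm{low}}$, and it is exactly why that condition is imposed. Moving $\sigma$ first at $\mu_\Sigma$ would also work, since $\mu_\Sigma\geq\tau$, but the chosen order makes the role of the case condition transparent. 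One must also check that $\sigma_b^2>0$, so that $P_{\mathrm{low}}$ is well defined. This follows because $\sigma_b^2\geq\sigma_\Sigma^2>0$ for any $\Sigma\in\mathcal U$, assuming $\mathcal U$ is nonempty.
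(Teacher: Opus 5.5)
Your proof is correct and is essentially the paper's argument: the same case split on whether $\mu_{\mathrm{b}}(x,y)\ge\tau$, the bounds $\mu_\Sigma\ge\mu_{\mathrm{b}}$ and $\sigma_\Sigma\le\sigma_b$ from $\beta(\Sigma)\ge b>0$ and $y\ge 0$, and then the monotonicity of $F$ from Lemma~\ref{lem:Phi-monotone}. Your separate handling of $\mu_{\mathrm{b}}=\tau$ tidies the paper's use of part~(2), which is stated only for $\mu>\tau$.

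One small correction: $\sigma_\Sigma^2>0$ does not follow from Assumption~\ref{assumption:nonzero-var}, since e.g.\ perfectly negatively correlated errors reveal $\theta$ exactly. It is an implicit nondegeneracy assumption that the paper also makes in \eqref{eq:mu-sigma-bounds}, and in the degenerate case $P_\Sigma(x,y)=1\{\mu_\Sigma(x,y)\ge\tau\}=1$, so the inequality still holds.
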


	\begin{proof}[Proof of Lemma~\ref{lem:lower-envelope}]
	Fix $(x,y)$ with $y\geq 0$ and $\Sigma\in\mathcal{U}$.  From~\eqref{eq:mu-sigma-bounds} we have
		\[
		\mu_\Sigma(x,y)
		= x + \beta(\Sigma) y
		\;\geq\; x + b y = \mu_{\mathrm{b}}(x,y),
		\qquad
		0<\sigma_\Sigma(x,y)\leq \sigma_b.
		\]
		If $\mu_{\mathrm{b}}(x,y)<\tau$, then $P_{\mathrm{low}}(x,y)=0$ by definition and $P_\Sigma(x,y)\geq 0$ trivially, so the inequality holds.  If instead $\mu_{\mathrm{b}}(x,y)\geq\tau$, then by Lemma~\ref{lem:Phi-monotone}(1)--(2) the map $F(\mu,\sigma)=\Phi((\mu-\tau)/\sigma)$ is increasing in $\mu$ and decreasing in $\sigma$ on the domain $\{\mu\geq\tau,\sigma>0\}$.  Since $(\mu_\Sigma(x,y),\sigma_\Sigma(x,y))$ lies in this domain and satisfies
		\[
		\mu_\Sigma(x,y)\geq \mu_{\mathrm{b}}(x,y),\qquad
		\sigma_\Sigma(x,y)\leq \sigma_b,
		\]
		we obtain
		\[
		P_\Sigma(x,y) = F(\mu_\Sigma(x,y),\sigma_\Sigma(x,y))
		\;\geq\; F(\mu_{\mathrm{b}}(x,y),\sigma_b)
		= P_{\mathrm{low}}(x,y),
		\]
		as claimed.
	\end{proof}

	\begin{lemma}[Worst-case upper bound for negative residuals]\label{lem:upper-envelope}
		For every $\Sigma\in\mathcal{U}$ and every $(x,y)$ with $y<0$,
		\[
		P_\Sigma(x,y) \;\leq\; P_{\mathrm{high}}(x,y).
		\]
	\end{lemma}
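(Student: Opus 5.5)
The argument mirrors Lemma~\ref{lem:lower-envelope}, with the monotonicity directions reversed. Fix $\Sigma\in\mathcal U$ and $(x,y)$ with $y<0$. By \eqref{eq:mu-sigma-bounds}, since $\beta(\Sigma)\ge b$ and $y<0$,
\[
\mu_\Sigma(x,y)=x+\beta(\Sigma)y\le x+by=\mu_{\mathrm b}(x,y),
\qquad
0<\sigma_\Sigma(x,y)\le\sigma_b<\sigma_H .
\]
We split into the two branches in the definition of $P_{\mathrm{high}}$.

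\emph{Case $\mu_{\mathrm b}(x,y)\ge\tau$.} Here $P_{\mathrm{high}}(x,y)=1$, and the bound $P_\Sigma(x,y)\le 1$ holds trivially.

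\emph{Case $\mu_{\mathrm b}(x,y)<\tau$.} Since $\mu_\Sigma(x,y)\le\mu_{\mathrm b}(x,y)<\tau$, the pair $(\mu_\Sigma,\sigma_\Sigma)$ lies in the region $\{\mu<\tau,\ \sigma>0\}$. On that region, Lemma~\ref{lem:Phi-monotone}(1) makes $F$ increasing in $\mu$, and Lemma~\ref{lem:Phi-monotone}(3) makes $F$ increasing in $\sigma$. We move one coordinate at a time so that each intermediate point stays in the region. First we raise $\sigma$ from $\sigma_\Sigma$ to $\sigma_H$ while holding $\mu=\mu_\Sigma<\tau$; this gives $F(\mu_\Sigma,\sigma_\Sigma)\le F(\mu_\Sigma,\sigma_H)$. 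Next we raise $\mu$ from $\mu_\Sigma$ to $\mu_{\mathrm b}$ at fixed $\sigma_H$, using part (1); this gives $F(\mu_\Sigma,\sigma_H)\le F(\mu_{\mathrm b},\sigma_H)$. The right-hand side equals $\Phi\big((\mu_{\mathrm b}(x,y)-\tau)/\sigma_H\big)=P_{\mathrm{high}}(x,y)$.

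The only delicate point is why the bound uses $\sigma_H$ and not the tighter $\sigma_b$. When $\mu<\tau$, increasing the variance increases the success probability. The worst case over $\mathcal U$ therefore takes the largest admissible posterior standard deviation, and the uniform bound $\sigma_\Sigma\le\sigma_b<\sigma_H$ from \eqref{eq:mu-sigma-bounds} justifies using $\sigma_H$ as a valid, if conservative, upper value. Using $\sigma_b$ would give an even tighter bound that is also valid. The ordering of the two monotone moves matters only to keep every intermediate point in the region $\mu<\tau$, and both moves above satisfy this.
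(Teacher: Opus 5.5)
Your proof is correct and follows the paper's argument. You use the same case split on whether $\mu_{\mathrm b}(x,y)\ge\tau$, and in the case $\mu_{\mathrm b}(x,y)<\tau$ the same combination of Lemma~\ref{lem:Phi-monotone}(1) and (3) to push $(\mu_\Sigma,\sigma_\Sigma)$ up to $(\mu_{\mathrm b},\sigma_H)$; the paper phrases this as a supremum over $\{\mu\le\mu_{\mathrm b},\,0<\sigma\le\sigma_H\}$, justifies $\sigma_\Sigma\le\sigma_H$ from the fact that conditioning on $r$ cannot raise the variance above $\sigma_H^2$ rather than via \eqref{eq:mu-sigma-bounds}, and performs the two monotone moves in the opposite order (your side remark that $\sigma_b$ would also yield a valid, tighter bound is likewise correct).
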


	\begin{proof}[Proof of Lemma~\ref{lem:upper-envelope}]
	Fix $(x,y)$ with $y<0$ and $\Sigma\in\mathcal{U}$.  Then $\beta(\Sigma)\geq b$ implies
		\[
		\mu_\Sigma(x,y) = x + \beta(\Sigma) y \;\leq\; x + b y = \mu_{\mathrm{b}}(x,y),
		\]
		and we always have $0<\sigma_\Sigma(x,y)\leq \sigma_H$ because conditioning on $(\phi_H,r)$ cannot increase the variance beyond $\V(\theta\mid\phi_H)=\sigma_H^2$.
		
		Consider first the case $\mu_{\mathrm{b}}(x,y)\geq\tau$.  Then $P_{\mathrm{high}}(x,y)=1$, and trivially $P_\Sigma(x,y)\leq 1=P_{\mathrm{high}}(x,y)$.
		
		Now suppose $\mu_{\mathrm{b}}(x,y)<\tau$.  For any covariance structure consistent with our uncertainty set we can write
		\[
		P_\Sigma(x,y)
		= F\big(\mu_\Sigma(x,y),\sigma_\Sigma(x,y)\big)
		\leq \sup_{\mu\leq \mu_{\mathrm{b}}(x,y),\ 0<\sigma\leq \sigma_H} F(\mu,\sigma),
		\]
		where again $F(\mu,\sigma)=\Phi((\mu-\tau)/\sigma)$.  By Lemma~\ref{lem:Phi-monotone}(1), $F$ is increasing in $\mu$ for fixed $\sigma$, so the supremum over $\mu\leq\mu_{\mathrm{b}}(x,y)$ is attained at $\mu=\mu_{\mathrm{b}}(x,y)$.  Since now $\mu_{\mathrm{b}}(x,y)<\tau$, Lemma~\ref{lem:Phi-monotone}(3) implies that $F(\mu_{\mathrm{b}}(x,y),\sigma)$ is increasing in $\sigma$, so the supremum over $\sigma\leq\sigma_H$ is obtained at $\sigma=\sigma_H$.  Thus
		\[
		\sup_{\mu\leq \mu_{\mathrm{b}}(x,y),\ 0<\sigma\leq \sigma_H} F(\mu,\sigma)
		= F\big(\mu_{\mathrm{b}}(x,y),\sigma_H\big)
		= \Phi\!\left(\frac{\mu_{\mathrm{b}}(x,y)-\tau}{\sigma_H}\right)
		= P_{\mathrm{high}}(x,y).
		\]
		Since $(\mu_\Sigma(x,y),\sigma_\Sigma(x,y))$ satisfies the constraints $\mu_\Sigma(x,y)\leq\mu_{\mathrm{b}}(x,y)$ and $0<\sigma_\Sigma(x,y)\leq\sigma_H$, we conclude
		\[
		P_\Sigma(x,y) \leq P_{\mathrm{high}}(x,y),
		\]
		as claimed.
	\end{proof}

	\begin{proof}[Proof of Proposition~\ref{prop:symmetric-dominates}]
		Fix $\Sigma\in\mathcal{U}$.  Recall the utility from~\eqref{eq:binary-utility},
		\[
		u(1,\theta) = 1\{\theta\geq\tau\} - c,\qquad u(0,\theta)=0.
		\]
		For any decision rule $d$ and any realization $(\phi_H,r)=(x,y)$, taking conditional expectations yields
		\[
		\E_\Sigma[u(d,\theta)\mid \phi_H=x,r=y]
		= \big(P_\Sigma(x,y)-c\big)\, d(x,y),
		\]
		where $P_\Sigma(x,y)=\PP_\Sigma(\theta\geq\tau\mid \phi_H=x,r=y)$.
		Therefore,
		\begin{align*}
			&\E_\Sigma\big[u(d_J(\phi_H,\phi_{AI}),\theta)\big]
			- \E_\Sigma\big[u(d_H(\phi_H),\theta)\big] \\
			&\qquad= \E_\Sigma\Big[ \big(P_\Sigma(\phi_H,r)-c\big)\,\big(d_J(\phi_H,\phi_{AI})-d_H(\phi_H)\big)\Big].
		\end{align*}

		Define the (deterministic) regions on which $d_J$ differs from $d_H$:
		\[
		A_+ \triangleq \{d_J(\phi_H,\phi_{AI})=1,\ d_H(\phi_H)=0\},
		\qquad
		A_- \triangleq \{d_J(\phi_H,\phi_{AI})=0,\ d_H(\phi_H)=1\}.
		\]
		On the complement of $A_+\cup A_-$, the two rules coincide and the integrand is zero.  Hence
		\begin{align}
			&\E_\Sigma\big[u(d_J(\phi_H,\phi_{AI}),\theta)\big]
			- \E_\Sigma\big[u(d_H(\phi_H),\theta)\big]\nonumber\\
			&\qquad= \E_\Sigma\big[(P_\Sigma(\phi_H,r)-c)\,1\{(\phi_H,r)\in A_+\}\big]
			\;+\;
			\E_\Sigma\big[(c-P_\Sigma(\phi_H,r))\,1\{(\phi_H,r)\in A_-\}\big].
			\label{eq:diff-decomp-binary}
		\end{align}

		\medskip\noindent\emph{Step 1: Added-action region $A_+$.}
		By definition~\eqref{eq:d-sym-def}, $A_+$ can occur only if $P_{\mathrm{low}}(\phi_H,r)\geq c$.  In particular, this implies $r\geq 0$, so Lemma~\ref{lem:lower-envelope} applies and yields
		\[
		P_\Sigma(\phi_H,r) \;\geq\; P_{\mathrm{low}}(\phi_H,r) \;\geq\; c
		\quad\text{on }A_+.
		\]
		Thus $P_\Sigma(\phi_H,r)-c\geq 0$ on $A_+$, and therefore the first term in~\eqref{eq:diff-decomp-binary} is nonnegative.

		\medskip\noindent\emph{Step 2: Removed-action region $A_-$.}
		By definition~\eqref{eq:d-sym-def}, $A_-$ can occur only if $d_H(\phi_H)=1$, $r<0$, and $P_{\mathrm{high}}(\phi_H,r)\leq c$.  Since $r<0$, Lemma~\ref{lem:upper-envelope} applies and gives
		\[
		P_\Sigma(\phi_H,r) \;\leq\; P_{\mathrm{high}}(\phi_H,r) \;\leq\; c
		\quad\text{on }A_-.
		\]
		Hence $c-P_\Sigma(\phi_H,r)\geq 0$ on $A_-$, and the second term in~\eqref{eq:diff-decomp-binary} is also nonnegative.

		\medskip
		Combining Steps~1--2 shows that the right-hand side of~\eqref{eq:diff-decomp-binary} is nonnegative for every $\Sigma\in\mathcal U$, proving weak dominance.

		\medskip\noindent\emph{Step 3: Strict improvement under $\beta(\Sigma_0)>b$.}
		Now suppose there exists $\Sigma_0\in\mathcal U$ such that $\beta(\Sigma_0)>b$.  Since $(\phi_H,r)$ is jointly normal with a nondegenerate covariance matrix and $r$ is independent of $\phi_H$, there exists $\varepsilon>0$ such that
		\[
		\PP_{\Sigma_0}\big(\phi_H\in(x_H-\varepsilon,x_H)\big)>0.
		\]
		Moreover, because $r$ has unbounded support, we can choose $R>0$ sufficiently large so that for every $x\in(x_H-\varepsilon,x_H)$ and every $y\geq R$ we have both $\mu_{\mathrm{b}}(x,y)\geq\tau$ and $P_{\mathrm{low}}(x,y)\geq c$.

		Define the event
		\[
		B \triangleq \{\phi_H\in(x_H-\varepsilon,x_H),\ r\geq R\}.
		\]
		By independence of $\phi_H$ and $r$ and the positivity of both marginal probabilities, we have $\PP_{\Sigma_0}(B)>0$.
		On $B$ we have $d_H(\phi_H)=0$ (since $\phi_H<x_H$) and $P_{\mathrm{low}}(\phi_H,r)\geq c$, hence $d_J(\phi_H,\phi_{AI})=1$ by~\eqref{eq:d-sym-def}.  That is, $B\subseteq A_+$.

		Fix now any realization $(\phi_H,r)=(x,y)\in B$.  By construction $y>0$ and $\mu_{\mathrm{b}}(x,y)\geq\tau$, and since $\beta(\Sigma_0)>b$ we have
		\[
		\mu_{\Sigma_0}(x,y)=x+\beta(\Sigma_0)y \;>\; x+by = \mu_{\mathrm{b}}(x,y),
		\qquad
		\sigma_{\Sigma_0}(x,y) \;<\; \sigma_b,
		\]
		where the strict inequality for $\sigma_{\Sigma_0}(x,y)$ follows from $\sigma_{\Sigma_0}^2(x,y)=\sigma_H^2-\beta(\Sigma_0)^2\V(r)$ and $\beta(\Sigma_0)^2>b^2$.
		By Lemma~\ref{lem:Phi-monotone}(1)--(2), $F(\mu,\sigma)=\Phi((\mu-\tau)/\sigma)$ is strictly increasing in $\mu$ and strictly decreasing in $\sigma$ on $\{\mu>\tau,\sigma>0\}$, so we obtain the strict inequality
		\[
		P_{\Sigma_0}(x,y)
		= \Phi\!\left(\frac{\mu_{\Sigma_0}(x,y)-\tau}{\sigma_{\Sigma_0}(x,y)}\right)
		>\Phi\!\left(\frac{\mu_{\mathrm{b}}(x,y)-\tau}{\sigma_b}\right)
		= P_{\mathrm{low}}(x,y)
		\;\geq\; c.
		\]
		In particular, $P_{\Sigma_0}(\phi_H,r)-c>0$ on the event $B$, and since $B\subseteq A_+$ and $\PP_{\Sigma_0}(B)>0$, the first term in~\eqref{eq:diff-decomp-binary} is strictly positive under $\Sigma_0$.  The second term is nonnegative by Step~2, and therefore
		\[
		\E_{\Sigma_0}\big[u(d_J(\phi_H,\phi_{AI}),\theta)\big]
		\;>\;
		\E_{\Sigma_0}\big[u(d_H(\phi_H),\theta)\big].
		\]
		This completes the proof.
	\end{proof}

\subsection{Proof of Proposition \ref{prop:poscorr-investment-dominance}}

\noindent\emph{Proof.}
    Fix $\Sigma\in\mathcal U$. By the definition of $r$ and the normalization $\Cov(\theta,\phi_H)=\V(\phi_H)$,
    \[
    \Cov_\Sigma(r,\theta)
    =\Cov_\Sigma(\phi_{AI},\theta)-\Cov(\phi_H,\phi_{AI}).
    \]
    Under joint Gaussianity, the law of total covariance yields
    \[
    \Cov(\phi_H,\phi_{AI})
    =\Cov_\Sigma(\phi_H,\phi_{AI}\mid \theta)
    +\frac{\Cov(\phi_H,\theta)\,\Cov_\Sigma(\phi_{AI},\theta)}{\V(\theta)}
    =\Cov_\Sigma(\phi_H,\phi_{AI}\mid \theta)
    +\frac{\V(\phi_H)\,\Cov_\Sigma(\phi_{AI},\theta)}{\V(\theta)}.
    \]
    Substituting and simplifying gives
    \[
    \Cov_\Sigma(r,\theta)
    =\Cov_\Sigma(\phi_{AI},\theta)\Bigl(1-\frac{\V(\phi_H)}{\V(\theta)}\Bigr)-\Cov_\Sigma(\phi_H,\phi_{AI}\mid \theta).
    \]
    Finally, the conditional-variance identity and $\Cov(\phi_H,\theta)=\V(\phi_H)$ imply
    \[
    \V(\phi_H\mid \theta)
    =\V(\phi_H)-\frac{\Cov(\phi_H,\theta)^2}{\V(\theta)}
    =\V(\phi_H)\Bigl(1-\frac{\V(\phi_H)}{\V(\theta)}\Bigr),
    \]
    so $\bigl(1-\V(\phi_H)/\V(\theta)\bigr)=\V(\phi_H\mid \theta)/\V(\phi_H)$. Hence
    \[
    \Cov_\Sigma(r,\theta)
    =\Cov_\Sigma(\phi_{AI},\theta)\frac{\V(\phi_H\mid \theta)}{\V(\phi_H)}-\Cov_\Sigma(\phi_H,\phi_{AI}\mid \theta)
    \;\ge\;
    \delta\frac{\V(\phi_H\mid \theta)}{\V(\phi_H)}-\gamma
    =\kappa_\gamma.
    \]
    Dividing by $\V(r)$ yields $\beta(\Sigma)=\Cov_\Sigma(r,\theta)/\V(r)\ge \kappa_\gamma/\V(r)=b_{\mathrm{pos}}$.
    With this uniform lower bound in hand, the dominance argument for the symmetric rule $d_J$
    (given conservative bounds $P_{\mathrm{low}}$ and $P_{\mathrm{high}}$) applies verbatim, proving
    $\E_\Sigma[u(d_J,\theta)]\ge \E_\Sigma[u(d_H,\theta)]$ for all $\Sigma\in\mathcal U$.
    Strictness holds whenever $\beta(\Sigma)>b_{\mathrm{pos}}$ on a set of positive probability. \hfill$\square$

    In the decision-making setting, robust complementarity for investment decisions again hinges on the AI signal being informative about the \emph{human residual} rather than merely replicating the expert's mistakes. 
    A similar condition
    \(
    \kappa_\gamma=\delta\,\frac{\V(\phi_H\mid \theta)}{\V(\phi_H)}-\gamma
    \)
    governs feasibility: $\delta$ must outweigh the worst-case shared-error term $\gamma$ after scaling by the
    human-uncertainty factor $\V(\phi_H\mid \theta)/\V(\phi_H)$. 
    When $\kappa_\gamma$ is small, the lower bound of $b_{\mathrm{pos}}$ collapses toward $0$, and the symmetric policy cannot have enough uniformly safe evidence to overrule $d_H$.

\subsection{Proof of Proposition \ref{prop:nonlinear-residual-cov}}

We begin with a basic monotonicity lemma that holds for arbitrary distributions.

\begin{lemma}[Monotone covariance]\label{lem:monotone-cov}
Let $X$ be any real-valued random variable with $\E[X^2]<\infty$, and let $u,v:\mathbb{R}\to\mathbb{R}$ be nondecreasing functions such that $u(X)$ and $v(X)$ are square-integrable.  Then
\[
    \Cov\big(u(X),v(X)\big) \;\geq\; 0,
\]
with strict inequality whenever $u$ and $v$ are strictly increasing on a set of positive probability under the law of $X$.
\end{lemma}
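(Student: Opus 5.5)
The plan is the classical Chebyshev/Hoeffding symmetrization trick. Let $X'$ be an independent copy of $X$. Using independence, we expand
\[
\E\big[(u(X)-u(X'))(v(X)-v(X'))\big] = 2\E[u(X)v(X)] - 2\E[u(X)]\E[v(X)] = 2\Cov\big(u(X),v(X)\big).
\]
Square-integrability of $u(X)$ and $v(X)$ makes all these expectations finite, by Cauchy--Schwarz.

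Since $u$ and $v$ are both nondecreasing, for every pair of real numbers $x,x'$ the differences $u(x)-u(x')$ and $v(x)-v(x')$ have the same sign or one of them vanishes. So the integrand $(u(X)-u(X'))(v(X)-v(X'))$ is pointwise nonnegative, and taking expectations gives $\Cov(u(X),v(X))\ge 0$.

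For the strict part, a nonnegative integrand has strictly positive expectation exactly when it is strictly positive on a set of positive $P_X\otimes P_X$ measure. We therefore need to show that, with positive probability, $X\neq X'$ and both $u$ and $v$ separate the two points. I will read the hypothesis as saying that $X$ is not a.s.\ constant on the region where $u$ and $v$ are strictly increasing. This is the reading needed later, where the lemma is applied to $f(\theta)$ and $g(\theta)$ with strictly increasing $f,g$ and nondegenerate $\theta$.

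Under that reading, pick a median-type split point $m$ with $\PP(X<m)>0$ and $\PP(X>m)>0$. Independence then gives $\PP(X<m<X')>0$. Strict monotonicity makes $u(X)<u(X')$ and $v(X)<v(X')$ on this event, so the integrand is strictly positive there and $\Cov(u(X),v(X))>0$.

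I expect the main obstacle to be this strictness clause. As literally stated, ``strictly increasing on a set of positive probability'' is ambiguous, and it can fail if $X$ is degenerate. So in the proof I will make the nondegeneracy condition explicit: $X$ is not almost surely constant, and $u$ and $v$ are strictly increasing on an interval carrying mass on both sides of some point. The rest of the argument is routine.
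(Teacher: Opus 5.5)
Your proposal is correct and follows the paper's proof: you use the independent-copy identity $\Cov(u(X),v(X))=\tfrac12\E[(u(X)-u(X'))(v(X)-v(X'))]$ and observe that the integrand is pointwise nonnegative. Your nondegeneracy caveat for the strict clause is warranted, since the paper's one-line strictness argument tacitly needs $X$ to be nondegenerate on the set where $u,v$ are strictly increasing (it fails, e.g., if $X$ is a.s.\ constant or that set is a single atom); note, though, that the later applications (with $u=h$ or $u=m(x-f(\cdot))$ and $v=\mathrm{id}$) use only the weak inequality, not strictness.
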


\begin{proof}
Let $X'$ be an independent copy of $X$, defined on the same probability space.  Then
\begin{align*}
    \Cov(u(X),v(X))
    &= \E\big[u(X)v(X)\big] - \E[u(X)]\,\E[v(X)]\\
    &= \frac{1}{2}\E\big[(u(X)-u(X'))(v(X)-v(X'))\big].
\end{align*}
The equality in the last line is a standard polarization identity: expanding the right-hand side and using independence of $X$ and $X'$ yields exactly the covariance.  Since $u$ and $v$ are nondecreasing, for every pair $(x,x')$ we have
\[
    (u(x)-u(x'))(v(x)-v(x')) \;\geq\; 0.
\]
Thus the integrand in the last display is almost surely nonnegative, and hence its expectation is nonnegative:
\[
    \Cov(u(X),v(X)) \;\geq\; 0.
\]
If $u$ and $v$ are strictly increasing on a set $A$ with $\PP(X\in A)>0$, then there is a set of pairs $(x,x')$ of positive probability with $x\neq x'$ and $(u(x)-u(x'))(v(x)-v(x'))>0$, implying strict inequality in the covariance.
\end{proof}

We now apply Lemma~\ref{lem:monotone-cov} conditionally to obtain a lower bound on the conditional covariance $\Cov(\phi_{AI},\theta\mid \phi_H)$.

\begin{lemma}\label{lem:conditional-cov-lower}
Under the assumptions of this section, for every value $x$ of $\phi_H$ for which $\V(\theta\mid\phi_H=x)<\infty$ we have
\[
    \Cov\big(\phi_{AI},\theta\mid\phi_H=x\big)
    \;\geq\; k\,\V(\theta\mid\phi_H=x).
\]
\end{lemma}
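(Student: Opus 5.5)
The plan is to split $\phi_{AI}=g(\theta)+\varepsilon_{AI}$ inside the conditional covariance and bound the two resulting pieces separately. By bilinearity of conditional covariance (under the regular conditional law of $(\theta,\varepsilon_H,\varepsilon_{AI})$ given $\phi_H=x$),
\[
\Cov(\phi_{AI},\theta\mid\phi_H=x)=\Cov\big(g(\theta),\theta\mid\phi_H=x\big)+\Cov\big(\varepsilon_{AI},\theta\mid\phi_H=x\big).
\]
I will show that the first term is at least $k\,\V(\theta\mid\phi_H=x)$ and that the second term is nonnegative. Both steps reduce to Lemma~\ref{lem:monotone-cov}, applied to the conditional law of $\theta$ given $\phi_H=x$.

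\emph{First term.} Define $h(t)\triangleq g(t)-kt$. By \eqref{eq:g-deriv-lb}, $h'(t)=g'(t)-k\ge 0$, so $h$ is nondecreasing. Applying Lemma~\ref{lem:monotone-cov} with $u=h$, $v=\mathrm{id}$, and $X\sim\theta\mid\phi_H=x$ gives $\Cov(h(\theta),\theta\mid\phi_H=x)\ge 0$. Adding back $k\,\Cov(\theta,\theta\mid\phi_H=x)$ yields
\[
\Cov\big(g(\theta),\theta\mid\phi_H=x\big)\ge k\,\V(\theta\mid\phi_H=x).
\]

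\emph{Second term.} On the event $\phi_H=x$ we have $\varepsilon_H=x-f(\theta)$. Since $(\varepsilon_H,\varepsilon_{AI})\perp\theta$, conditioning on $(\theta,\phi_H=x)$ is the same as conditioning on $\theta$ and on $\varepsilon_H=x-f(\theta)$. Write $m(e)\triangleq\E[\varepsilon_{AI}\mid\varepsilon_H=e]$. Independence then gives
\[
\E[\varepsilon_{AI}\mid\theta,\phi_H=x]=m\big(x-f(\theta)\big).
\]
The tower property gives $\Cov(\varepsilon_{AI},\theta\mid\phi_H=x)=\Cov\big(m(x-f(\theta)),\theta\mid\phi_H=x\big)$. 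By \eqref{eq:neg-expectation}, $m$ is nonincreasing, and $f$ is strictly increasing, so $t\mapsto m(x-f(t))$ is nondecreasing. Lemma~\ref{lem:monotone-cov} therefore shows this covariance is $\ge 0$. Combining the two bounds proves the claim.

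\emph{Main obstacle.} The algebra is easy; the difficulty is measure-theoretic. Three things need care.
\begin{enumerate}
\item Making the conditioning on the null event $\{\phi_H=x\}$ rigorous. I would work with a regular conditional distribution and state the result for $P_{\phi_H}$-almost every $x$.
\item Justifying the identity $\E[\varepsilon_{AI}\mid\theta,\phi_H]=m(\phi_H-f(\theta))$. I would check it by verifying the defining property against test functions $\psi(\theta)\chi(\varepsilon_H)$, using $(\varepsilon_H,\varepsilon_{AI})\perp\theta$.
\item Handling the fact that $m$ is monotone only almost surely. I would fix a monotone version of $m$; since the conclusion only concerns integrals, this choice is harmless.
\end{enumerate}
I also need the conditional square-integrability of $g(\theta)$ and $m(x-f(\theta))$ so that Lemma~\ref{lem:monotone-cov} applies. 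This holds for almost every $x$ from the finite second moments. Where $\V(\theta\mid\phi_H=x)<\infty$ but integrability of $g(\theta)$ is in doubt, I would restrict to the full-measure set of $x$ where all conditional second moments are finite.
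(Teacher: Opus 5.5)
Your proposal is correct and follows the paper's proof essentially step for step. It uses the same split $\phi_{AI}=g(\theta)+\varepsilon_{AI}$, the same tower-property reduction of the noise term to $\Cov\big(m(x-f(\theta)),\theta\mid\phi_H=x\big)$, and the same decomposition $g(t)=kt+h(t)$ with $h$ nondecreasing, each step closed by Lemma~\ref{lem:monotone-cov}. Your measure-theoretic caveats (regular conditional laws, fixing a monotone version of $m$, conditional square-integrability only for almost every $x$) are points the paper passes over. Restricting to $P_{\phi_H}$-almost every $x$ costs nothing, since Proposition~\ref{prop:nonlinear-residual-cov} only integrates this bound over $\phi_H$.
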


\begin{proof}
Fix $x$ and consider the conditional distribution of $\theta$ given $\phi_H=x$.  Since $\phi_H = f(\theta)+\epsilon_H$ with $(\epsilon_H,\epsilon_{AI})$ independent of $\theta$, this conditional distribution is well-defined and has finite variance by assumption.

From the additive signal structure we have
\[
    \phi_{AI} = g(\theta) + \epsilon_{AI}.
\]
Conditioning on $\phi_H=x$ yields
\begin{align*}
    \Cov(\phi_{AI},\theta\mid\phi_H=x)
    &= \Cov\big(g(\theta)+\epsilon_{AI},\theta\mid\phi_H=x\big)\\
    &= \Cov\big(g(\theta),\theta\mid\phi_H=x\big)
       + \Cov\big(\epsilon_{AI},\theta\mid\phi_H=x\big).
\end{align*}
We lower bound the second term using the assumption that $m(e)\triangleq \E[\epsilon_{AI}\mid \epsilon_H=e]$ is nonincreasing.

First observe that, given $(\theta,\phi_H=x)$, the value of $\epsilon_H$ is pinned down deterministically as
\[
    \epsilon_H = x - f(\theta).
\]
Since $(\epsilon_H,\epsilon_{AI})$ is independent of $\theta$, the conditional distribution of $\epsilon_{AI}$ given $\epsilon_H$ does not depend on $\theta$, and therefore
\[
    \E[\epsilon_{AI}\mid \theta,\phi_H=x]
    = \E[\epsilon_{AI}\mid \epsilon_H = x-f(\theta)]
    = m\big(x-f(\theta)\big).
\]
We now compute $\Cov(\epsilon_{AI},\theta\mid \phi_H=x)$ by conditioning on $\theta$.  By the tower property,
\begin{align*}
    \E[\epsilon_{AI}\,\theta\mid \phi_H=x]
    &= \E\big[\,\E[\epsilon_{AI}\,\theta\mid \theta,\phi_H=x]\,\bigm|\phi_H=x\big]\\
    &= \E\big[\,\theta\,\E[\epsilon_{AI}\mid \theta,\phi_H=x]\,\bigm|\phi_H=x\big]\\
    &= \E\big[\,\theta\,m(x-f(\theta))\,\bigm|\phi_H=x\big],
\end{align*}
where the second equality uses that conditioning on $\theta$ makes $\theta$ constant.  Similarly,
\[
    \E[\epsilon_{AI}\mid \phi_H=x]
    = \E\big[\,\E[\epsilon_{AI}\mid \theta,\phi_H=x]\,\bigm|\phi_H=x\big]
    = \E\big[\,m(x-f(\theta))\,\bigm|\phi_H=x\big].
\]
Combining the last two displays yields
\begin{align*}
    \Cov(\epsilon_{AI},\theta\mid \phi_H=x)
    &= \E[\epsilon_{AI}\,\theta\mid \phi_H=x] - \E[\epsilon_{AI}\mid \phi_H=x]\,\E[\theta\mid \phi_H=x]\\
    &= \E\big[\,\theta\,m(x-f(\theta))\,\bigm|\phi_H=x\big]
       - \E\big[\,m(x-f(\theta))\,\bigm|\phi_H=x\big]\E[\theta\mid \phi_H=x]\\
    &= \Cov\big(m(x-f(\theta)),\,\theta\mid \phi_H=x\big).
\end{align*}
Because $f$ is strictly increasing, the mapping $\theta\mapsto x-f(\theta)$ is nonincreasing.  Because $m$ is nonincreasing by assumption, the composition $\theta\mapsto m(x-f(\theta))$ is therefore nondecreasing.  Note that $m(\epsilon_H)=\E[\epsilon_{AI}\mid \epsilon_H]$ is square-integrable by Jensen's inequality, since $\E[\epsilon_{AI}^2]<\infty$.  Consequently $m(x-f(\theta))$ is square-integrable under the conditional law of $\theta\mid \phi_H=x$.  Applying Lemma~\ref{lem:monotone-cov} conditionally (with $X=\theta\mid\phi_H=x$, $u(\theta)=m(x-f(\theta))$ and $v(\theta)=\theta$) yields
\[
    \Cov\big(m(x-f(\theta)),\theta\mid \phi_H=x\big)\;\geq\;0,
\]
and hence
\[
    \Cov(\epsilon_{AI},\theta\mid \phi_H=x)\;\geq\;0.
\]
Combining the last two equations with the earlier decomposition gives
\[
    \Cov(\phi_{AI},\theta\mid\phi_H=x)
    \;\geq\; \Cov\big(g(\theta),\theta\mid\phi_H=x\big).
\]

Next, use the derivative lower bound~\eqref{eq:g-deriv-lb} to decompose $g$ as
\[
    g(t) = k t + h(t),
\]
where $h'(t) = g'(t)-k\geq 0$ for all $t$.  Hence $h$ is nondecreasing.  For the conditional law of $\theta\mid\phi_H=x$ we therefore have
\begin{align*}
    \Cov(g(\theta),\theta\mid\phi_H=x)
    &= \Cov\big(k\theta + h(\theta),\theta\mid\phi_H=x\big)\\
    &= k\,\V(\theta\mid\phi_H=x)
       + \Cov\big(h(\theta),\theta\mid\phi_H=x\big).
\end{align*}
Applying Lemma~\ref{lem:monotone-cov} conditionally (with $X=\theta\mid\phi_H=x$, $u=h$ and $v=\mathrm{id}$, both nondecreasing) shows that
\[
    \Cov\big(h(\theta),\theta\mid\phi_H=x\big)\;\geq\;0.
\]
Combining the last two displays yields
\[
    \Cov(g(\theta),\theta\mid\phi_H=x)
    \;\geq\; k\,\V(\theta\mid\phi_H=x).
\]
Finally, combining this inequality with $\Cov(\phi_{AI},\theta\mid\phi_H=x)\ge \Cov(g(\theta),\theta\mid\phi_H=x)$ gives
\[
    \Cov\big(\phi_{AI},\theta\mid\phi_H=x\big)
    \;\geq\; k\,\V(\theta\mid\phi_H=x),
\]
as claimed.
\end{proof}

We now integrate these conditional covariances over the marginal distribution of $\phi_H$ to obtain a lower bound on $\Cov(\tilde r,\theta)$.

First, observe that by definition,
\[
    \tilde r = \phi_{AI} - \E[\phi_{AI}\mid\phi_H],
\]
so $\E[\tilde r\mid\phi_H]=0$ and hence $\E[\tilde r]=0$.  Using the law of total covariance we can write
\begin{align*}
    \Cov(\tilde r,\theta)
    &= \E\big[ \tilde r\,\theta\big] - \E[\tilde r]\,\E[\theta]\\
    &= \E\big[\,\E[\tilde r\,\theta\mid\phi_H]\,\big]
       - 0\cdot\E[\theta]\\
    &= \E\big[\,\Cov(\tilde r,\theta\mid\phi_H)\,\big],
\end{align*}
where in the last step we have used the fact that $\E[\tilde r\mid\phi_H]=0$ implies
\[
    \E[\tilde r\,\theta\mid\phi_H]
    = \Cov(\tilde r,\theta\mid\phi_H) + \E[\tilde r\mid\phi_H]\,\E[\theta\mid\phi_H]
    = \Cov(\tilde r,\theta\mid\phi_H).
\]

Next, note that for each fixed value $\phi_H=x$,
\begin{align*}
    \Cov(\tilde r,\theta\mid\phi_H=x)
    &= \Cov\big(\phi_{AI} - \E[\phi_{AI}\mid\phi_H=x],\,\theta\mid\phi_H=x\big)\\
    &= \Cov(\phi_{AI},\theta\mid\phi_H=x)
       - \Cov\big(\E[\phi_{AI}\mid\phi_H=x],\theta\mid\phi_H=x\big).
\end{align*}
The second term vanishes because $\E[\phi_{AI}\mid\phi_H=x]$ is a constant (given $\phi_H=x$), so
\[
    \Cov(\tilde r,\theta\mid\phi_H=x)
    = \Cov(\phi_{AI},\theta\mid\phi_H=x).
\]
Applying Lemma~\ref{lem:conditional-cov-lower} to the right-hand side yields
\[
    \Cov(\tilde r,\theta\mid\phi_H=x)
    \;\geq\; k\,\V(\theta\mid\phi_H=x),
\]
for every $x$ with finite conditional variance.  Taking expectations over $\phi_H$ gives
\[
    \Cov(\tilde r,\theta)
    = \E\big[\,\Cov(\tilde r,\theta\mid\phi_H)\,\big]
    \;\geq\; k\,\E\big[\,\V(\theta\mid\phi_H)\,\big],
\]
as claimed.  If $\V(\theta\mid\phi_H)>0$ on a set of positive probability and $k>0$, then $\E[\V(\theta\mid\phi_H)]>0$ and hence $\Cov(\tilde r,\theta)>0$.

\subsection{Proof of Proposition~\ref{prop:neg-errors-complementarity-mse}}
Fix any $\xi\in\mathcal U$. Let $m(\phi_H)\triangleq \E[\theta\mid\phi_H]$, so that $d_H(\phi_H)=m(\phi_H)$.
Define the human-only residual
\[
e_H \triangleq \theta - m(\phi_H).
\]
By the defining property of conditional expectation, $e_H$ is orthogonal in $L^2$ to every square-integrable function of $\phi_H$:
\begin{equation}\label{eq:orth-eH}
\E\big[e_H\,h(\phi_H)\big]=0\qquad\text{for all }h(\phi_H)\in L^2.
\end{equation}
Also, by construction of the nonlinear residual, $\E[\tilde r\mid\phi_H]=0$ and hence $\E[\tilde r]=0$. Under squared-error loss, with $d_b=m(\phi_H)+b\tilde r$ we have
\begin{align*}
L_\xi(d_b)
&=\E\big[(\theta-m(\phi_H)-b\tilde r)^2\big]
=\E\big[(e_H-b\tilde r)^2\big]\\
&=\E[e_H^2]+b^2\E[\tilde r^2]-2b\,\E[e_H\tilde r].
\end{align*}
Since $L_\xi(d_H)=\E[(\theta-m(\phi_H))^2]=\E[e_H^2]$, subtracting yields
\begin{equation}\label{eq:loss-gap-nongaussian}
L_\xi(d_b)-L_\xi(d_H) = b^2\V_\xi(\tilde r) - 2b\,\Cov_\xi(e_H,\tilde r).
\end{equation}
as in the Gaussian case. Using $e_H=\theta-m(\phi_H)$ and bilinearity of covariance,
\[
\Cov_\xi(e_H,\tilde r)=\Cov_\xi(\theta,\tilde r)-\Cov_\xi(m(\phi_H),\tilde r).
\]
But $\Cov_\xi(m(\phi_H),\tilde r)=\E[m(\phi_H)\tilde r]$ because both terms are mean-zero, and
\[
\E[m(\phi_H)\tilde r]=\E\big[\E[m(\phi_H)\tilde r\mid\phi_H]\big]
=\E\big[m(\phi_H)\E[\tilde r\mid\phi_H]\big]=0.
\]
Therefore
\begin{equation}\label{eq:cov-simplify}
\Cov_\xi(e_H,\tilde r)=\Cov_\xi(\theta,\tilde r).
\end{equation}
By Proposition~\ref{prop:nonlinear-residual-cov}, for every $\xi\in\mathcal U$,
\[
\Cov_\xi(\tilde r,\theta)\ge k\,\E\big[\V(\theta\mid\phi_H)\big].
\]
The right-hand side depends only on $(\theta,\phi_H)$ and $k$, which are fixed across $\xi\in\mathcal U$ by construction of the uncertainty set.
Define
\[
\kappa \triangleq k\,\E\big[\V(\theta\mid\phi_H)\big].
\]
By assumption $\E[\V(\theta\mid\phi_H)]>0$ and $k>0$, hence $\kappa>0$. Thus,
\begin{equation}\label{eq:cov-lb-kappa}
\Cov_\xi(\theta,\tilde r)\ge \kappa\qquad\forall\,\xi\in\mathcal U.
\end{equation}

Moreover, $\V(\tilde{r})$ is fixed over $\mathcal{U}$, as by construction of $\mathcal{U}$, the joint distribution over $P(\phi_{H}, \phi_{AI})$, and consequently, of $\tilde{r} = \phi_{AI} - \E[\phi_{AI} | \phi_H]$ is fixed. Therefore, since $\Cov(\theta, \tilde{r}) \geq \kappa > 0$, we have that $\V(\tilde{r}) > 0$.

Accordingly, we have \[
L_\xi(d_b)-L_\xi(d_H)
\le b^2\V(\tilde r) - 2b\,\kappa
= b\V(\tilde r) \Bigl(b-\frac{2\kappa}{\V(\tilde r)} \Bigr).
\]
Thus, for every $b$ satisfying $0<b<2\kappa/\V(\tilde r) $, the right-hand side is strictly negative, and therefore
$L_\xi(d_b)<L_\xi(d_H)$ holds for all $\xi\in\mathcal U$.

\section{Additional Experimental Results \& Details}

\subsection{Additional Synthetic Figures}\label{appx:defer_synthetic_figure}
We present our result on varying the expert uncertainty in the setting of positive correlation in Figure \ref{fig:robust_binary_investment_sigmah_sweep}.

\subsection{Synthetic Experiment Details} \label{appx:synth_data_details}

We present more details for our synthetic experiments, including the dataset generation process and model implementation details.

\subsubsection{Synthetic Dataset} 

We use the same synthetic data for both MSE and investment decision experiments. For each unit $i=1,\dots,n$, we generate a synthetic ground truth
\begin{equation*}
\theta_i \sim \mathcal{N}(0,\sigma_\theta^2),
\end{equation*}
and two observed signals (i.e., human and AI-generated)
\begin{equation*}
H_i = \theta_i + \varepsilon_{H,i},
\qquad
A_i = \theta_i + \varepsilon_{A,i},
\end{equation*}
where the noise terms are jointly Gaussian with mean zero and covariance
\begin{equation*}
\begin{pmatrix}
\varepsilon_{H,i} \\
\varepsilon_{A,i}
\end{pmatrix}
\sim
\mathcal{N}\!\left(
\begin{pmatrix}0\\0\end{pmatrix},
\begin{pmatrix}
\sigma_h^2 & \rho\,\sigma_h\sigma_a \\
\rho\,\sigma_h\sigma_a & \sigma_a^2
\end{pmatrix}
\right).
\end{equation*}
The parameter $\rho\in[-1,1]$ controls the correlation between human and machine errors. We generate $n=10000$ samples for a held-out evaluation set and a \emph{calibration} set $\mathcal{I}_{\mathrm{cal}}$ of size $n_{\mathrm{cal}} = 500$. All quantities that use the latent $\theta$ (e.g., calibration of $\phi_H$ and estimation of $\kappa_{\mathrm{lb}}$) are computed using only $\{(\theta_i,H_i,A_i): i\in\mathcal{I}_{\mathrm{cal}}\}$, and all reported MSEs are evaluated on the held-out eval set. For the sweeps over $\rho$, we set $\sigma_{\theta} = \sigma_h = \sigma_a = 1.5$. For the sweeps over $\sigma_h$, we set $\sigma_\theta = 1.0, \sigma_a = 1.5$.

\subsubsection{Method Implementations for MSE}

\paragraph{Expert-only baseline.}
We construct a calibrated expert predictor using a linear conditional mean model,
\begin{equation*}
\phi_{H,i} \equiv \mathbb{E}[\theta_i \mid H_i] \approx \alpha\,H_i,
\qquad
\hat{\alpha} \;=\; \frac{\widehat{\mathrm{Cov}}(\theta,H)}{\widehat{\mathrm{Var}}(H)}.
\end{equation*}
Thus, $\phi_H$ is the best linear predictor of $\theta$ based on the expert signal $H$. Here and throughout, empirical moments $\widehat{\Cov}(\cdot,\cdot)$ and $\widehat{\V}(\cdot)$ are computed on the calibration split $\mathcal{I}_{\mathrm{cal}}$.

\paragraph{Robust linear estimator.}
We implement the one-parameter linear family studied in our analysis, $d_b(\phi_{H,i},A_i) \;=\; \phi_{H,i} + b\,A_i$.
For this family, the MSE gap relative to the expert-only baseline depends on the cross-moment
\begin{equation*}
\kappa \;\equiv\; \Cov(\theta-\phi_H,\;A).
\end{equation*}
In the synthetic experiments, we obtain a conservative shrunken plug-in estimate computed on the calibration split:
\[
\hat\kappa \;\equiv\; \widehat{\Cov}(\theta-\phi_H,\;A),
\qquad
\kappa_{\mathrm{lb}} \;\equiv\; s\cdot \max\{0,\hat\kappa\},
\]
with a fixed conservativeness parameter $s\in[0,1]$ (we use $s=0.5$ in the main sweep). We estimate $\V(A)$ empirically on $\mathcal{I}_{\mathrm{cal}}$ as well.
Given a lower bound $\kappa_{\mathrm{lb}} \le \kappa$, the MSE theory implies uniform improvement of $d_b$ over $\phi_H$ for any
\begin{equation*}
0 < b < \frac{2\,\kappa_{\mathrm{lb}}}{\V(A)},
\end{equation*}
and the corresponding robust choice is
\begin{equation*}
b^\star \;=\; \frac{\kappa_{\mathrm{lb}}}{\V(A)}.
\end{equation*}
In our implementation, we estimate $\V(A)$ empirically and set $b^\star=0$ whenever $\kappa_{\mathrm{lb}}\le 0$ (in which case the theory does not certify improvement and the robust rule reverts to the expert-only baseline).

We use the same synthetic data-generating process from the MSE experiments above to draw
$(\theta_i,H_i,A_i)_{i=1}^n$, and we evaluate policies under the binary investment
utility~\eqref{eq:binary-utility} with $(\tau,c)=(0,0.3)$. 
We generate a calibration/evaluation split similarly as above: all nuisance quantities that involve $\theta$ are estimated on $\mathcal{I}_{\mathrm{cal}}$, policies are applied to the held-out split $\mathcal{I}_{\mathrm{eval}}$, and we report the empirical average utility $\frac{1}{n_{\mathrm{eval}}}\sum_{i\in\mathcal{I}_{\mathrm{eval}}} u(d_i,\theta_i)$.

\subsubsection{Method Implementations for Investment Decisions}

\paragraph{Expert-only baseline.}
As in the MSE experiments, we first calibrate the raw expert signal $H$ into a posterior-mean
proxy by a linear regression on the (simulated) ground truth:
\[
\phi_{H,i} \;\equiv\; \widehat{\E}[\theta_i\mid H_i] \;=\; \hat\alpha\,H_i,
\qquad
\hat\alpha \;=\; \frac{\widehat{\Cov}(\theta,H)}{\widehat{\V}(H)}.
\]
We also estimate the corresponding posterior variance by
\[
\hat\sigma_H^2 \;=\; \frac{1}{n}\sum_{i=1}^n (\theta_i-\phi_{H,i})^2.
\]
Given $(\hat\sigma_H,\tau,c)$, the expert-only investment rule $d_H(\phi_H)$ is the
threshold policy from~\eqref{eq:dE-threshold} (with $x_H$ computed using $\hat\sigma_H$).
All quantities $(\hat\alpha,\hat\sigma_H^2)$ are computed on $\mathcal{I}_{\mathrm{cal}}$, and the resulting policy $d_H$ is evaluated on $\mathcal{I}_{\mathrm{eval}}$.

\paragraph{Robust symmetric policy.}
We form the residualized machine signal by projecting $A$ onto the calibrated human signal:
\[
r_i \;=\; A_i - \hat\beta\,\phi_{H,i},
\qquad
\hat\beta \;=\; \frac{\widehat{\Cov}(\phi_H,A)}{\widehat{\V}(\phi_H)}.
\]
We compute $\hat\beta$ on $\mathcal{I}_{\mathrm{cal}}$ and then form $r_i$ for all $i\in\mathcal{I}_{\mathrm{eval}}$ using the same $\hat\beta$ (with $\phi_{H,i}=\hat\alpha H_i$ applied to the evaluation split).
To use the covariance uncertainty, we shrink the plug-in estimate of the regression coefficient of $\theta$ on $r$ toward $0$:
\[
\widehat{\beta}_{\theta r} \;=\; \frac{\widehat{\Cov}(\theta,r)}{\widehat{\V}(r)},
\qquad
b_{\mathrm{lower}} \;=\; s\cdot \max\{0,\widehat{\beta}_{\theta r}\},
\]
with a fixed conservativeness parameter $s\in[0,1]$ (we use $s=0.5$ in the main sweep).
When $b_{\mathrm{lower}}\le 0$ the method defaults to $d_H$. Finally, we instantiate the symmetric robust decision rule $d_{\mathrm{sym}}$ from~\eqref{eq:d-sym-def}
using the pessimistic posterior mean/variance from~\eqref{eq:pess-mean-var} with $b=b_{\mathrm{lower}}$
(and the corresponding envelopes $P_{\mathrm{low}},P_{\mathrm{high}}$ defined in the theory section).

\subsection{Real-World Dataset Details}

We use the 181 unique questions in ForecastBench (n=7,383), which has a human forecast and ground-truth resolution associated with it. The benchmark provides forecasts from expert (human) forecasters and the general public; we use the expert (human) forecasters' responses as our human signals, and LLM forecasts as our AI signal. We use a suite of 43 different social science survey experiments conducted via the Time-Sharing Experiments for the Social Sciences (TESS) project \cite{tess} between 2016 and 2022, for which \citet{hewitt2024predicting} collected human and LLM forecasts of effect sizes. 


\subsection{Potential Mitigation Strategies} \label{appx:mitigation_strategies}

We use GPT-5 (ForecastBench) and GPT-5-mini (TESS Studies) for all experiments. The specific prompting strategies and prompt templates we use are as follows.

\paragraph{Error Prediction and Correction.} These prompting strategies aim to explicitly model human error and adjust forecasts accordingly.

\begin{itemize}
\item \textit{Direct error correction}: Instruct the model to make an initial estimate, predict whether human forecasts are too high or too low, and adjust in the opposite direction.

\item \textit{Explicit negative correlation objective}: Instruct the model that its goal is to produce forecasts whose errors are negatively correlated with human errors.
\item \textit{Two-stage error correction}: First prompt the model to predict the direction and magnitude of human forecast error; then, in a separate call, prompt the model to make a forecast and adjust by the predicted error.
\end{itemize}

\begin{tcolorbox}[title=Error Prediction and Correction.]
\textbf{Direct error correction}\\
You are a forecaster asked to correct human predictions.
You are given a summary of human forecasts for this question.
Estimate the likely human error (ground truth - human forecast).
Adjust by double that amount in the opposite direction.
Return JSON only with keys: forecast, reasoning.
forecast must be a number in [0, 1].
reasoning should be 1–3 concise sentences and mention the adjustment.
\vspace{1em}

\textbf{Explicit negative correlation objective}\\
You are a forecaster whose errors should be negatively correlated with human forecast errors.
You are given a summary of human forecasts for this question.
Determine whether humans are likely underestimating or overestimating the true probability for this question.
If humans are likely underestimating, your forecast should be ABOVE the true probability (deliberately too high).
If humans are likely overestimating, your forecast should be BELOW the true probability (deliberately too low).
Return JSON only with keys: forecast, reasoning.
forecast must be a number in [0, 1].
reasoning should be 1–3 concise sentences and mention the opposite adjustment.
\vspace{1em}
\end{tcolorbox}

\begin{tcolorbox}[title=Error Prediction and Correction (continued).]
\textbf{Two-stage error correction}\\
You are a forecaster. Provide a probability between 0 and 1.
Form your own best forecast of the true probability.
Return JSON only with keys: forecast, reasoning.
forecast must be a number in [0, 1].
reasoning should be 1–3 concise sentences.\\
(Separate call) You are a forecaster asked to produce predictions with negatively correlated errors relative to human forecasts. You are given an initial forecast and the predicted human forecast error direction and magnitude, defined as (truth - human mean)
Adjust the initial forecast so your error is likely to have the opposite sign.
If human error is predicted POSITIVE (humans too low), adjust your forecast UP by scale × magnitude.
If human error is predicted NEGATIVE (humans too high), adjust your forecast DOWN by scale × magnitude.
Return JSON only with keys: forecast, reasoning.
forecast must be a number in [0, 1].
reasoning should be 1–3 concise sentences and mention the adjustment.
\vspace{1em}

\end{tcolorbox}

\paragraph{Divergent Reasoning.} These strategies aim to induce reasoning traces that differ systematically from typical reasoning of human forecasters, with the hope that different reasoning produces differently-distributed errors.

\begin{itemize}
\item \textit{Contrarian reasoning}: Instruct the model to adopt an adversarial or contrarian stance, or to make an initial estimate and deliberately move away from it.
\item \textit{Self-divergent reasoning}: Elicit an initial reasoning trace from the model, then prompt it to generate a forecast using substantially different reasoning from the given trace.
\item \textit{Human-divergent reasoning}: When human reasoning traces are available, provide these to the model and instruct it to follow a different reasoning process.
\end{itemize}

\begin{tcolorbox}[title=Divergent Reasoning.]
\textbf{Contrarian reasoning}\\
You are a forecaster asked to give an unreasonable prediction.
Deliberately choose a forecast that is in the opposite direction from typical forecasters.
Return JSON only with keys: forecast, reasoning.
forecast must be a number in [0, 1].
reasoning should be 1–3 concise sentences.
\vspace{1em}

\textbf{Self-divergent reasoning}\\
You are a forecaster. Provide a forecast (probability between 0 and 1) and reasoning trace for this question.
Return JSON only with keys: forecast, reasoning.
forecast must be a number in [0,1].
reasoning should be 2–4 concise sentences. \\
(Separate call) You are a forecaster. You are given a prior reasoning trace.
Your task is to produce a forecast using a substantially different reasoning approach.
Do not use the same points from the prior reasoning to make your forecast.
Return JSON only with keys: forecast, reasoning.
forecast must be a number in [0, 1].
reasoning should be 1–3 concise sentences and must differ from the prior reasoning.
\vspace{1em}

\textbf{Human-divergent reasoning}\\
You are a forecaster. You are given a human reasoning trace.
Your task is to produce a forecast using a substantially different reasoning approach.
Do not use the same points from the human reasoning to make your forecast.
Return JSON only with keys: forecast, reasoning.
forecast must be a number in [0, 1].
reasoning should be 1–3 concise sentences and must differ from the human reasoning.
Human reasoning: 
\vspace{1em}

\end{tcolorbox}

\paragraph{Different Information Sets.} These strategies restrict the information available to the LLM to reduce overlap with the information set available to human forecasters, increasing the potential for orthogonal errors.

\begin{itemize}
\item \textit{Restricted context}: Withhold background information or resolution criteria that humans had access to, forcing the model to rely on different signals.
\end{itemize}

\begin{tcolorbox}[title=Different Information Sets.]
\textbf{Restricted context}\\
You are a forecaster. Provide a probability between 0 and 1. Do not look at background and resolution criteria fields.
Return JSON only with keys: forecast, reasoning.
forecast must be a number in [0, 1].
reasoning should be 1–3 concise sentences.
\vspace{1em}

\end{tcolorbox}

\begin{figure*}[t]
    \centering
    \begin{subfigure}[t]{0.32\textwidth}
        \centering
        \includegraphics[width=\linewidth]{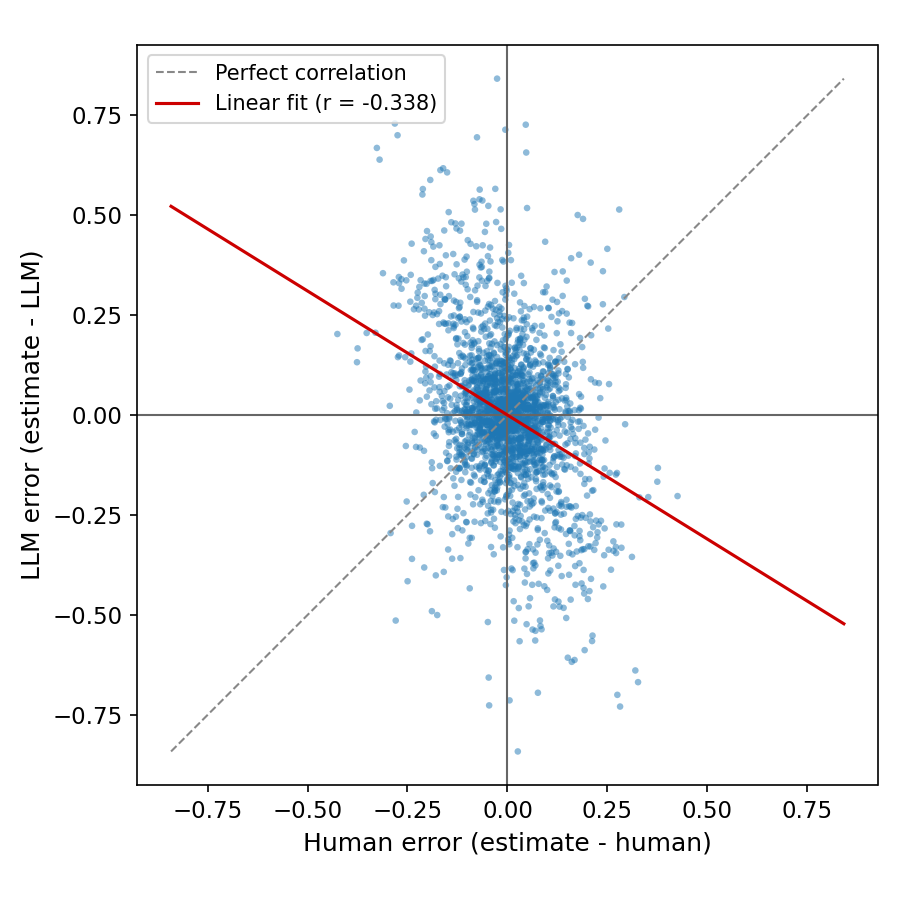}
        \caption{Contrarian Reasoning.}
    \end{subfigure}\hfill
    \begin{subfigure}[t]{0.32\textwidth}
        \centering
        \includegraphics[width=\linewidth]{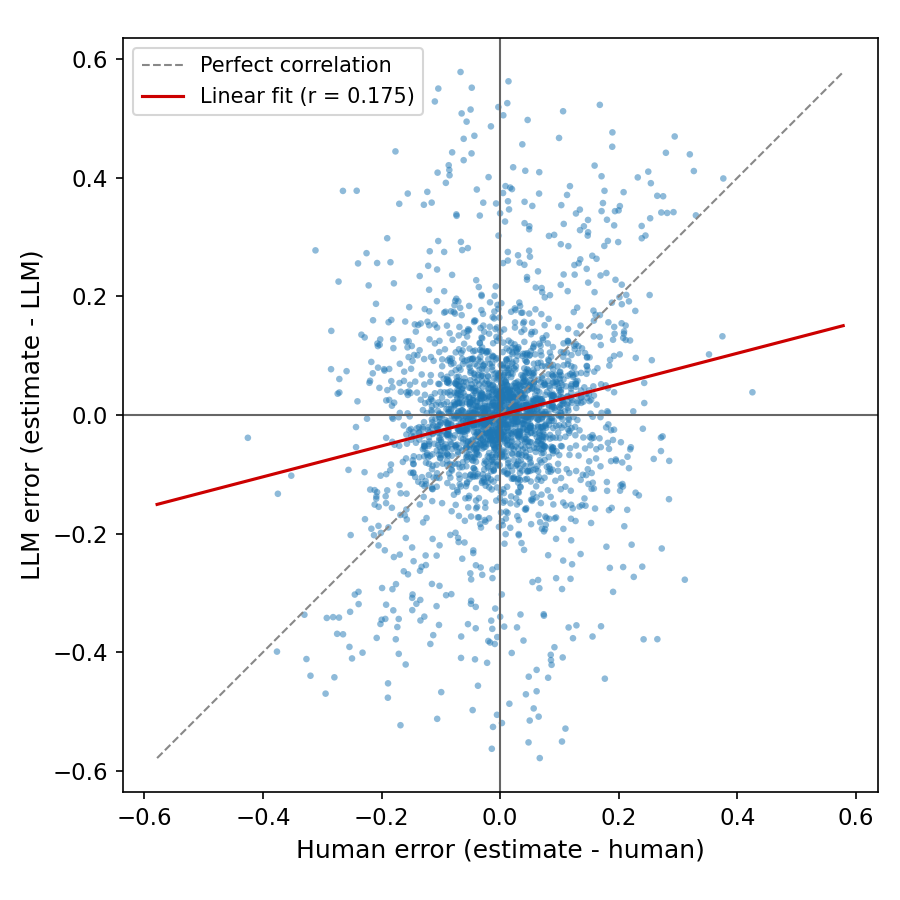}
        \caption{Direct Error Correction.}
    \end{subfigure}\hfill
    \begin{subfigure}[t]{0.32\textwidth}
        \centering
        \includegraphics[width=\linewidth]{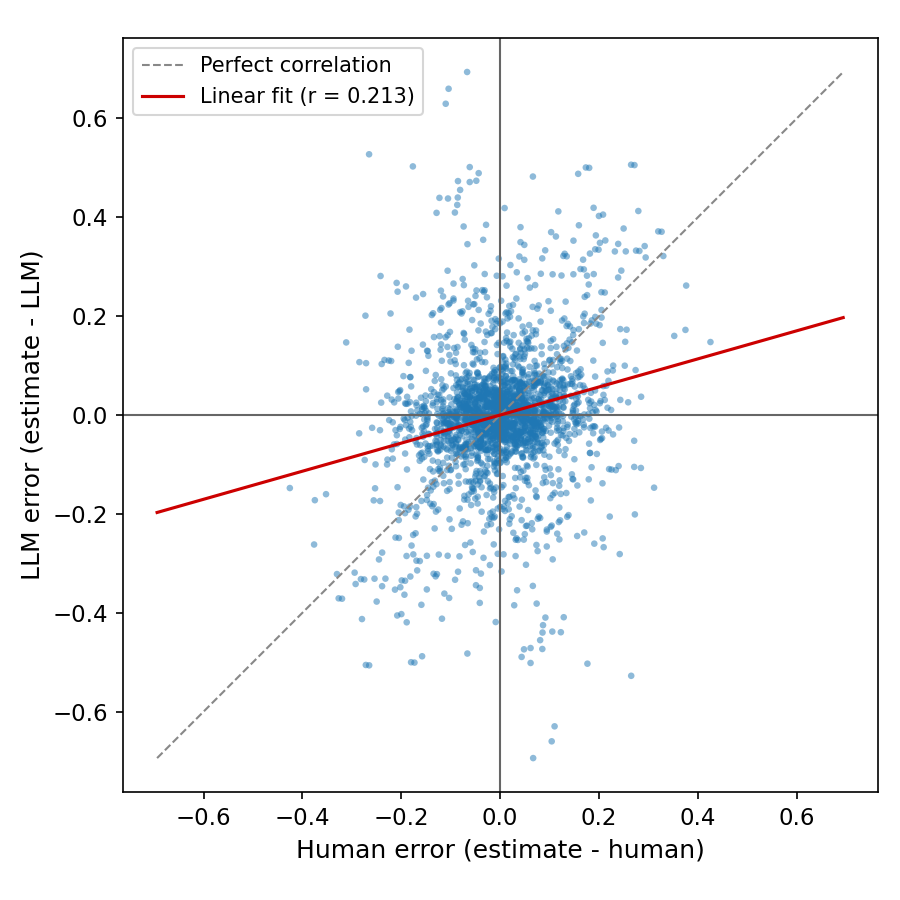}
        \caption{Self-Divergent Reasoning.}
    \end{subfigure}
    \caption{TESS Studies Prompting Experiments.}
    \label{fig:tess_prompting}
\end{figure*}

\begin{figure*}[t]
    \centering
    \setlength{\tabcolsep}{3pt}
    \renewcommand{\arraystretch}{1.0}

    \begin{tabular}{ccc}
        \subcaptionbox{Direct Error Correction\label{fig:fb_prompt_direct}}{%
            \includegraphics[width=0.32\textwidth]{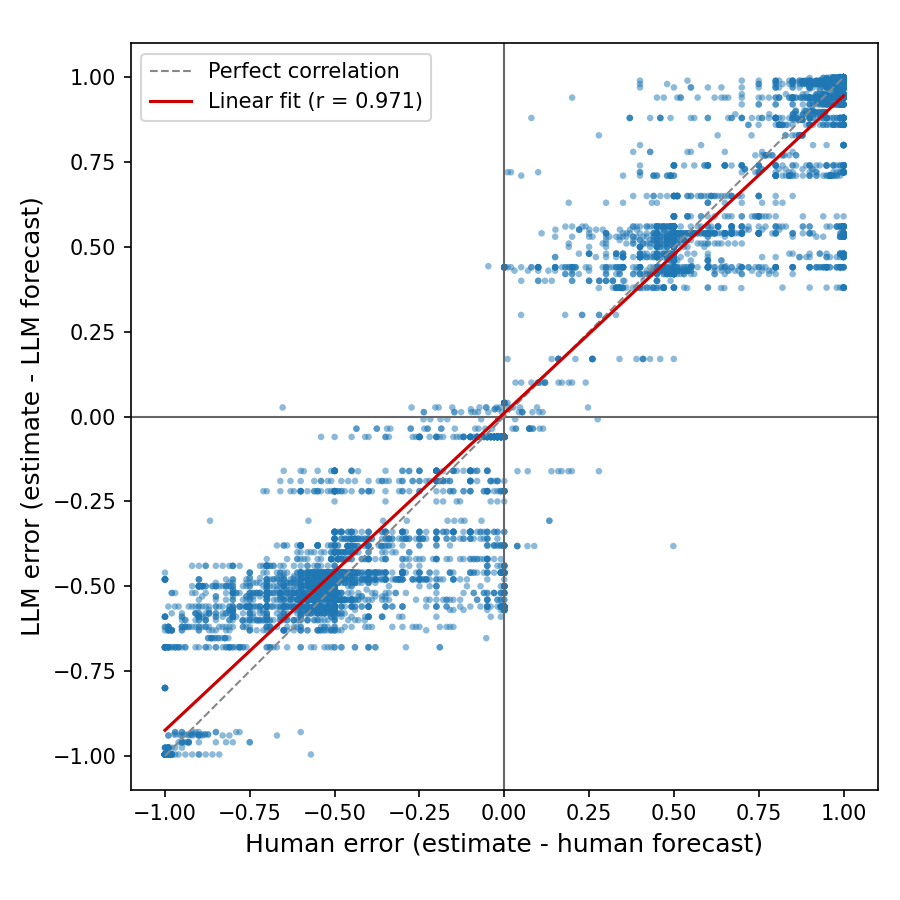}} &
        \subcaptionbox{Explicit Negative Correlation\label{fig:fb_prompt_neg}}{%
            \includegraphics[width=0.32\textwidth]{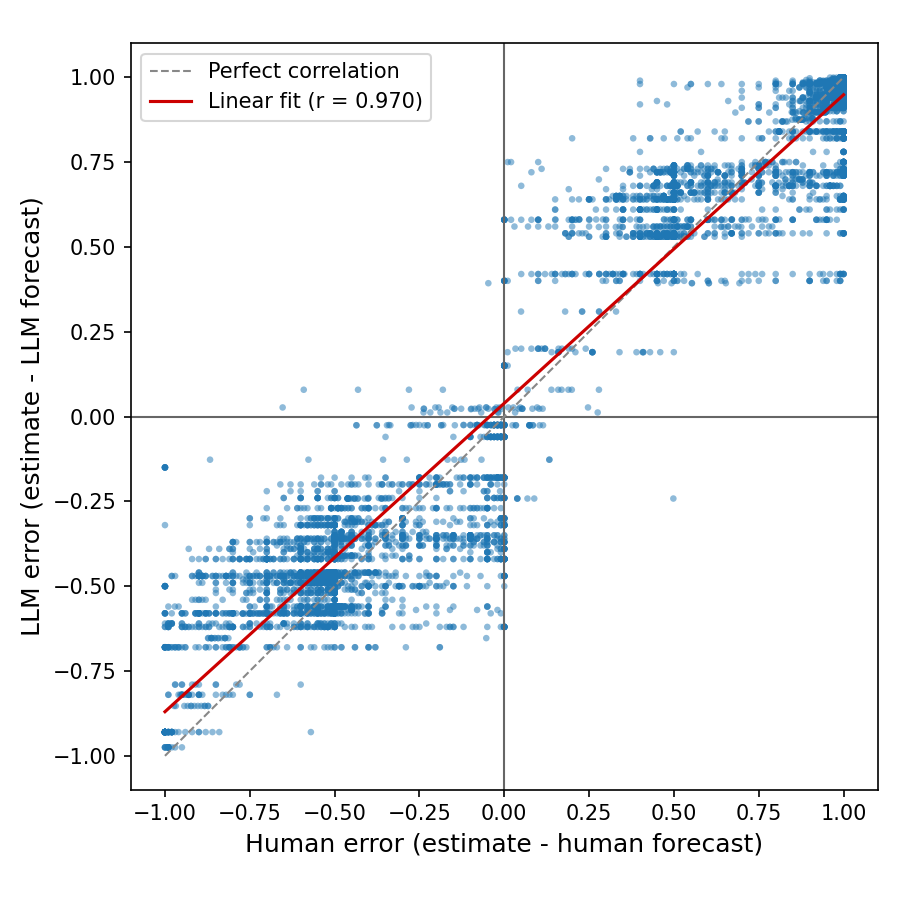}} &
        \subcaptionbox{Two-Stage Error Correction\label{fig:fb_prompt_two_stage}}{%
            \includegraphics[width=0.32\textwidth]{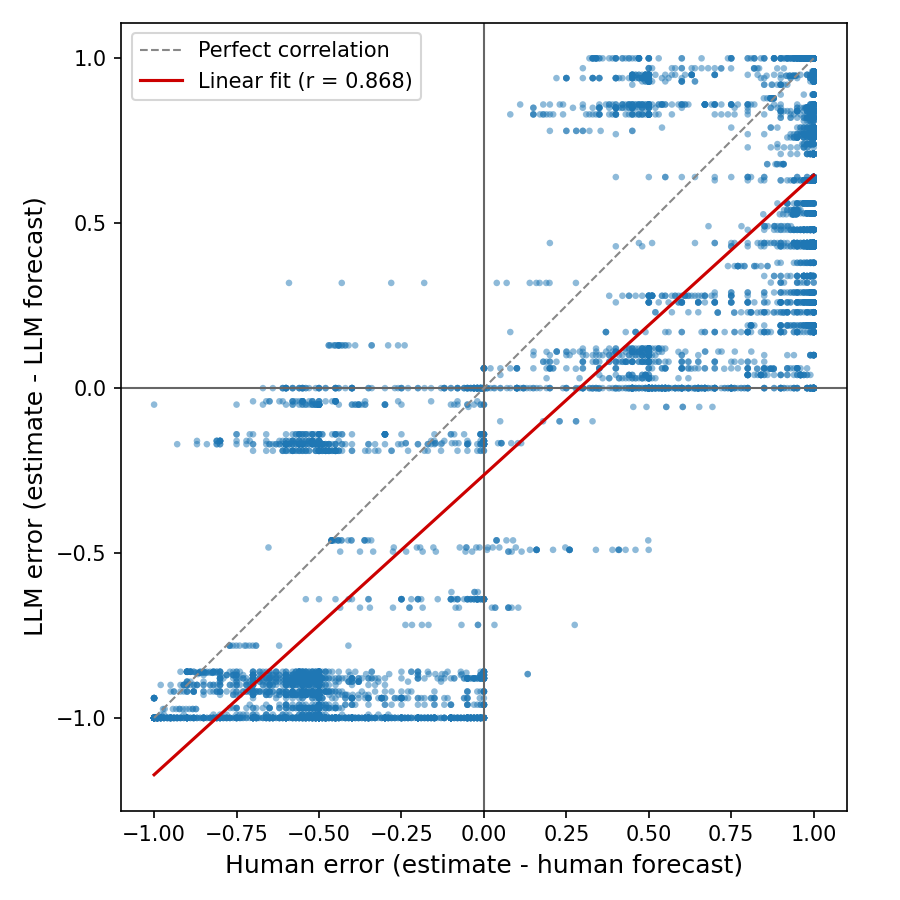}} \\

        \subcaptionbox{Contrarian Reasoning\label{fig:fb_prompt_contra}}{%
            \includegraphics[width=0.32\textwidth]{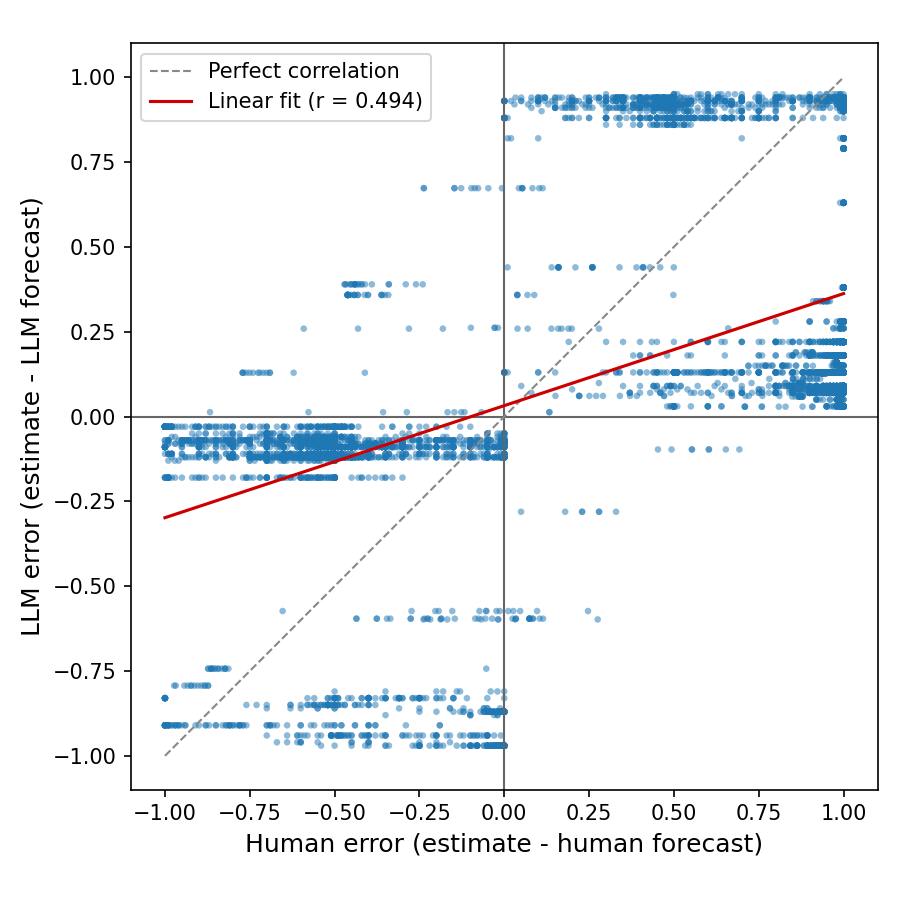}} &
        \subcaptionbox{Self-Divergent Reasoning\label{fig:fb_prompt_self}}{%
            \includegraphics[width=0.32\textwidth]{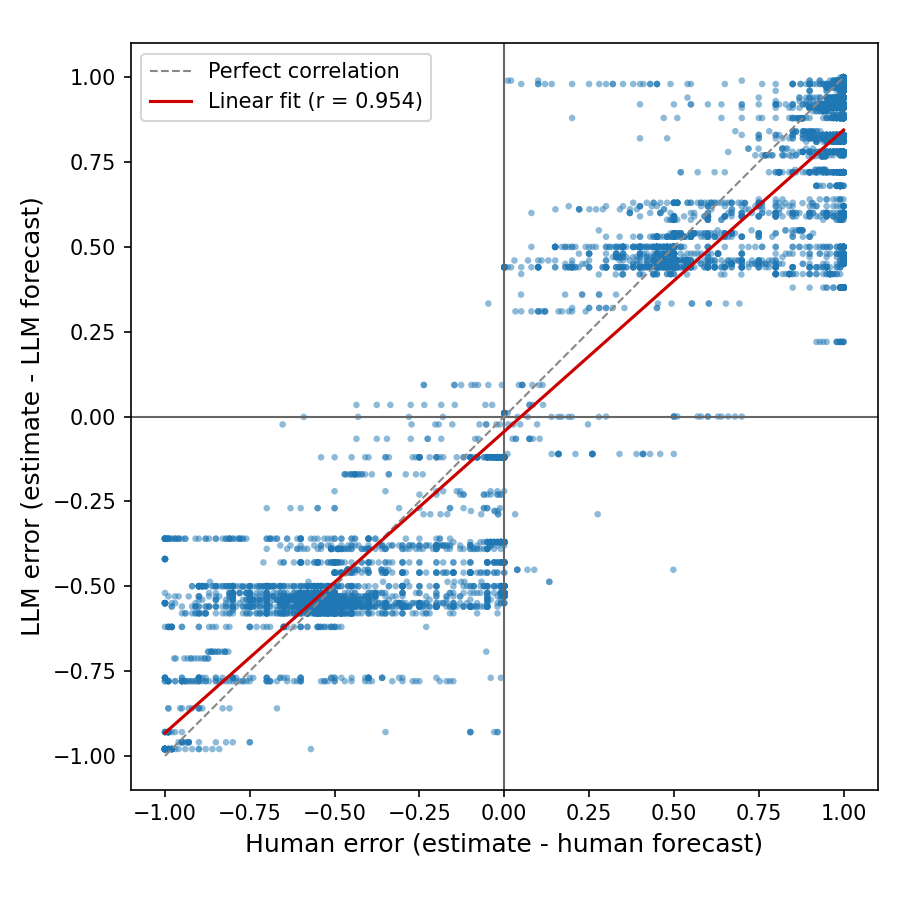}} &
        \subcaptionbox{Human Divergent Reasoning\label{fig:fb_prompt_human}}{%
            \includegraphics[width=0.32\textwidth]{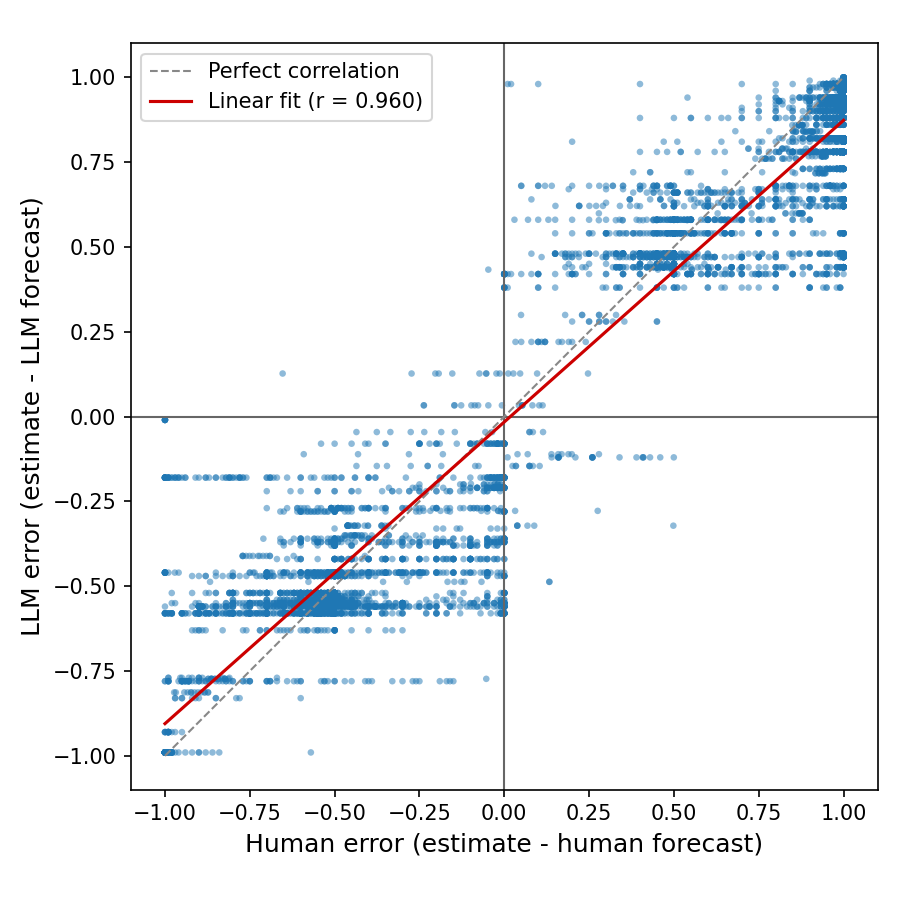}} \\

        \subcaptionbox{Restricted Context\label{fig:fb_prompt_restricted}}{%
            \includegraphics[width=0.32\textwidth]{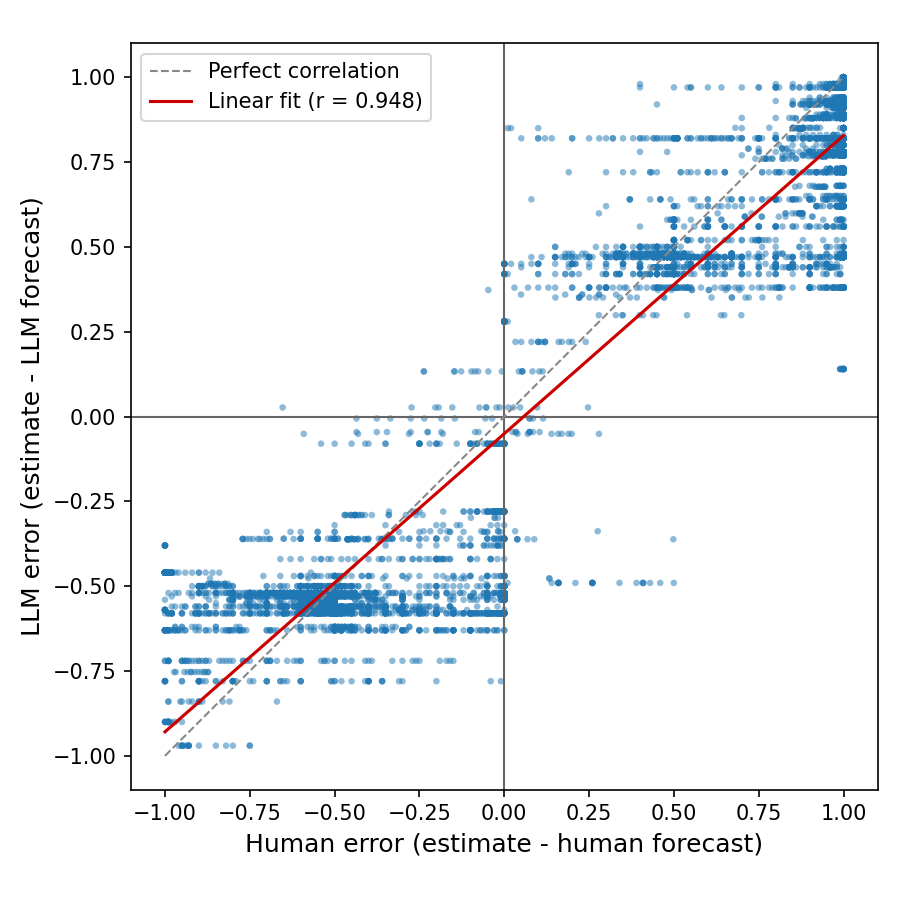}} &
        \subcaptionbox{Contrarian Reasoning (v2)\label{fig:fb_prompt_contra_v2}}{%
            \includegraphics[width=0.32\textwidth]{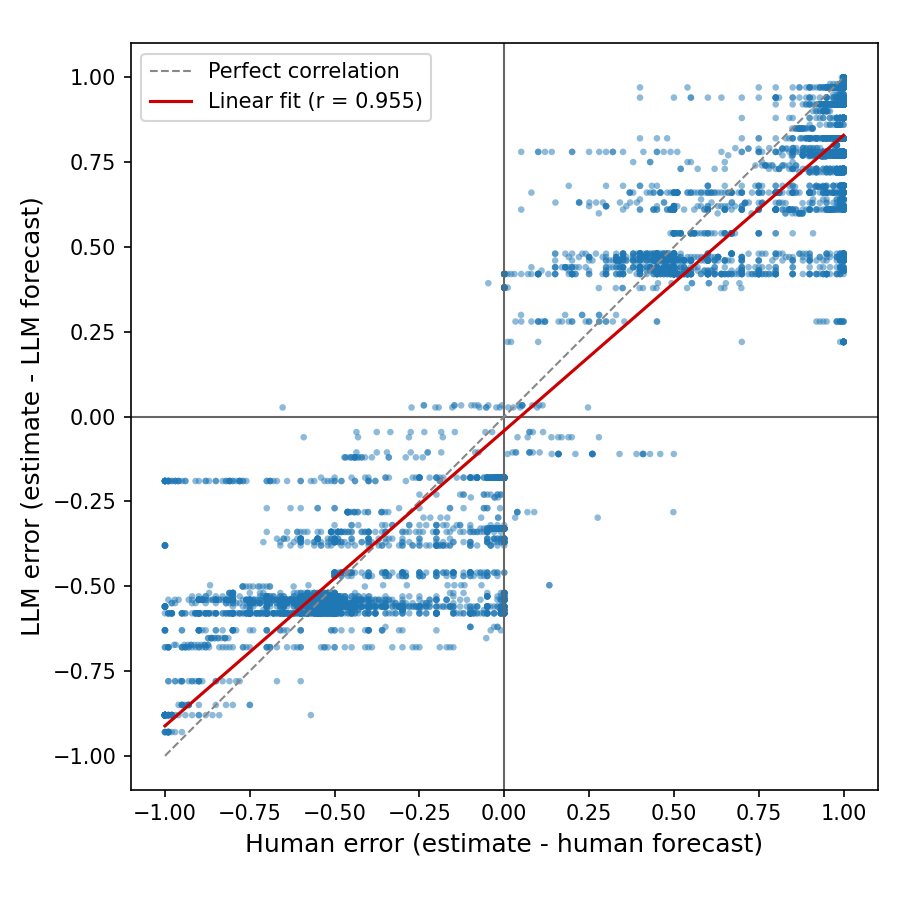}} &
        \subcaptionbox{Contrarian Reasoning (v3)\label{fig:fb_prompt_contra_v3}}{%
            \includegraphics[width=0.32\textwidth]{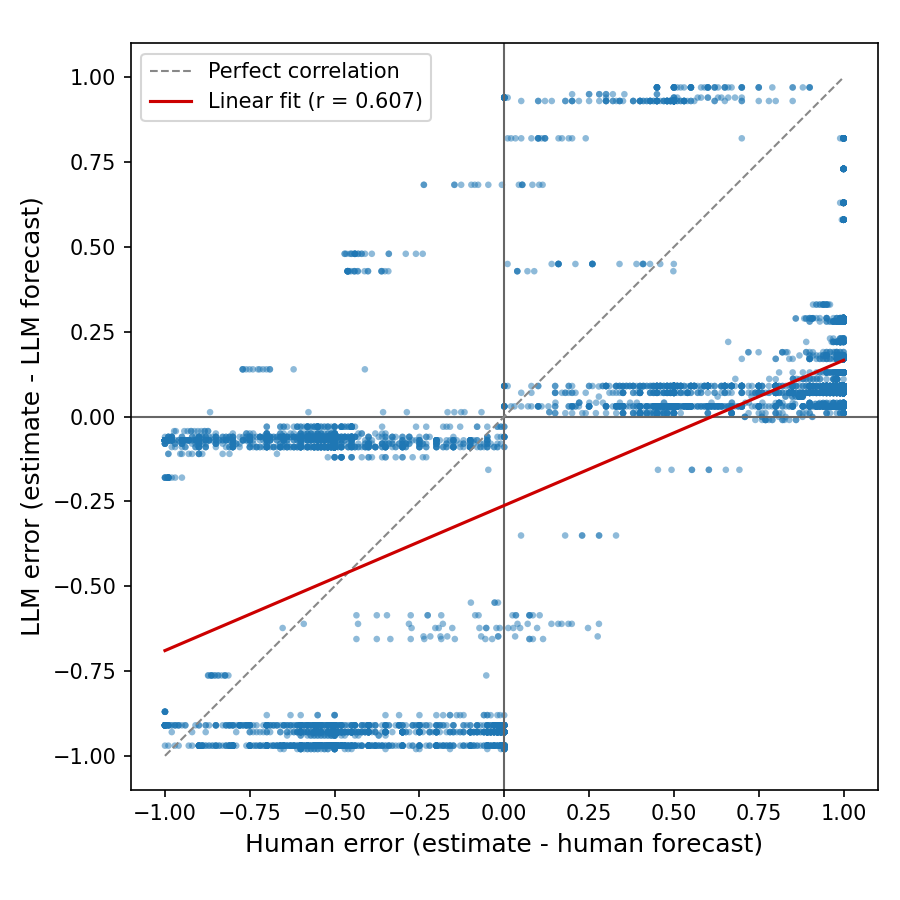}} \\
    \end{tabular}

    \caption{ForecastBench Prompting Experiments.}
    \label{fig:forecastbench_prompting}
\end{figure*}

\subsection{Distribution Shift Setting}
A natural question is whether the covariance structure between expert and model predictions, once estimated, can be reused for future prediction tasks. For instance, having expended the cost to collect forecasts on a certain domain, one might hope to learn the empirical relationship between human and LLM prediction errors and then apply it to new studies. However, this assumes that the expert-model relationship is stable across settings. This is especially relevant in settings, such as scientific discovery, where researchers study novel phenomena and the degree to which a specific area constitutes a coherent domain with stable prediction error correlations is unclear. Even within a single field, the relationship between expert and model signals may shift as the nature of the forecasting task changes.

We investigate this via leave-one-topic-out evaluation, evaluating the performance of models to generalize out-of-distribution across topics. For ForecastBench, we induce topic clusters using BERTopic~\citep{grootendorst2022bertopic}; for TESS, we use the native study categories (e.g., framing, immigration attitudes, gender norms). 
In each fold, we fit a linear combination $\hat{\theta} = \beta_0 + \beta_1 \phi_H + \beta_2 \phi_{AI}$ on in-distribution data and evaluate on the held-out topic. We report the MSE on a held-out set from the target topic/group. 
We also compare $\frac{Cov(\phi_{H}, \phi_{AI})}{Var(\phi_{AI})}$ across settings, which is an observable metric that characterizes the human-AI relationship without using ground truth $\theta$.

The results reveal substantial heterogeneity (Table \ref{tab:topic_shift_mse}). 
Some topics show little or no OOD degradation: on ForecastBench, ``price/close'' questions show $\Delta = -0.165$, meaning the out-of-distribution estimator actually performs better (perhaps due to more training data). Others fail dramatically: on TESS, ``sexual misconduct credibility'' exhibits $\Delta=+7.18\times 10^{-3}$, a large increase in MSE. 
On ForecastBench, transfer differences sometimes coincide with large shifts in the coupling statistic $\text{Cov}(\phi_H, \phi_{AI})/\V(\phi_{AI})$, suggesting that instability in the human–model relationship may contribute to distribution-shift behavior.
However, this pattern is not monotone and does not hold consistently on TESS.
Overall, these findings complicate naive deployment of models based on human-AI collaboration across different scientific topics. 
These results suggest that the human–AI relationship relevant to complementarity can vary with the task distribution in ways that are difficult to anticipate.

\subsection{Distribution Shift Experiment Details}
\label{app:dist_shift_details}

\begin{table*}[t]
\centering
\small
\caption{\textbf{Distribution shift on ForecastBench and TESS.} Mean squared error (MSE) on distribution shift (i.e., generalizing to a new topic) and in-distribution (i.e., training on the same topic and evaluating on a test split). MSEs on TESS are reported in units of $10^{-3}$. Values are mean $\pm$ std across random splits. $\Delta$ is OOD MSE - Target ID MSE (positive indicates degradation under shift). The rightmost column reports a metric that characterizes the difference in human-AI signal relationship $\Cov(\phi_H,\phi_{AI})/\V(\phi_{AI})$ on OOD and ID data.
}
\label{tab:topic_shift_mse}
\vspace{1mm}
\begin{tabular}{ll cccccc}
\toprule
\textbf{Dataset} &
\textbf{Topic (Keywords)/Group} &
\textbf{OOD MSE} &
\textbf{Target ID MSE} &
$\Delta$ MSE &
$\Delta \dfrac{\Cov(\phi_H,\phi_{AI})}{\V(\phi_{AI})}$ \\
\midrule
ForecastBench & (data, series) & 0.152 $\pm$ 0.041 & 0.182 $\pm$ 0.066 & -0.031 & -0.021 \\
& (price, close) & 0.178 $\pm$ 0.012 & 0.343 $\pm$ 0.151 & -0.165 & -0.948 \\
& (violence, events) & 0.121 $\pm$ 0.029 & 0.111 $\pm$ 0.041 & +0.010 & +0.126 \\
& (market, prediction) & 0.170 $\pm$ 0.045 & 0.137 $\pm$ 0.035 & +0.032 & -0.110 \\
\midrule
TESS Studies & agenda setting & 4.13 $\pm$ 0.55 & 4.12 $\pm$ 0.30 & +0.01 & -0.13 \\
& framing & 12.26 $\pm$ 0.55 & 12.01 $\pm$ 0.90 & +0.24 & +0.18 \\
& gender norms & 14.03 $\pm$ 2.30 & 14.40 $\pm$ 2.12 & -0.37 & +0.43 \\
& immigration attitudes & 2.94 $\pm$ 0.81 & 0.47 $\pm$ 0.18 & +2.47 & +0.34 \\
& international rights & 3.65 $\pm$ 0.64 & 2.66 $\pm$ 0.27 & +0.99 & -0.21 \\
& partisanship polarization & 0.83 $\pm$ 0.16 & 0.53 $\pm$ 0.13 & +0.30 & -0.36 \\
& race ethnicity & 2.62 $\pm$ 0.65 & 0.39 $\pm$ 0.08 & +2.23 & +0.40 \\
& representation & 0.06 $\pm$ 0.05 & 0.20 $\pm$ 0.33 & -0.14 & +0.77 \\
& sexual misconduct credibility & 20.46 $\pm$ 4.17 & 13.28 $\pm$ 3.91 & +7.18 & +0.28 \\
& transgender rights & 1.00 $\pm$ 0.12 & 1.04 $\pm$ 0.14 & -0.05 & +0.21 \\
\bottomrule
\end{tabular}
\end{table*}

\paragraph{Topic Modeling for ForecastBench}
We induce topic clusters using BERTopic~\cite{grootendorst2022bertopic} with \texttt{all-MiniLM-L12-v2} embeddings. We set \texttt{min\_topic\_size}$=5$ and allow BERTopic to determine the number of topics automatically. Instances assigned to the outlier topic ($-1$) are excluded. Each question is represented by concatenating the question text, background context, resolution criteria, and source introduction, joined by newlines.

\paragraph{Topic Categories for TESS}
For TESS, we use the dataset's native study categories: agenda setting, framing, gender norms, immigration attitudes, international rights, partisanship polarization, race ethnicity, representation, sexual misconduct credibility, and transgender rights.

\paragraph{Evaluation Protocol}
We perform leave-one-topic-out evaluation. For each topic $t$, let $\mathcal{B}_t$ denote the instances assigned to $t$ and $\mathcal{A}_t$ the remainder. We partition $\mathcal{A}_t$ into source training (80\%) and source test (20\%) sets, and partition $\mathcal{B}_t$ into target training (50\%) and target test (50\%) sets. The linear model $\hat{\theta} = \beta_0 + \beta_1 \phi_H + \beta_2 \phi_{AI}$ is fit via OLS on the source training set. When multiple human forecasts exist for a question, we average them to obtain $\phi_H$.
To reduce variance, we repeat each split 5 times with seeds $\text{seed} = t \times 1000 + i$ for $i \in \{0,\ldots,4\}$ and report means and standard deviations.

\paragraph{Metrics}
We report MSE on the source training set (in-sample), source test set (out-of-sample, same distribution), and target test set (out-of-distribution). We also compute target in-domain MSE by fitting a separate model on the target training set and evaluating on the target test set; the gap $\Delta = \text{Target Test MSE} - \text{Target In-Domain MSE}$ quantifies degradation due to shift. We compute the statistic $\text{Cov}(\phi_H, \phi_{AI})/\text{Var}(\phi_{AI})$ on the held-out and source domains, and report their difference to measure whether the expert--model relationship changes across topics.


\end{document}